\documentclass[10pt,journal,compsoc]{IEEEtran}

\ifCLASSOPTIONcompsoc
  % IEEE Computer Society needs nocompress option
  % requires cite.sty v4.0 or later (November 2003)
  \usepackage[nocompress]{cite}
\else
  % normal IEEE
  \usepackage{cite}
\fi

% correct bad hyphenation here
\hyphenation{op-tical net-works semi-conduc-tor}
\usepackage{bm}
\usepackage{graphicx}  %Required
 \usepackage{mathrsfs}

\usepackage{cite}  %Required

\usepackage{amssymb}
\usepackage{algorithm}
\usepackage{amsmath}
\usepackage{amsthm}
\usepackage{tabularx,multirow,booktabs}
\usepackage{balance}
\usepackage[noend]{algpseudocode}
\usepackage{epsfig,subfigure}
\usepackage{color}

\allowdisplaybreaks

\newcommand{\Tr}{^{\rm T}}

% careful: ``null'' is already a latex command

\renewcommand{\vec}{{\rm vec}}

% boldface characters
\renewcommand{\a}{{\bf a}}
\renewcommand{\b}{{\bf b}}

  % for derivatives

\newcommand{\g}{{\bf g}}

%\newcommand{\k}{{\bf k}}
% in Latex2e this must be renewcommand

\renewcommand{\u}{{\bf u}}
\renewcommand{\v}{{\bf v}}
\newcommand{\w}{{\bf w}}
\newcommand{\x}{{\bf x}}
\newcommand{\y}{{\bf y}}

\newcommand{\A}{{\bf A}}

\newcommand{\Bcal}{\mathcal{B}}

\newcommand{\Ccal}{\mathcal{C}}

\newcommand{\Ecal}{\mathcal{E}}

\newcommand{\Gcal}{{\mathcal{G}}}
\renewcommand{\H}{{\bf H}}
\newcommand{\I}{{\bf I}}

%\newcommand{\Lcal}{{\mathcal{L}}}

  % for normal density
%\newcommand{\N}{{\bf N}}

\newcommand{\Q}{{\bf Q}}

\newcommand{\Vcal}{\mathcal{V}}

\newcommand{\X}{{\bf X}}

\newcommand{\Z}{{\bf Z}}

% this is for latex 2.09
% unfortunately, the result is slanted - use Latex2e instead
%\newcommand{\bfLambda}{\mbox{\boldmath$\Lambda$}}
% this is for Latex2e

% Yuan Qi's boldsymbol

\newcommand{\bPhi}{\boldsymbol{\Phi}}

\newcommand{\bDelta}{\boldsymbol{\Delta}}
\newcommand{\blambda}{\boldsymbol{\lambda}}
\newcommand{\bLambda}{\mathbf{\Lambda}}

\newcommand{\bOmega}{\boldsymbol{\Omega}}
\newcommand{\bomega}{\boldsymbol{\omega}}

\newcommand{\bmu}{\boldsymbol{\mu}}
\newcommand{\1}{{\bf 1}}

\newcommand{\argmin}{\operatornamewithlimits{argmin}}

\newcommand{\lrincir}[1]{\left( #1 \right)}
\newcommand{\abs}[1]{\lvert#1\rvert}

\newcommand{\lrnorm}[1]{\left\lVert#1\right\rVert}

\newcommand{\RR}{\mathbb{R}}

\newcommand{\refabove}[2]{\displaystyle_{#1}^{(#2)}}

 \newtheorem{theorem}{\bf{Theorem}}
 \newtheorem{lemma}{\bf{Lemma}}

 \newtheorem{remark}{\bf{Remark}}

\begin{document}

\title{Simultaneous Clustering and Optimization for Evolving Datasets}

\author{Yawei Zhao,
        En Zhu$^\ast$,
        Xinwang Liu$^\ast$,
        Chang Tang,
        Deke Guo,
        Jianping Yin$^\ast$
        \thanks{${\ast}$ Corresponding authors.}
\IEEEcompsocitemizethanks{
\IEEEcompsocthanksitem Yawei Zhao, En Zhu and Xinwang Liu are with the College of Computer, National University of Defense Technology, Changsha, Hunan, 410073, China. E-mail: zhaoyawei@nudt.edu.cn; enzhu@nudt.edu.cn; xinwangliu@nudt.edu.cn.
\IEEEcompsocthanksitem Chang Tang is with the School of Computer Science, China University of Geosciences, Wuhan, 430074, China. E-mail: tangchang@cug.edu.cn.
\IEEEcompsocthanksitem Deke Guo is with the Science, Technology and Information Systems Engineering Laboratory, National University of Defense Technology, Changsha, Hunan, 410073, China. E-mail: guodeke@gmail.com.

\IEEEcompsocthanksitem  Jianping Yin is with the Dongguan University of Technology, Dongguan, Guangdong, 523000, China. E-mail: jpyin@dgut.edu.cn.
} 
}

% make the title area

\IEEEcompsoctitleabstractindextext{%

\begin{abstract}

Simultaneous clustering and optimization (SCO) has recently drawn much attention due to its wide range of practical applications. Many methods have been previously proposed to solve this problem and obtain the optimal model. However, when a dataset evolves over time, those existing methods have to update the model frequently to guarantee accuracy; such updating is computationally infeasible. In this paper, we propose a new formulation of SCO to handle evolving datasets.  Specifically, we propose a new variant of the alternating direction method of multipliers (ADMM) to solve this problem efficiently. The guarantee of model accuracy is analyzed theoretically for two specific tasks: ridge regression and convex clustering. Extensive empirical studies confirm the effectiveness of our method. 

\end{abstract}

% Note that keywords are not normally used for peerreview papers.
\begin{IEEEkeywords}
Simultaneous clustering and optimization, evolving datasets, sum-of-norms regularizer, ADMM.
\end{IEEEkeywords}}
\maketitle

\IEEEpeerreviewmaketitle

\section{Introduction}
\label{introduction}

Simultaneous clustering and optimization (SCO) has recently drawn much attention in the machine learning and data mining community \cite{Hallac:2015fy,Zhao:2018tk}. Let us consider an example to explain this. Suppose that we want to predict the price of houses in New York City. The prices of houses located in the same region should be predicted by using similar prediction models. The prices of houses located in different regions should be predicted by using different prediction models. Traditional methods usually involve two separate steps. Such methods first learn prediction models for every house and then use clustering methods such as k-means clustering \cite{Lloyd1982Least,bottou1994convergence,8587193,8519323} to determine the similarity among the obtained prediction models. However, the purpose of the SCO task is to perform prediction and identify the similarity among prediction models simultaneously; this approach usually outperforms traditional solutions. 

Previous methods such as network lasso \cite{Hallac:2015fy,Ghosh2016An,Zhao:2018tk} formulate the SCO problem as that of a convex objective function with a sum-of-norms regularizer, which usually leads to a high computational cost for to following reasons:
\begin{itemize}
\item The number of optimization variables increases linearly with the number of instances and features.
\item The optimization variables are highly nonseparable due to the sum-of-norms regularization.
\item The objective function is extremely nonsmooth. 
\end{itemize} Due to those challenges, many optimization methods such as the alternating minimization algorithm (AMA) \cite{Chi:2013ey} and the alternating direction method of multipliers (ADMM) \cite{Hallac:2015fy,Zhao:2018tk}  have been developed to reduce the computational cost. Although these existing methods 
obtain great efficiency improvement for a static dataset, they are unable to handle an evolving dataset directly due to exceptionally high computational complexity. The reason is that when a dataset evolves over time, the optimal model of SCO has to be frequently updated over time.  Otherwise, the model accuracy cannot be guaranteed. However, it is impractical to update the optimal model frequently due to the high computational cost. Therefore, finding an effective method of performing SCO on an evolving dataset with a guarantee of model accuracy is a meaningful problem.  

Let us consider an example to explain our motivation. Determining a cluster path \cite{Chi:2013ey,Tan:2015vr,Radchenko:2017gg} is one of SCO tasks. This is done by performing convex clustering over multiple rounds.   As illustrated in Figure \ref{figure_illustrative_motivation}, the cluster paths for images $1$ and $2$ are very similar due to few changes of pixels. However, the cluster path of image $3$, namely, Figure \ref{figure_plane_clusterpath_05}, is significantly different from them. It is impractical to update the optimal cluster path for every image timely because it takes at least $18$ seconds to obtain a cluster path. Additionally, compared with Figure \ref{figure_plane_clusterpath_00} and Figure \ref{figure_plane_clusterpath_01}, we observe that a cluster path for image $1$, i.e., Figure \ref{figure_plane_clusterpath_00}, can be used for a similar image $2$ with some slight loss of accuracy. Compared with Figure \ref{figure_plane_clusterpath_00} and Figure \ref{figure_plane_clusterpath_05}, we observe that a cluster path should be updated if the image is changed significantly. In a general scenario, we are thus motivated by the following two nontrivial and challenging problems:
\begin{itemize}
\item How to obtain a model with a guaranteed accuracy that can be used for a dataset undergoing slight changes, and
\item When to update the model if the dataset has evolved to become sufficiently different.
\end{itemize}

\begin{figure*}[!t]
\setlength{\abovecaptionskip}{0pt}
\setlength{\belowcaptionskip}{0pt}
\centering 
\subfigure[gray image $1$]{\includegraphics[width=0.64\columnwidth]{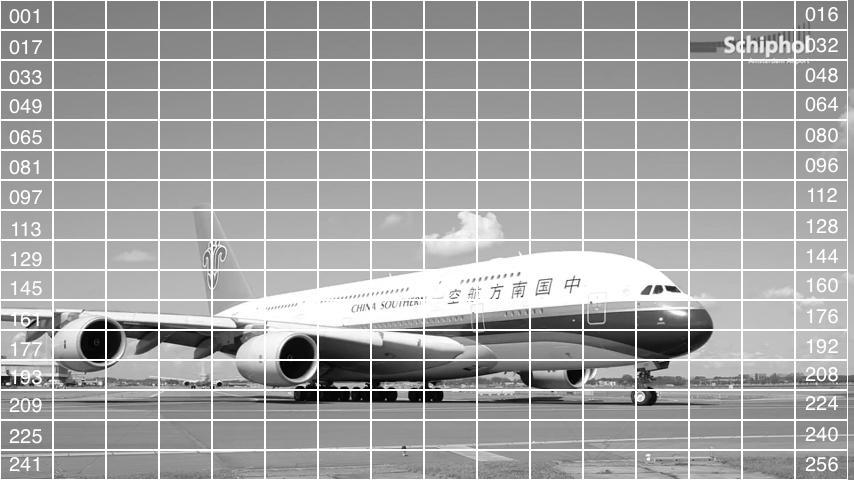}\label{gray_plane_cross00_lines}}
\subfigure[gray image $2$]{\includegraphics[width=0.64\columnwidth]{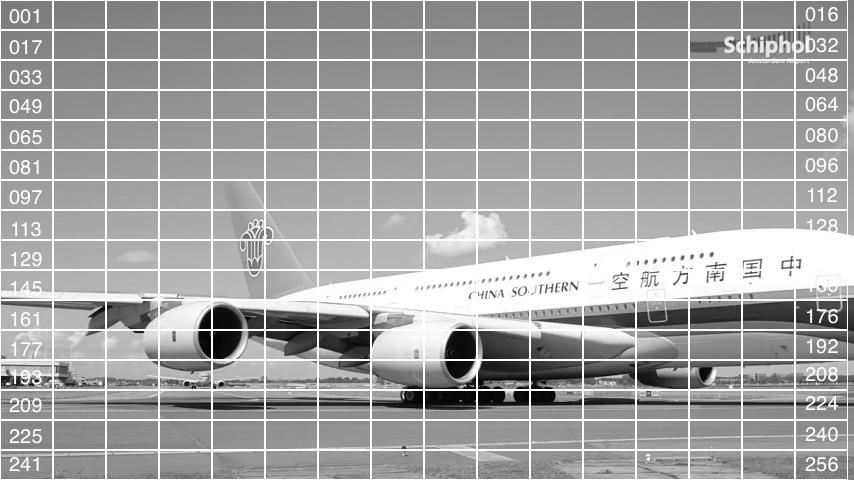}\label{gray_plane_cross01_lines}}
\subfigure[gray image $3$]{\includegraphics[width=0.64\columnwidth]{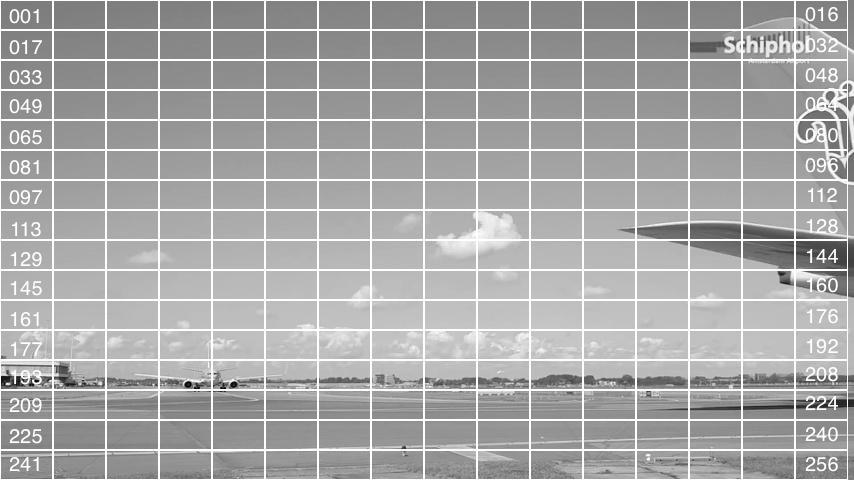}\label{gray_plane_cross05_lines}}
\subfigure[cluster path $1$, $19.7$ seconds]{\includegraphics[width=0.64\columnwidth]{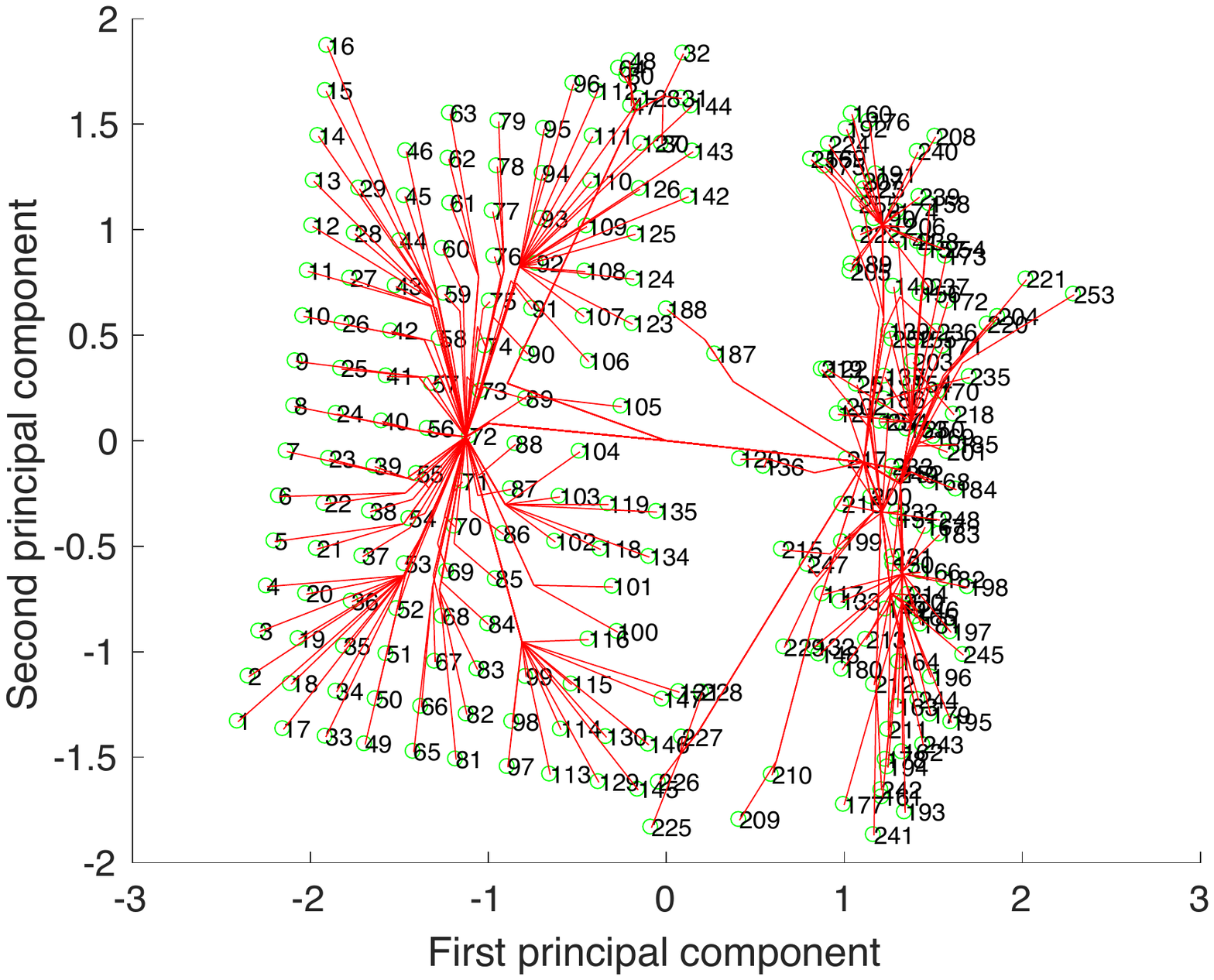}\label{figure_plane_clusterpath_00}}
\subfigure[cluster path $2$, $19.0$ seconds]{\includegraphics[width=0.64\columnwidth]{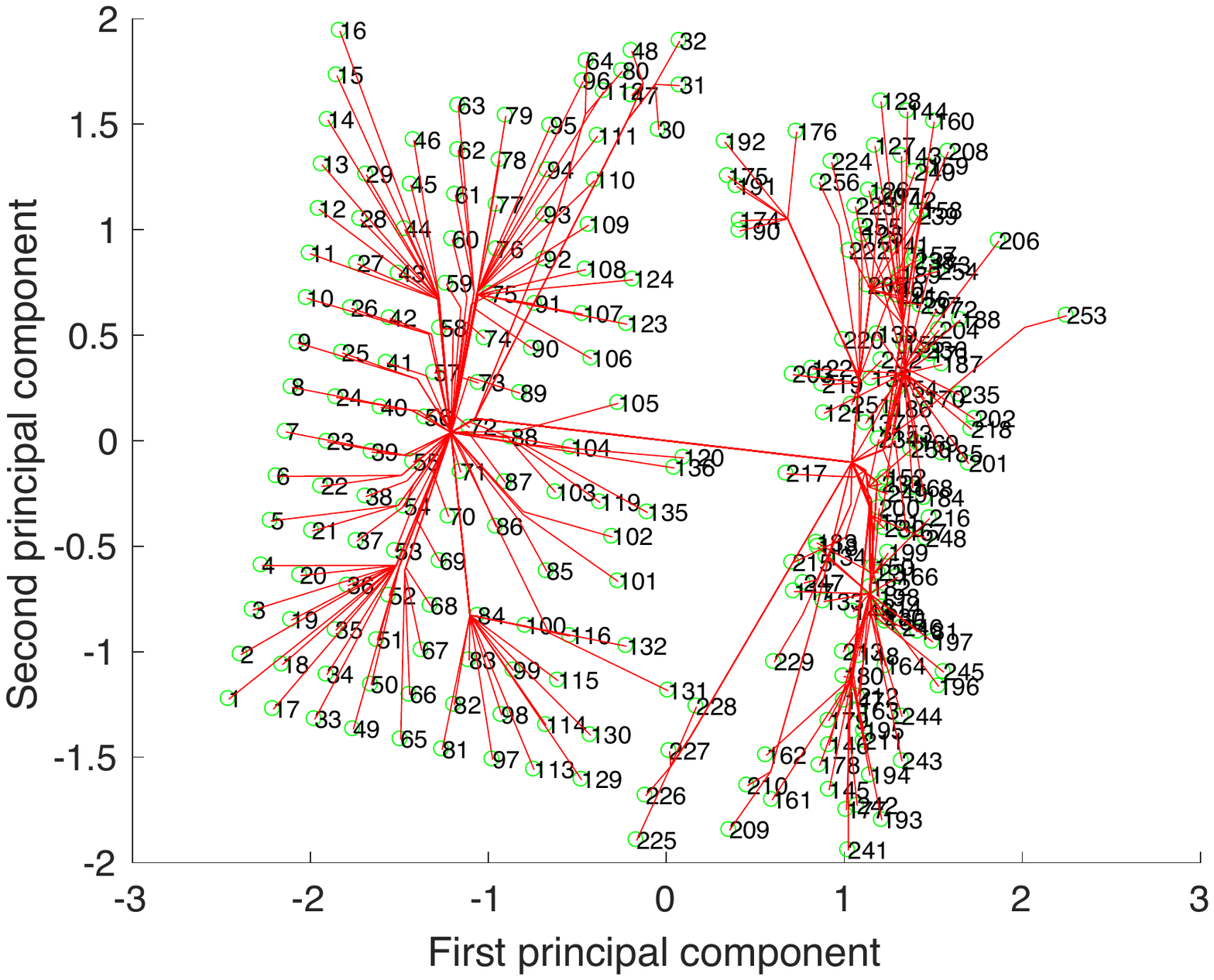}\label{figure_plane_clusterpath_01}}
\subfigure[cluster path $3$, $18.8$ seconds]{\includegraphics[width=0.64\columnwidth]{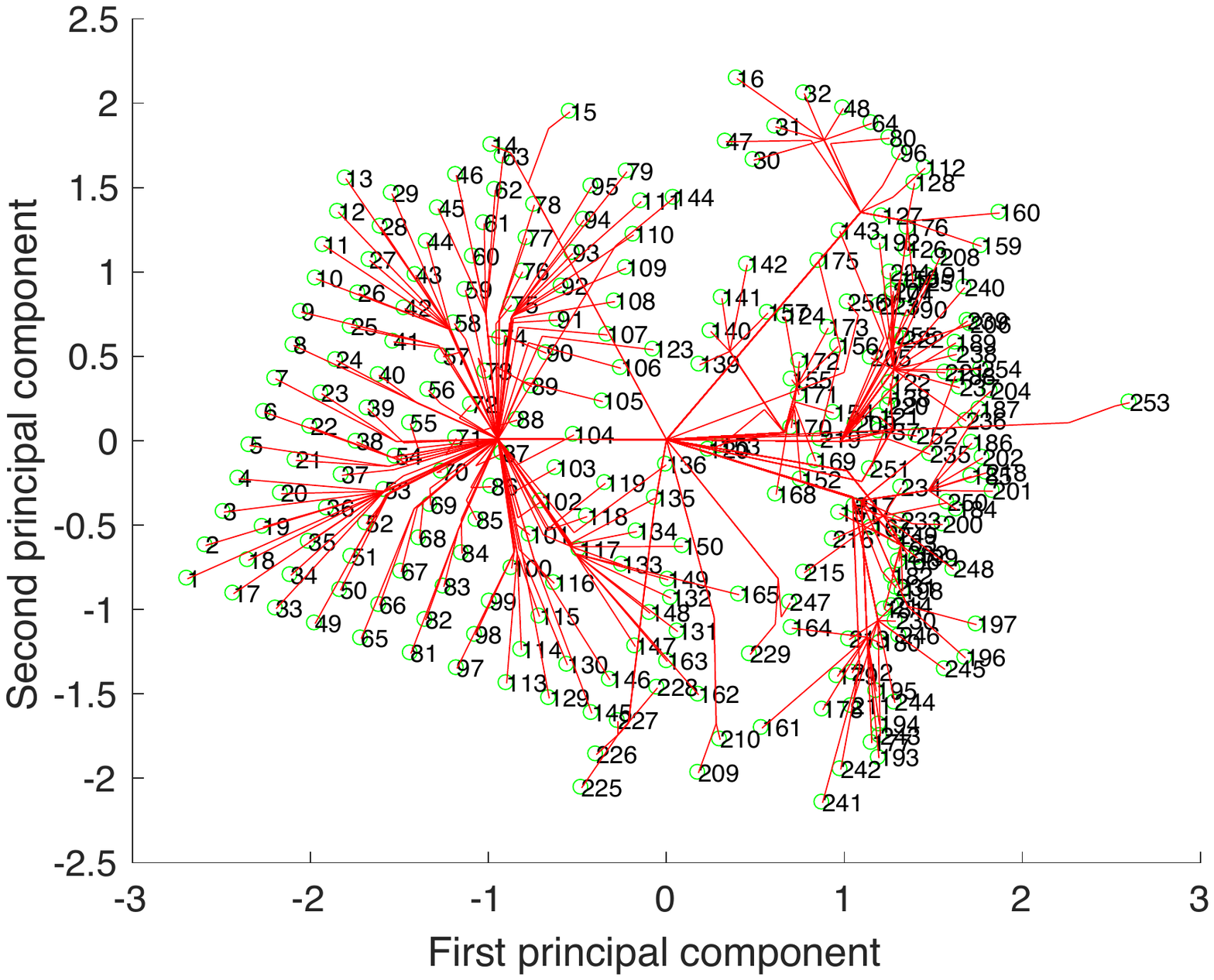}\label{figure_plane_clusterpath_05}}
\caption{There is a plane at an airport, and it is crossing the field of view. Figure \ref{gray_plane_cross01_lines} shows the next frame in the video after the frame shown in Figure \ref{gray_plane_cross00_lines}. Figure \ref{gray_plane_cross05_lines} shows that the plane is nearly outside the field of view after $2$ seconds.  We partition every image into $256$ blocks and identify each block with a unique number. The maximum gray value corresponding to pixels in a block is used to represent the block. The cluster paths of Figures \ref{gray_plane_cross00_lines}, \ref{gray_plane_cross01_lines}, and \ref{gray_plane_cross05_lines} are illustrated in Figures \ref{figure_plane_clusterpath_00}, \ref{figure_plane_clusterpath_01}, and \ref{figure_plane_clusterpath_05}, respectively. Note that the cluster path of an image is obtained by running the previous method for more than $18$ seconds, which is unacceptably long in video analysis.  This illustrative example shows that it is vitally important to find an effective method for obtaining the cluster path with a guaranteed accuracy for a video dataset.}
\label{figure_illustrative_motivation}
\end{figure*}

In this paper, we aim to answer the above two challenging questions. We reformulate the problem of SCO as a convex problem with cone constraints in the dual space. Compared with the formulation in the previous studies, the new formulation does not contain the sum-of-norms regularizer and is thus much easier to solve. Then, a new regularizer is proposed to allow for the optimal model to be insensitive to slightly evolving data. A new metric is proposed for deciding when to update the model for a significant change of a dataset. Additionally, we propose a new variant of the alternating direction method of multipliers (ADMM) to solve the proposed problem efficiently. Furthermore, the guarantee of the model accuracy is analyzed theoretically for convex clustering and ridge regression tasks. Extensive empirical studies show the advantages of the proposed method. In brief, our contributions are summarized as follows:
\begin{itemize}
\item A new formulation of the SCO problem is proposed to handle an evolving dataset. 
\item An efficient variant of ADMM is proposed to solve the proposed problem efficiently.
\item The accuracy of the model is analyzed theoretically for convex clustering and ridge regression tasks.
\item Extensive empirical studies show the superiority of the proposed method.
\end{itemize}

The paper is organized as follows. Section \ref{sect_related_work} reviews the related studies. Section \ref{sect_preliminaries} introduces the preliminaries. Section \ref{sect_scoed} describes a new formulation of SCO for an evolving dataset. Section \ref{sect_admm} discusses a novel ADMM method for solving the problem. Section \ref{sect_theoretical_analysis} presents a theoretical analysis. Section \ref{sect_experiment} details the experiments. Section \ref{sect_conclusion} concludes the paper.

\section{Related studies}
\label{sect_related_work}
In this section, we briefly review the related literature.
\subsection{Network lasso}
Network lasso was the first method proposed to solve the SCO problem \cite{Hallac:2015fy}. It was subsequently implemented and published as a general optimization tool for graph analysis in \cite{Leskovec2016SNAP}. Recently, network lasso has drawn much attention in various application scenarios \cite{Ghosh2016An,Jung:2017ug,Zhao:2018tk}. It was used in \cite{Ghosh2016An} to predict the location of a shared bike. The study \cite{Jung:2017ug} examined a sufficient condition for the network topology to yield an accurate solution. Network lasso was extended in \cite{Zhao:2018tk} to handle noisy and missing data. The cited existing studies extended network lasso for static datasets; however, such approaches are unsuitable for handling evolving datasets due to a high computational cost.

\subsection{Convex clustering}
The special case of SCO, i.e., convex clustering \cite{Lindsten2011Just}, has been extensively studied for several years. Compared to the traditional clustering methods, it has a convex formulation, leading to a robust clustering result. For example, the clustering result of k-means is sensitive to the seeds, and picking good seeds is challenging \cite{Lloyd1982Least,ZHAO2018184}. However, due to a convex objective function, the result of convex clustering can be determined. It is not impacted by any heuristic rules used in traditional clustering methods. A formulation of convex clustering was proposed in \cite{Lindsten2011Just} by relaxing the formulation of k-means clustering. Subsequently, \cite{Zhu:2014wo} and \cite{icml2017} provided several sufficient conditions for recovering the clustering membership theoretically.  Other studies, e.g., \cite{Chi:2013ey,Yuan2018An}, focus on improving the efficiency of convex clustering. Although those previous studies attained great improvement of convex clustering for static datasets, they are unsuitable for handling evolving datasets due to a high computational cost. The method proposed in the paper reduces such computational cost and makes a good tradeoff between efficiency and accuracy.

\section{Preliminaries}
\label{sect_preliminaries}
In this section, several important notations are introduced. Then, the problem of simultaneous clustering and optimization is presented.

\begin{figure}[!t]
\setlength{\abovecaptionskip}{0pt}
\setlength{\belowcaptionskip}{0pt}
\centering 
\subfigure[data graph]{\includegraphics[width=0.45\columnwidth]{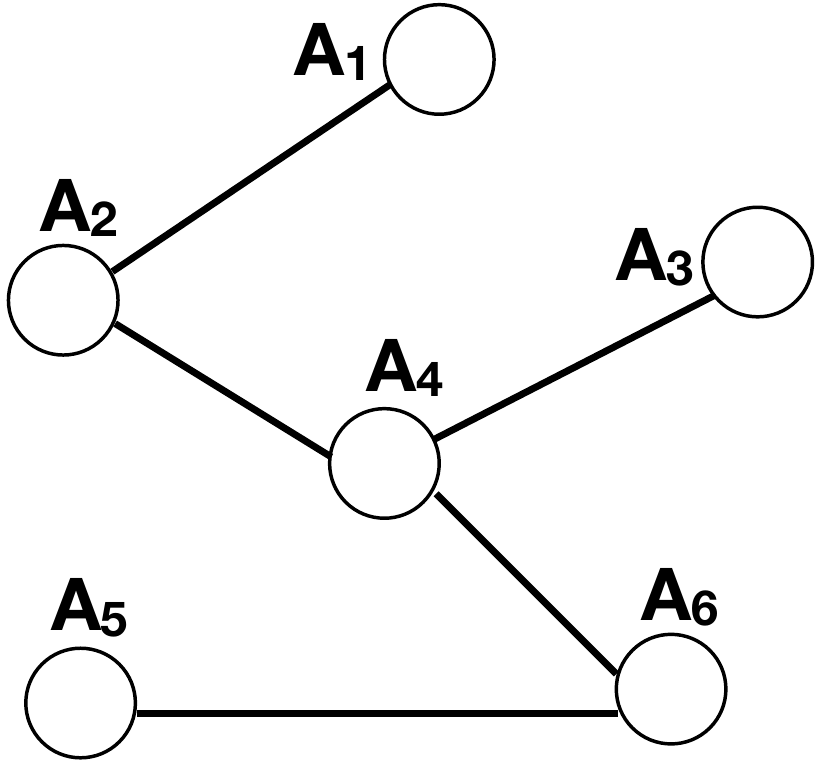}\label{figure_basic_sco_data_graph}}
\hspace{5pt}
\subfigure[variable graph $\Gcal$]{\includegraphics[width=0.45\columnwidth]{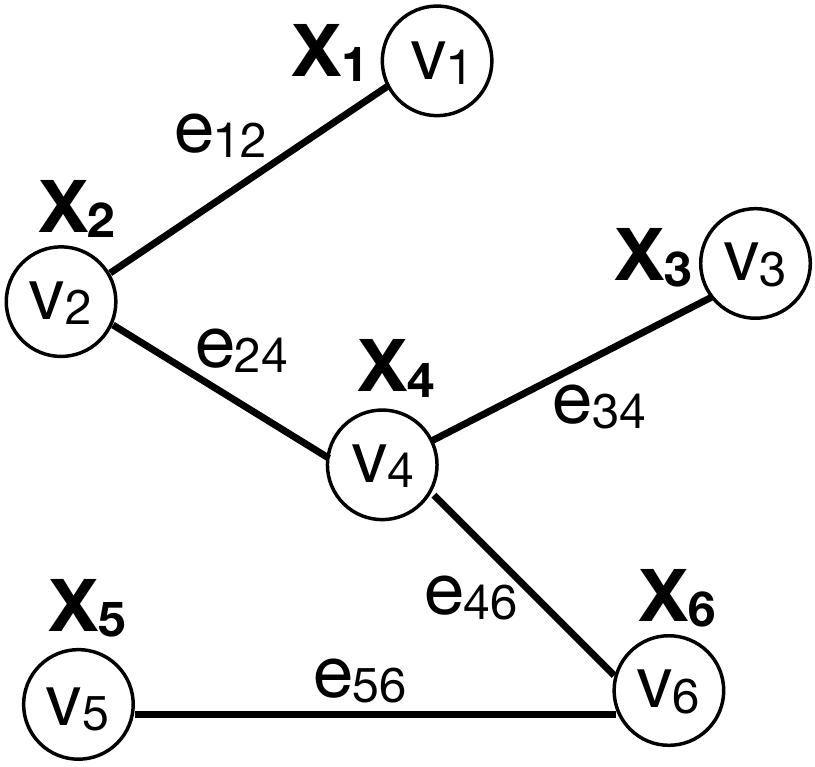}\label{figure_basic_sco_variable_graph}}
\caption{Illustrative example of the graph abstraction. For any $i$ such that $1\le i\le 6$, $\A_i$ represents an instance of the dataset, $\X_i$ represents the corresponding optimization variable, $v_i$ represents a vertex of graph $\Gcal$, and $e_{ij}$ represents the edge connecting $v_i$ and $v_j$.}
\label{figure_illustrative_graph_abstraction_sco}
\end{figure}

\subsection{SCO: simultaneous clustering and optimization}
SCO is an optimization framework with a formulation that is usually presented as a convex objective function with a sum-of-norms regularizer. Before discussing its formulation, we introduce the data model -- a graph abstraction of a dataset.

The basic data model consists of two graphs -- a data graph $\Gcal^{\text{data}}$ and a variable graph $\Gcal^{\text{variable}}$ -- that are generated as follows.
\begin{itemize}
\item \textbf{Data graph $\Gcal^{\text{data}}$.} Every instance in a dataset is represented by a vertex. Generally, the $K$-nearest neighbors ($K$-NN) method is performed on the dataset. If an instance is one of the $K$ nearest neighbors of another instance, then an edge is generated to connect them. Thus, we obtain a graph $\Gcal^{\text{data}}$ that measures the similarity among instances.  
\item \textbf{Variable graph $\Gcal^{\text{variable}}$. } Variable graph $\Gcal^{\text{variable}}$ is generated based on graph $\Gcal^{\text{data}}$. An optimization variable, e.g., $\X_i$, is represented by a vertex $v_i$. If two instances, e.g., $\A_i$ and $\A_j$ in graph $\Gcal^{\text{data}}$, are connected by an edge, then vertices $v_i$ and $v_j$ are connected by edge $e_{ij}$.
\end{itemize}

\begin{figure}[!t]
\setlength{\abovecaptionskip}{0pt}
\setlength{\belowcaptionskip}{0pt}
\centering 
\subfigure[$\alpha = 6$]{\includegraphics[width=0.45\columnwidth]{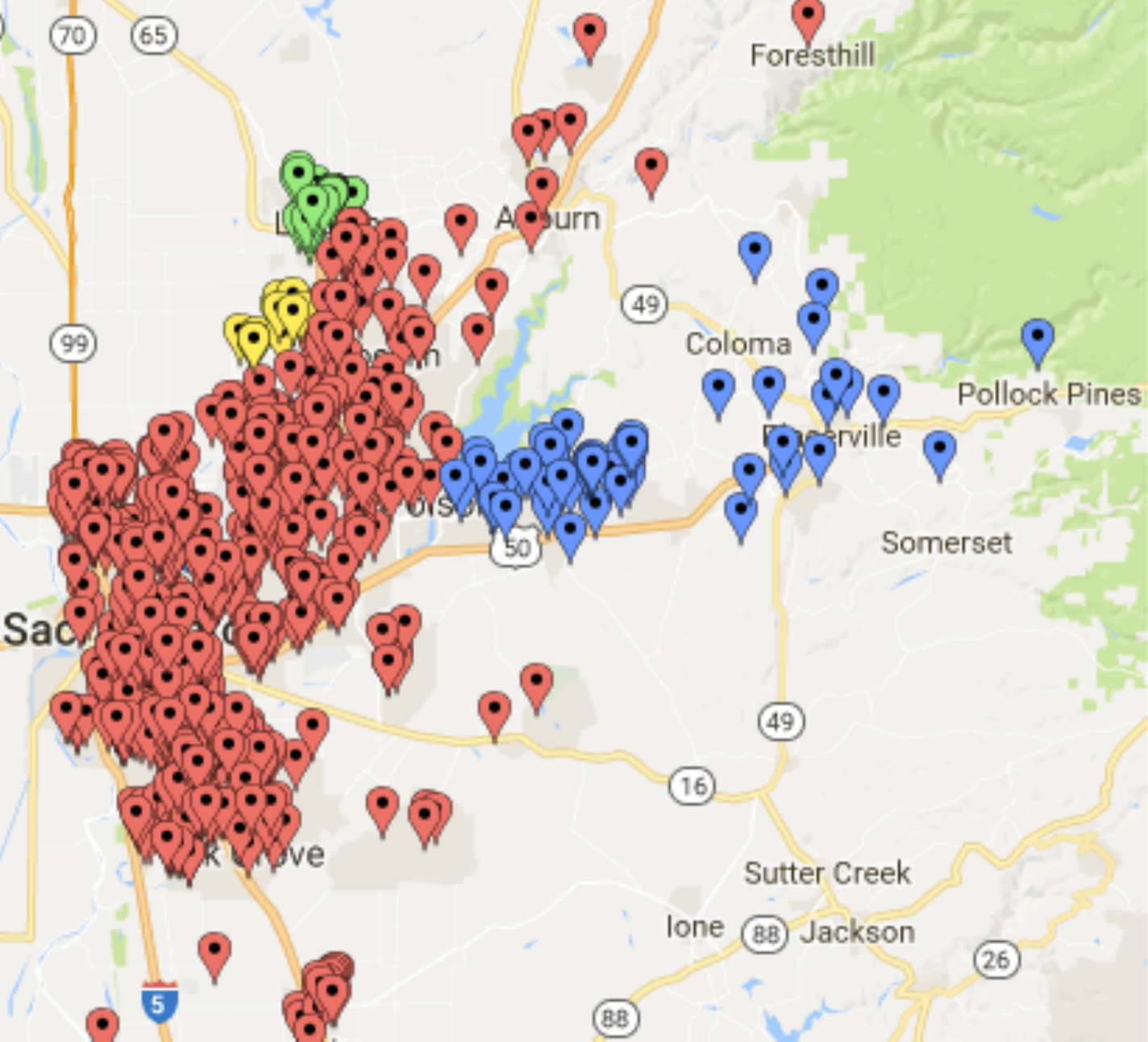}\label{figure_example_triangle_4clusters}}
\hspace{5pt}
\subfigure[$\alpha = 4$]{\includegraphics[width=0.45\columnwidth]{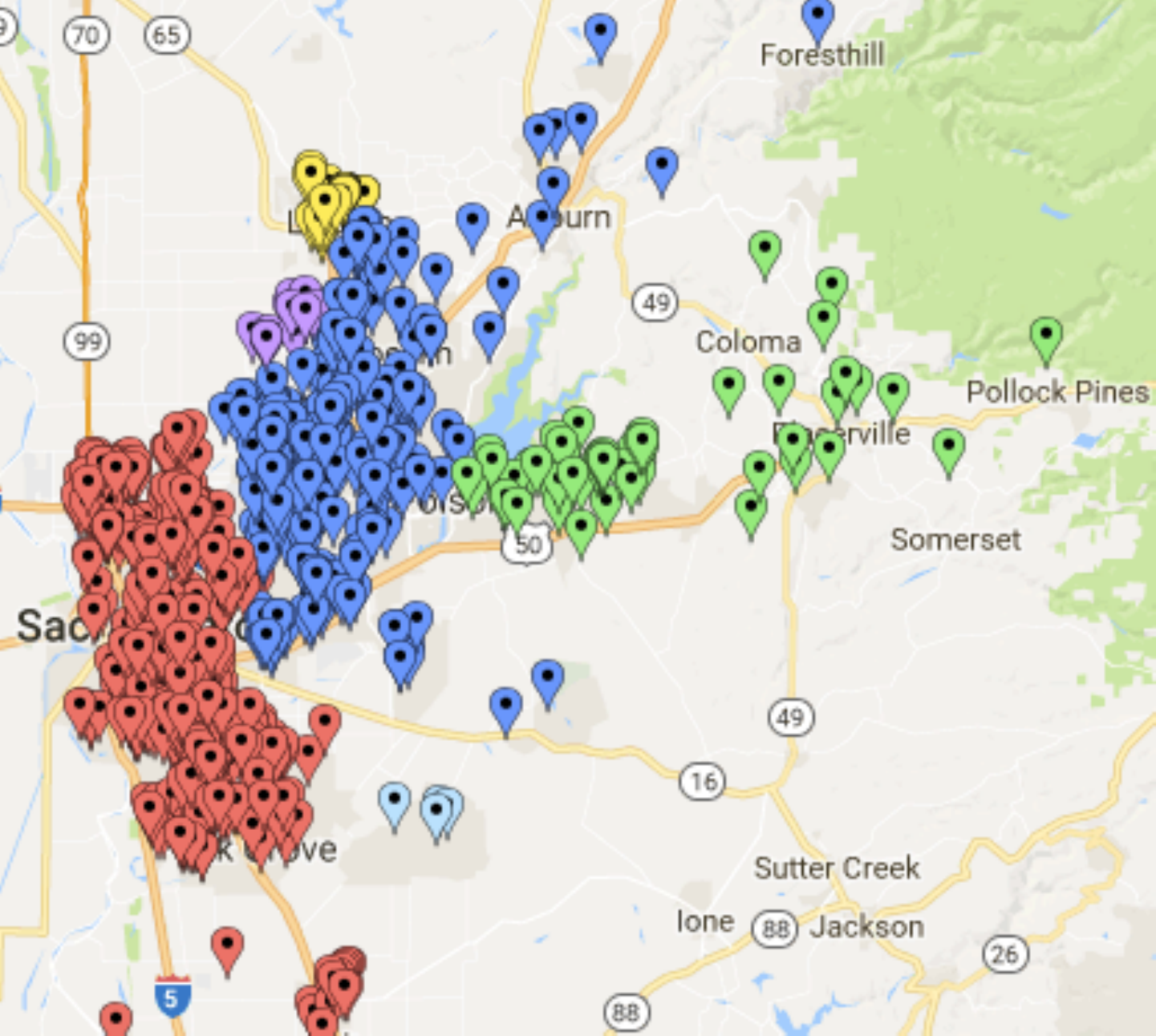}\label{figure_example_triangle_6clusters}}
\caption{Example illustrating SCO if $f_i(\X_i)$ is instantiated to be the empirical loss of ridge regression \cite{Zhao:2018tk}. The task needs to learn a prediction model for every house in the Greater Sacramento area. A marker represents a house. If markers have the same color, the corresponding houses have identical prediction models. As we observe, with the increase in $\alpha$, the same model is used for more houses. The experimental details are described in \cite{Zhao:2018tk}. }
\label{figure_illustrative_sco_house_price_prediction}
\end{figure} 

Let us consider an illustrative example as a further explanation. Figure \ref{figure_basic_sco_data_graph} shows a data graph for a dataset consisting of $6$ instances. Figure \ref{figure_basic_sco_variable_graph} shows the variable graph for the data graph $\Gcal^{\text{data}}$ in the left panel. Since SCO is formulated based on the variable graph $\Gcal^{\text{variable}}$, we denote the variable graph $\Gcal^{\text{variable}}$ by $\Gcal$ by default in the following content. The vertex set of $\Gcal$ is denoted by $\Vcal$, and the edge set of $\Gcal$ is denoted by $\Ecal$. Thus, SCO \cite{Hallac:2015fy} is formulated as
\begin{align}
\label{equa_formulation_basic_sco}
\min_{\X\in\RR^{n\times d}} \underbrace{\sum_{v_i\in\Vcal} f_i(\X_i)}_{\text{the empirical loss}} + \alpha \underbrace{\sum_{e_{ij}\in\Ecal} w_{ij}\lrnorm{\X_i - \X_j}_{p}}_{\text{the sum-of-norms regularizer}}.
\end{align} Here, $p$ is picked from $1, 2, \infty$. Parameter $\alpha$ is a hyperparameter that should be specified. Parameter $w_{ij}$ is the weight of the edge $e_{ij}$. When the variable graph $\Gcal$ is obtained, $w_{ij}$ is usually set to be inversely proportional to the distance between $\A_i$ and $\A_j$ \cite{Chi:2013ey}.    The optimal value of $\X_i$, denoted by $\X_{i\ast}$, represents the prediction model for the $i$-th instance, i.e., $\A_i$.

The complete objective function of \eqref{equa_formulation_basic_sco} consists of two parts explained as follows.
\begin{itemize}
\item \textbf{Empirical loss.} SCO can be used on various data analysis tasks by instantiating the empirical loss of $f_i(\X_i)$. Function $f_i(\cdot)$  usually represents the empirical loss due to the $i$-th instance, which is determined by a specific machine learning task, such as regression, classification, and clustering. For example, if  
\begin{align}
\nonumber
\sum_{v_i\in\Vcal} f_i(\X_i) = \sum_{v_i\in\Vcal} \lrnorm{\X_i-\A_i}_2^2 =  \lrnorm{\X-\A}_F^2
\end{align} holds, it represents the empirical loss of convex clustering.   If the number of vertices in $\Vcal$ is $n$, and 
\begin{align}
\nonumber
\sum_{v_i\in\Vcal} f_i(\X_i) = \sum_{i=1}^n \lrincir{\lrnorm{\A_i\X\Tr_i - \y_i}^2_2 + \gamma \lrnorm{\X_i}_2^2}
\end{align} holds, it represents the empirical loss of ridge regression. 
\item \textbf{Sum-of-norms regularizer.} The sum-of-norms regularizer $\lrnorm{\X_i - \X_j}_{p}$ is essential for performing simultaneous clustering and optimization.  As illustrated in \eqref{equa_formulation_basic_sco}, SCO learns a model, e.g., $\X_i$, for every instance, e.g., $\A_i$. The most noticeable difference between SCO and the traditional machine learning task is the \textit{sum-of-norms regularizer}. 
The regularizer controls the similarity among those models. If $\alpha = 0$, every instance has a different model from those of others. With the increase in $\alpha$, some instances tend to have similar or even identical models. If $\alpha$ is sufficiently large, all instances may have the same model.  We refer to an illustrative example \cite{Zhao:2018tk} in Figure \ref{figure_illustrative_sco_house_price_prediction} to provide further explanations about the regularizer.
\end{itemize}

\subsection{Notation}
Suppose that graph $\Gcal$ consists of $n$ vertices and $m$ edges.    The $i$-th vertex is represented by $v_i$, and its corresponding optimization variable is $\X_i\in\RR^{1\times d}$. The edge connecting $v_i$ and $v_j$ is represented by $e_{ij}$. The weight of $e_{ij}$ is denoted by $w_{ij}$. $\A\in\RR^{n\times d}$ represents the data matrix consisting of $n$ instances, and every instance is characterized by $d$ features. $\X\in\RR^{n\times d}$ represents the matrix of optimization variables.  The other important notations are shown as follows.
\begin{itemize}
\item Ordinary lowercase letters, e.g., $\alpha$ and $\beta$, represent constant scalars. Bold lowercase letters, e.g., $\y$, represent vectors.  Bold capital letters, e.g., $\A\in \RR^{n\times d}$ and $\Q\in\RR^{m\times n}$, represent matrices.
\item A bold capital letter with a subscript, e.g., $\A_{i}\in\RR^{1\times d}$, represents the $i$-th row of a matrix. A bold capital letter with two subscripts, e.g., $\Q_{ij}$, represents the element located at the $i$-th row and the $j$-th column.
\item $\vec(\cdot)$ represents the column stacking vectorization of a matrix.
\item $\text{diag}(\v)$ represents the diagonal matrix consisting of the elements of vector $\v$.
\item $\otimes$ represents the Kronecker product. $\odot$ represents the Hadamard product.
\item $\lrnorm{\cdot}$ represents a norm of a vector, and $\lrnorm{\cdot}_{\ast}$ represents its dual norm.
\item $\I_d$ represents the identity matrix. $\mathbf{0}$ and $\mathbf{1}$ represent constant matrices with elements of $0$ and $1$, respectively.
\end{itemize}

\section{Simultaneous clustering and optimization for evolving datasets}
\label{sect_scoed}
In this section, we propose a new formulation for performing SCO. Then, several examples are provided to present more details about the formulation.

\subsection{Formulation}
Since the dataset is evolving, the change in the data matrix $\A$ will impact the solution. Without a loss of generality, we reformulate \eqref{equa_formulation_basic_sco} as follows:
\begin{align}
\label{equa_basic_simultaneous_clustering_optimization_primal}
\min_{\X\in\RR^{n\times d}} f(\X; \A) + \alpha \sum_{e_{ij}\in\Ecal} w_{ij}\lrnorm{\X_i - \X_j}_{p}.
\end{align} Suppose that the number of vertices in $\Vcal$ is n, i.e., $n=|\Vcal|$, and the number of edges in $\Ecal$ is $m$, i.e., $m=|\Ecal|$. We introduce an auxiliary matrix $\Q\in\RR^{m\times n}$ to help reformulate the optimization's objective function. Any row of $\Q$ consists of one positive element, one negative element and the remaining zero elements. Any row of $\Q$ corresponds to an edge of graph $\Gcal$.  If the $k$-th row of $\Q$, i.e., $\Q_k$, has a positive value at the $i$-th element and a negative value at the $j$-th element, it corresponds to edge $e_{ij}$. In this case, the positive element of $\Q_k$ is $\Q_{ki}:=\alpha w_{ij}$, and the negative element of $\Q_k$ is $\Q_{kj} := - \alpha w_{ij}$. In other words,
\begin{align}
\Q_k = (\mathbf{0}, \underbrace{\Q_{ki}}_{\Q_{ki}:=\alpha w_{ij}}, \mathbf{0}, \underbrace{\Q_{kj}}_{\Q_{kj}:=-\alpha w_{ij}}, \mathbf{0}).
\end{align}

Additionally, the sum-of-norms regularization is usually denoted by the $l_{1,p}$ norm, i.e., $\lrnorm{\cdot}_{1,p}$ \cite{Hallac:2015fy,Chi:2013ey,icml2017,Yuan2018An}. It is defined by $\lrnorm{\cdot}_{1,p} := \sum_{e_{ij}\in\Ecal} \lrnorm{\cdot}_{p}$. Using $\Q$ to reformulate the optimization objective function of \eqref{equa_basic_simultaneous_clustering_optimization_primal}, we thus obtain
\begin{align}
\nonumber
\min_{\X\in\RR^{n\times d}} f(\X; \A) + \lrnorm{\Q\X}_{1,p}.
\end{align} 

\begin{theorem}
\label{theorem_formulation_dual}
The dual formulation of \eqref{equa_basic_simultaneous_clustering_optimization_primal} is equivalent to 
\begin{align}
\label{equa_final_dual_objective}
\min_{\blambda\in\RR^{m\times d}} & f^{\ast}(-\vec\Tr(\blambda)(\I_d\otimes\Q); \A)
\end{align} subject to 
\begin{align}
\nonumber
\lrnorm{\blambda_i}_q \le 1, 1\le i \le m.
\end{align} Once the quantity $\blambda_\ast$ that minimizes \eqref{equa_final_dual_objective} has been obtained, the quantity $\X_\ast$ that minimizes the primal problem \eqref{equa_basic_simultaneous_clustering_optimization_primal} is obtained by 
\begin{align}
\label{equa_minimizer_primal}
\nabla f(\X_\ast;\A) + (\I_d \otimes \Q)\Tr \vec(\blambda_\ast) = 0.
\end{align} Here, $p\in\{1,2,\infty\}$, and $\frac{1}{p}+\frac{1}{q} = 1$. Table \ref{table_constraint_norms_types} shows the specific values of $p$ and $q$.
\end{theorem}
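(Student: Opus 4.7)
The plan is to derive the dual via the standard variable-splitting trick, then specialize the two conjugate calculations to (i) the sum-of-norms regularizer and (ii) the general loss $f(\cdot;\A)$. To expose a separable structure, I would first introduce the auxiliary matrix $\Z=\Q\X\in\RR^{m\times d}$ and rewrite the primal \eqref{equa_basic_simultaneous_clustering_optimization_primal} as the linearly-constrained problem
\begin{align}
\nonumber
\min_{\X,\Z}\ f(\X;\A) + \lrnorm{\Z}_{1,p}\quad\text{s.t.}\quad \vec(\Z) = (\I_d\kron\Q)\vec(\X),
\end{align}
using the identity $\vec(\Q\X)=(\I_d\kron\Q)\vec(\X)$ to expose the Kronecker form that appears in the statement. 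I would then attach a dual variable $\blambda\in\RR^{m\times d}$ to the equality constraint and form the Lagrangian.

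Next I would perform the two partial infima separately. Minimization over $\Z$ uses the textbook fact that the Fenchel conjugate of a norm is the indicator of its dual norm's unit ball; since $\lrnorm{\cdot}_{1,p}$ is the row-wise sum of $\ell_p$ norms, its conjugate is $0$ precisely when every row $\blambda_i$ satisfies $\lrnorm{\blambda_i}_q\le 1$, with $1/p+1/q=1$, and $+\infty$ otherwise. This step yields exactly the constraint set in the theorem and is where I would invoke Table~\ref{table_constraint_norms_types} for the $p\in\{1,2,\infty\}$ cases. Minimization over $\X$ gives
\begin{align}
\nonumber
\inf_{\X}\ \big\{f(\X;\A) - \vec\Tr(\blambda)(\I_d\kron\Q)\vec(\X)\big\} = -f^{\ast}\bigl(-\vec\Tr(\blambda)(\I_d\kron\Q);\A\bigr)
\end{align}
after negating the argument to absorb the minus sign into the conjugate, producing the dual objective in \eqref{equa_final_dual_objective} (up to the usual sign flip that converts $\max$ to $\min$).

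For the recovery formula \eqref{equa_minimizer_primal}, I would appeal to strong duality (which holds because $f$ is convex and the norm is everywhere finite, so Slater's condition is trivially met) together with the KKT stationarity condition in $\X$: the gradient of the Lagrangian with respect to $\vec(\X)$ must vanish at the saddle point, which is exactly $\nabla f(\X_{\ast};\A) + (\I_d\kron\Q)\Tr\vec(\blambda_{\ast})=\0$, after reshaping.

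The main obstacle I anticipate is bookkeeping rather than conceptual difficulty: I must be careful that (a) the reshape between matrix and vectorized forms is consistent, so that the Kronecker factor $\I_d\kron\Q$ lines up with how $f(\X;\A)$ depends on the columns of $\X$, and (b) the dual of the mixed norm $\lrnorm{\cdot}_{1,p}$ really is the row-wise $\ell_q$ ball constraint rather than a joint constraint. Once those two identifications are checked carefully (a short calculation for each row of $\Q\X$), the rest follows from Fenchel--Rockafellar duality in a standard way.
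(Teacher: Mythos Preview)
Your proposal is correct and follows essentially the same route as the paper: variable splitting $\Z=\Q\X$, Lagrangian dualization, separate infima yielding $-f^{\ast}$ and the indicator of the row-wise $\ell_q$ unit ball, and Slater's condition for strong duality; you even make the KKT-stationarity argument for \eqref{equa_minimizer_primal} explicit, which the paper leaves implicit. One bookkeeping caution (which you already flagged): in your displayed $\inf_{\X}$ identity the sign in front of the linear term and the sign inside $f^{\ast}$ are inconsistent---with the constraint written as $\vec(\Z)=(\I_d\kron\Q)\vec(\X)$ the Lagrangian term in $\X$ carries a $+$, giving $\inf_{\X}\{f(\X;\A)+\vec\Tr(\blambda)(\I_d\kron\Q)\vec(\X)\}=-f^{\ast}(-\vec\Tr(\blambda)(\I_d\kron\Q);\A)$, exactly as in the paper.
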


\begin{algorithm}[t]
    \caption{SCO for evolving datasets}
    \label{algo_evolution_detected_regularization}
    \begin{algorithmic}[1]
    \Require Data matrix $\A$, a positive $\beta$ and a threshold $c$ to control the accuracy of the solution for evolving datasets.
        \While {The change in $\A$, i.e., $\A+\bDelta$, is detected.}
            \State Obtain $\Delta_{f^{\ast}}$ according to \eqref{equa_difinition_bDelta}.
            \If {$\Delta_{f^{\ast}} \ge c$}
                \State $\A\leftarrow \A+\bDelta$.
                \State Solve the optimization problem \eqref{equa_unware_regularized_dual_objective}, and obtain the minimizer $\blambda_\ast$.
                \State Obtain $\X_\ast$ according to \eqref{equa_minimizer_primal}.
            \EndIf
        \EndWhile 
    \end{algorithmic}
\end{algorithm}

\begin{table}
\centering
\caption{$l_p$ norm and its dual norm $l_q$.}
\begin{tabular}{c|c|c|c}
\hline 
$\lrnorm{\cdot}_p$ & $\lrnorm{\cdot}_1$ & $\lrnorm{\cdot}_2$ & $\lrnorm{\cdot}_{\infty}$\tabularnewline
\hline 
\hline 
$\lrnorm{\cdot}_q$ & $\lrnorm{\cdot}_{\infty}$ & $\lrnorm{\cdot}_2$ & $\lrnorm{\cdot}_1$\tabularnewline
\hline 
constraints for $\lrnorm{\cdot}_q$ & box & ball & simplex \tabularnewline
\hline  
\end{tabular}
\label{table_constraint_norms_types}
\end{table}

Therefore, our new formulation of the dual problem is
\begin{align}
\label{equa_unware_regularized_dual_objective}
\min_{\blambda\in\RR^{m\times d}} & f^{\ast}(-\vec\Tr(\blambda)(\I_d\otimes\Q); \A) + \beta \lrnorm{(\I_d\otimes \Q)\Tr \vec(\blambda)}_s
\end{align} subject to
\begin{align}
\nonumber
\lrnorm{\blambda_i}_q \le 1, 1\le i \le m.
\end{align} Here, $s$ can be $1$, $2$ or $\infty$.  Parameter $\beta$ controls the accuracy of the solution for the evolving dataset. A small $\beta$ means that the solution of \eqref{equa_unware_regularized_dual_objective} is sensitive to a perturbation of the data matrix $\A$.  A large $\beta$ causes the solution of \eqref{equa_unware_regularized_dual_objective} to be robust to a perturbation of the data matrix.

\textbf{Intuitive idea.} When $\A$ evolves to be $\A+\bDelta$, the minimizer $\blambda_\ast$ is not the true minimizer $\tilde{\blambda}_\ast$ of \eqref{equa_final_dual_objective} based on the evolving dataset $\A+\bDelta$. However, if the difference between $\A$ and $\A+\bDelta$ is not significant, it is reasonable to use $\blambda_\ast$ as an approximation of the true minimizer of \eqref{equa_final_dual_objective}. Note that
\begin{align}
\nonumber
\lrnorm{\nabla f(\X_\ast;\A)} = & \lrnorm{ (\I_d \otimes \Q)\Tr \vec(\blambda_\ast)}.
\end{align} Consider a fixed $\X_\ast$; then, $\lrnorm{\nabla f(\X_\ast;\A)}$ changes when $\A$ evolves into $\A+\bDelta$. The smaller the change is between $\lrnorm{\nabla f(\X_\ast;\A)}$ and $\lrnorm{\nabla f(\X_\ast;\A+\bDelta)}$, the closer $\X_\ast$ and the true minimizer $\tilde{\X}_\ast$ based on $\A+\bDelta$. Making $\lrnorm{\nabla f(\X_\ast;\A)}$ insensitive to  $\bDelta$ is equivalent to making $\lrnorm{ (\I_d \otimes \Q)\Tr \vec(\blambda_\ast)}$ insensitive to $\bDelta$. The reason is that $\blambda_\ast$ determines $\X_\ast$ according to \eqref{equa_minimizer_primal}.  We add a regularizer $\lrnorm{(\I_d\otimes \Q)\Tr \vec(\blambda)}_s$  into the objective function of \eqref{equa_final_dual_objective}, which penalizes the objective function for a large $\lrnorm{(\I_d\otimes \Q)\Tr \vec(\blambda)}_s$. When $\A$ evolves, the regularizer guarantees that the change in $\lrnorm{(\I_d\otimes \Q)\Tr \vec(\blambda)}_s$ is bounded within the user's control.

We define a new metric to decide when to update the model for the evolving dataset.
\begin{align}
\label{equa_difinition_bDelta}
\Delta_{f^{\ast}} :=& \lvert f^{\ast}(-\vec\Tr(\blambda_\ast)(\I_d\otimes\Q); \A+\bDelta) \\ \nonumber  
&- f^{\ast}(-\vec\Tr(\blambda_\ast)(\I_d\otimes\Q); \A)\rvert.
\end{align} If this metric exceeds a threshold, then the model needs to be updated.  Finally, we propose Algorithm \ref{algo_evolution_detected_regularization} to perform SCO for evolving datasets.

\subsection{Examples}
\label{subsect_formulation_example}
We provide two examples -- convex clustering \cite{Lindsten2011Just} and ridge regression \cite{Shalev2014Understanding} -- to present additional explanations of the proposed formulation. 

\textbf{Convex clustering.} The optimization objective function of convex clustering is
\begin{align}
\nonumber
\min_{\X\in\RR^{n\times d}} \lrnorm{\X - \A}^2_F + \lrnorm{\Q\X}_{1,p}.
\end{align} In this case, $f(\X;\A) = \lrnorm{\X - \A}^2_F$, and $f^{\ast}(-\vec\Tr(\blambda; \A)(\I_d\otimes \Q)\Tr)= -\vec\Tr(\A)(\I_d\otimes \Q)\Tr \vec(\blambda) + \frac{1}{4} \vec\Tr(\blambda)(\I_d\otimes \Q)(\I_d\otimes \Q)\Tr\vec(\blambda)$. We thus obtain the formulation of the dual problem of convex clustering as
\begin{align}
\nonumber
\min_{\blambda\in\RR^{m\times d}} & -\vec\Tr(\A)(\I_d\otimes \Q)\Tr \vec(\blambda) \\ \nonumber 
& + \frac{1}{4} \vec\Tr(\blambda)(\I_d\otimes \Q)(\I_d\otimes \Q)\Tr\vec(\blambda)
\end{align} subject to 
\begin{align}
\nonumber
&\lrnorm{\blambda_i}_q \le 1, 1\le i \le m.
\end{align} The dual problem of convex clustering is
\begin{align}
\nonumber
\min_{\blambda\in\RR^{m\times d}} & -\vec\Tr(\A)(\I_d\otimes \Q)\Tr \vec(\blambda) \\ \nonumber 
& + \frac{1}{4} \vec\Tr(\blambda)(\I_d\otimes \Q)(\I_d\otimes \Q)\Tr\vec(\blambda) \\ \nonumber
& + \beta \lrnorm{(\I_d\otimes \Q)\Tr \vec(\blambda)}_s
\end{align} subject to 
\begin{align}
\nonumber
&\lrnorm{\blambda_i}_q \le 1, 1\le i \le m.
\end{align} Once the minimizer $\blambda_\ast$ of \eqref{equa_unware_regularized_dual_objective} has been obtained, the minimizer of the primal problem \eqref{equa_basic_simultaneous_clustering_optimization_primal} is obtained according to \eqref{equa_minimizer_primal}, i.e., $2\lrincir{\vec(\X) - \vec(\A)} + (\I_d\otimes \Q)\Tr\vec(\blambda_\ast) = 0$ in this case.

\textbf{Ridge regression.} The objective function of ridge regression with the sum-of-norms regularizer is
\begin{align}
\nonumber
\min_{\X\in\RR^{n\times d}} & \sum_{i=1}^n \lrincir{\lrnorm{\A_i\X\Tr_i - \y_i}^2_2 + \gamma \lrnorm{\X_i}_2^2} + \lrnorm{\Q\X}_{1,p}.
\end{align} After the constant item is discarded, the above becomes equivalent to
\begin{align}
\nonumber
\min_{\X\in\RR^{n\times {d}}} & \vec\Tr(\X) \bOmega \vec(\X) - 2\y\Tr\bLambda\vec(\X)  + \lrnorm{\Q\X}_{1,p}
\end{align} where $\bOmega\in\RR^{(nd)\times (nd)}$ is defined by
\begin{align}
\nonumber
\bOmega := \text{diag}(\vec(\A))\text{diag}(\vec(\A))+\gamma \I_{nd},
\end{align} and $\bLambda \in\RR^{n \times (nd)}$ is defined by
\begin{align}
\nonumber
\bLambda := (\1_{1\times d}\otimes \I_n)\text{diag}(\vec(\A)).
\end{align} 

%Thus,
%\begin{align}
%\nonumber
%f(\X;\A) = \vec\Tr(\X)\bOmega \vec(\X) - 2\y\Tr\bLambda\vec(\X),
%\end{align} and 
%\begin{align}
%\nonumber
%& f^{\ast}(-\vec\Tr(\blambda)(\I_{d}\otimes \Q); \A) \\ \nonumber 
%=& \frac{1}{4} \vec\Tr(\blambda)(\I_{d}\otimes \Q) \bOmega^{-1}(\I_{d}\otimes \Q)\Tr \vec(\blambda) \\ \nonumber 
%&+\frac{1}{2}\vec\Tr(\blambda)(\I_{d}\otimes \Q)\bOmega^{-1}\bLambda\y.
%\end{align}  

Therefore, the final dual problem of ridge regression is formulated as
\begin{align}
\nonumber
\min_{\blambda\in\RR^{m\times d}} & \frac{1}{4} \vec\Tr(\blambda)(\I_{d}\otimes \Q) \bOmega^{-1}(\I_{d}\otimes \Q)\Tr \vec(\blambda) \\ \nonumber 
& + \frac{1}{2}\vec\Tr(\blambda)(\I_{d}\otimes \Q)\bOmega^{-1}\bLambda\y + \beta \lrnorm{(\I_{d}\otimes \Q)\Tr \vec(\blambda)}_s
\end{align} subject to
\begin{align}
\nonumber
\lrnorm{\blambda_i}_q \le 1, 1\le i \le m.
\end{align} Once the minimizer $\blambda_\ast$ of \eqref{equa_unware_regularized_dual_objective} has been obtained, the minimizer of the primal problem \eqref{equa_basic_simultaneous_clustering_optimization_primal} is obtained according to \eqref{equa_minimizer_primal}, i.e., $2\bOmega\vec(\X_\ast) -2\bLambda\Tr\y + (\I_d\otimes \Q)\Tr\vec(\blambda_\ast) = 0$ in this case.

\section{New variant of ADMM for solving \eqref{equa_unware_regularized_dual_objective}}
\label{sect_admm}
In this section, we describe a novel ADMM method for efficiently solving the proposed formulation, i.e., \eqref{equa_unware_regularized_dual_objective}.

\begin{algorithm}[t]
    \caption{New variant of ADMM for solving  \eqref{equa_unware_regularized_dual_objective}}
    \label{algo_admm}
    \begin{algorithmic}[1]
        \Require Data matrix $\A\in\mathbb{R}^{n\times d}$ and a positive $\beta$.
        \State Initialize $\blambda^{(0)}$ and $\bmu^{(0)}$, and set $t=0$.
        \For {Stopping criterion is not satisfied}
            \State Update $\blambda^{(t+1)}$ by solving the optimization problem \eqref{equa_admm_update_lambda}.
            \State Update $\u^{(t+1)}$ according to \eqref{equa_update_u}.
            \State Update $\bmu^{(t+1)}$ according to \eqref{equa_admm_mu_update}.
            \State $t=t+1$;
        \EndFor
        \State\Return The final value of $\blambda$.
    \end{algorithmic}
\end{algorithm}

\subsection{Algorithmic framework}

The constraint set $\Ccal$ of \eqref{equa_unware_regularized_dual_objective} is defined by 
\begin{align}
\nonumber
\Ccal = \{\blambda | \lrnorm{\blambda_i}_q \le 1, 1\le i\le m\}. 
\end{align} Similarly, a new notation $\u\in\RR^{md\times 1}$ is defined by 
\begin{align}
\nonumber
\u := (\I_d\otimes \Q)\Tr \vec(\blambda).
\end{align} Thus, \eqref{equa_unware_regularized_dual_objective} is reformulated as 
\begin{align}
\nonumber
\min_{\blambda\in\Ccal} & f^{\ast}(-(\I_d\otimes \Q)\Tr \vec(\blambda);\A) + \beta \lrnorm{\u}_s
\end{align} subject to
\begin{align}
\nonumber
(\I_d\otimes \Q)\Tr \vec(\blambda)-\u = 0.
\end{align} Denote $h(\blambda) := f^{\ast}(-(\I_d\otimes \Q)\Tr \vec(\blambda);\A)$. Recall that $g(\u) = \lrnorm{\u}_{1,p}$. Its augmented Lagrangian multiplier is 
\begin{align}
\nonumber
L(\blambda, \u, \bmu) = & h(\blambda) + g(\u) + \bmu\Tr \lrincir{(\I_d\otimes \Q)\Tr \vec(\blambda)-\u} \\ \nonumber 
&+ \frac{\rho}{2}\lrnorm{(\I_d\otimes \Q)\Tr \vec(\blambda)-\u}_2^2.
\end{align}

\textbf{Update $\blambda$.} The $(t+1)$-th update of $\blambda$ is 
\begin{align}
\nonumber
\blambda^{(t+1)} = & \argmin_{\blambda\in\Ccal} L(\blambda, \u^{(t)}, \bmu^{(t)}) \\ \nonumber
= & \argmin_{\blambda\in\Ccal} h(\blambda) +  \lrincir{\vec\Tr(\blambda) (\I_d\otimes \Q) }\bmu^{(t)} \\ \label{equa_admm_update_lambda}
& + \frac{\rho}{2}\lrnorm{(\I_d\otimes \Q)\Tr \vec(\blambda)-\u^{(t)}}_2^2.
\end{align}

It is worth noting that most of the previous studies \cite{Chi:2013ey,Hallac:2015fy,Lindsten2011Just,Yuan2018An,icml2017} investigate the case of  $p=q=2$. However, we observe that the case of $p=1$ and $q=\infty$ is more efficient than that of $p=q=2$. If $p=1$ and $q=\infty$, the update of $\blambda$ can be naturally parallelized. We present the details in the appendix. 
\begin{remark}
The parallelization speed-up of the update of $\blambda$ is up to $O(d)$ if $p=1$ and $q=\infty$.  
\end{remark} 

\textbf{Update $\u$.} The $(t+1)$-th update of $\u$ is formulated as follows:
\begin{theorem}
\label{theorem_admm_update_u}
\begin{align}
\label{equa_update_u}
\u^{(t+1)} = \bomega^+ \odot \bomega,
\end{align} where 
\begin{align}
\nonumber
\bomega :=  \frac{1}{\rho}\bmu^{(t)} + (\I_d\otimes \Q)\Tr \vec(\blambda^{(t+1)}).
\end{align}
\end{theorem}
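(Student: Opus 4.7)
The plan is to derive $\u^{(t+1)}$ by solving the $\u$-subproblem of the augmented Lagrangian $L(\blambda,\u,\bmu)$ with $\blambda$ frozen at $\blambda^{(t+1)}$ and $\bmu$ frozen at $\bmu^{(t)}$, and then to recognize the result as a proximal-operator evaluation. First, I would extract only the $\u$-dependent terms of $L$. The resulting subproblem is
\begin{align*}
\u^{(t+1)} = \argmin_{\u}\ g(\u) - (\bmu^{(t)})^{\top}\u + \frac{\rho}{2}\lrnorm{(\I_d\otimes \Q)\Tr\vec(\blambda^{(t+1)}) - \u}_2^2,
\end{align*}
with $g(\u) = \beta\lrnorm{\u}_s$ as dictated by the reformulation that introduced the splitting variable $\u = (\I_d\otimes\Q)\Tr\vec(\blambda)$.

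Next I would complete the square. Expanding the quadratic penalty and merging its linear-in-$\u$ piece with the $-(\bmu^{(t)})^{\top}\u$ term, both linear contributions collapse into a single inner product against $\bomega := \frac{1}{\rho}\bmu^{(t)} + (\I_d\otimes \Q)\Tr\vec(\blambda^{(t+1)})$. Up to an additive constant independent of $\u$, the subproblem reduces to
\begin{align*}
\u^{(t+1)} = \argmin_{\u}\ g(\u) + \frac{\rho}{2}\lrnorm{\u - \bomega}_2^2,
\end{align*}
which is by definition the proximal operator of $g/\rho$ evaluated at $\bomega$.

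Finally I would invoke the closed form of this proximal operator. For every value of $s\in\{1,2,\infty\}$ permitted by the formulation, $\mathrm{prox}_{g/\rho}$ has the multiplicative shrinkage structure standard in sparse optimization: for $s=1$ it is entrywise soft-thresholding, giving the coefficient $\max(1-\beta/(\rho|\omega_i|),0)$ on each $\omega_i$; for $s=2$ it is block soft-thresholding with coefficient $\max(1-\beta/(\rho\lrnorm{\bomega}_2),0)$; and for $s=\infty$ the result follows from the Moreau decomposition applied to the projection onto a scaled $\ell_1$ ball. In each case the shrinkage factors pack into a vector $\bomega^{+}$ so that $\u^{(t+1)} = \bomega^{+}\odot\bomega$, which is the stated formula.

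The main obstacle is strictly notational rather than analytical: tracking the Kronecker-product indexing that relates the $md$-dimensional splitting variable $\u$ to the row structure of $\blambda\in\RR^{m\times d}$, so that the shrinkage is applied in the correct coordinates. Once that bookkeeping is settled, the theorem follows immediately from standard proximal calculus.
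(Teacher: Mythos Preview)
Your proposal is correct and follows essentially the same route as the paper: isolate the $\u$-terms of the augmented Lagrangian, complete the square to obtain $\argmin_{\u} g(\u)+\tfrac{\rho}{2}\lrnorm{\u-\bomega}_2^2$, recognize this as $\mathbf{Prox}_{g,\rho}(\bomega)$, and then invoke the known closed-form shrinkage formula. The only minor difference is that you spell out the three cases $s\in\{1,2,\infty\}$ explicitly (and keep the factor $\beta$), whereas the paper simply cites a remark from \cite{Parikh2014Proximal} and writes the elementwise shrinkage $\u^+_i=\max(0,1-1/|\u_i|)$; this is a presentational, not a methodological, distinction.
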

According to Theorem \ref{theorem_admm_update_u}, the update of $\u$ has a closed form that can be computed very efficiently.

\textbf{Update $\bmu$.} The $(t+1)$-th update of $\bmu$ is simply 
\begin{align}
\label{equa_admm_mu_update}
\bmu^{(t+1)} = \bmu\Tr + \rho \lrincir{(\I_d\otimes\Q)\vec(\blambda^{(t+1)}) - \u^{(t+1)}}.
\end{align}

The new variant of ADMM is presented in Algorithm \ref{algo_admm}.

\subsection{Complexity analysis} If $f(\cdot)$ is convex, ADMM has been proven to be convergent \cite{Boyd:2011}. A new norm $\lrnorm{\w}_\H$ is defined by
\begin{align}
\nonumber
\lrnorm{\w}_\H := \w\Tr \H \w,
\end{align} where $\w\in\RR^{(nd+2md)}$ is defined by 
\begin{align}
\nonumber
\w := \lrincir{\vec\Tr(\blambda);\u;\bmu},
\end{align} and $\H$ is defined by
\begin{align}
\nonumber
\H := \begin{pmatrix}
 \mathbf{0}_{nd\times nd}&  & \\ 
 & \rho \I_{md} & \\ 
 &  & \frac{1}{\rho} \I_{md}.
\end{pmatrix}
\end{align} Therefore, we obtain the following theorem \cite{He:2015gq}:
\begin{theorem}[Theorem $5.1$ in \cite{He:2015gq}]
\nonumber
Assume that $f(\cdot)$ is convex. If the ADMM is run for $T$ iterations, we have 
\begin{align}
\nonumber
\lrnorm{\w^T - \w^{T+1}}_\H^2 \le \frac{1}{T+1} \lrnorm{\w^0 - \w_\ast}_\H^2
\end{align} where $\w_\ast=\lrincir{\vec\Tr(\blambda_\ast);\u_\ast;\bmu_\ast}$.  
\end{theorem}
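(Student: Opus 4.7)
The plan is to adapt the standard variational-inequality (VI) framework of \cite{He:2015gq} to our splitting $\w = \lrincir{\vec\Tr(\blambda);\u;\bmu}$. The proof has three ingredients: (i) a VI characterization of the ADMM iterate $\w^{t+1}$, (ii) a Fej\'er-type contraction of $\lrnorm{\w^t - \w_\ast}_\H$ that supplies a telescoped bound on $\sum_t \lrnorm{\w^t - \w^{t+1}}_\H^2$, and (iii) monotone non-increase of the sequence $\lrnorm{\w^t - \w^{t+1}}_\H^2$. Combining (ii) and (iii) gives the advertised $1/(T{+}1)$ rate. Throughout I would absorb the indicator $\iota_{\Ccal}$ into $h$, so the $\blambda$-subproblem \eqref{equa_admm_update_lambda} becomes an unconstrained convex problem whose optimality condition is a subdifferential inclusion.

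First I would write the first-order optimality conditions of \eqref{equa_admm_update_lambda} and of the $\u$-update of Theorem \ref{theorem_admm_update_u}, and combine them with the multiplier step \eqref{equa_admm_mu_update}. A standard rearrangement shows that for every feasible $\w$,
\begin{align}
\nonumber
h(\blambda) - h(\blambda^{t+1}) + g(\u) - g(\u^{t+1}) + (\w - \w^{t+1})\Tr F(\w^{t+1}) \ge (\w - \w^{t+1})\Tr \H (\w^{t+1} - \w^t),
\end{align}
where $F$ is the affine monotone operator that encodes the coupling constraint $(\I_d\otimes\Q)\Tr\vec(\blambda) - \u = \0$. Specializing $\w = \w_\ast$ and using convexity of $h$ and $g$ together with monotonicity of $F$ cancels the function-value and inner-product cross terms, yielding the Fej\'er-type inequality
\begin{align}
\nonumber
\lrnorm{\w^{t+1} - \w_\ast}_\H^2 \le \lrnorm{\w^t - \w_\ast}_\H^2 - \lrnorm{\w^t - \w^{t+1}}_\H^2.
\end{align}
Telescoping gives $\sum_{t=0}^{T} \lrnorm{\w^t - \w^{t+1}}_\H^2 \le \lrnorm{\w^0 - \w_\ast}_\H^2$.

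The most technical step is the monotone non-increase claim $\lrnorm{\w^{t+1} - \w^{t+2}}_\H^2 \le \lrnorm{\w^t - \w^{t+1}}_\H^2$. I would write the VI at iteration $t{+}1$ with $\w \leftarrow \w^{t+2}$, and the VI at iteration $t{+}2$ with $\w \leftarrow \w^{t+1}$, then add them. Monotonicity of the subdifferentials of $h$ and $g$ annihilates the function-value differences and monotonicity of $F$ disposes of the remaining inner product, leaving an inequality among consecutive $\H$-quadratic forms from which the claim follows after a short algebraic manipulation. With (ii) and (iii) in hand,
\begin{align}
\nonumber
(T+1)\lrnorm{\w^T - \w^{T+1}}_\H^2 \le \sum_{t=0}^{T} \lrnorm{\w^t - \w^{t+1}}_\H^2 \le \lrnorm{\w^0 - \w_\ast}_\H^2,
\end{align}
which is the stated bound. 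The main obstacle will be step (iii): because $\H$ has a vanishing block in the $\blambda$ coordinate, $\lrnorm{\cdot}_\H$ is only a seminorm, so care is needed when cancelling $\blambda$-differences and when arguing that the leftover quadratic is indeed nonnegative. Tracking which terms live in $\text{range}(\H)$ and exploiting the affine structure of $F$ is the delicate bookkeeping that makes the argument go through.
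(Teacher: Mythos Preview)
The paper does not supply its own proof of this statement: it is quoted as Theorem~5.1 of \cite{He:2015gq} and invoked without argument. Your proposal is a faithful outline of the proof in that reference---the VI characterization, the Fej\'er contraction yielding the telescoped bound, and the monotone non-increase of $\lrnorm{\w^t-\w^{t+1}}_\H^2$ are exactly the three ingredients He and Yuan combine, and your flag about $\H$ being only a seminorm in the $\blambda$-block is the one genuine subtlety their analysis handles. So there is nothing to compare against in the present paper; your sketch matches the cited source.
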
 It means that the convergence rate of the ADMM method is $O(\frac{1}{T})$ \cite{He:2015gq}. Formally, the quality of the solution obtained by the ADMM method is presented as follows. 
\begin{remark}
\nonumber
If we want to obtain an $\epsilon$-accurate ($\epsilon>0$) solution, i.e., $\lrnorm{\w^T - \w^{T+1}}_\H^2 \le \epsilon$, the ADMM method needs to perform $O(\frac{1}{\epsilon})$ iterations.
\end{remark}

\section{Theoretical analysis}
\label{sect_theoretical_analysis}
In this section, we analyze the accuracy of the approximate solution obtained by Algorithm \ref{algo_evolution_detected_regularization} for two specific tasks: convex clustering and ridge regression.

\subsection{Meta framework for analysis}
We define the constraints of \eqref{equa_unware_regularized_dual_objective}, i.e., $\Ccal$ as  
\begin{align}
\nonumber
\Ccal = \{\blambda\in\RR^{m\times d} | \lrnorm{\blambda_i}_q \le 1, {~} 1\le i \le m \}.
\end{align}

Define the minimizer of \eqref{equa_unware_regularized_dual_objective} based on the evolving dataset $\A+\bDelta$ as
\begin{align}
\nonumber
\tilde{\blambda}_\ast :=& \argmin_{\blambda\in\Ccal} f^{\ast}(-\vec\Tr(\blambda)(\I_d\otimes \Q); \A+\bDelta) \\ \nonumber 
&+ \beta \lrnorm{(\I_d\otimes\Q)\vec(\blambda)}_s.
\end{align}

Recall that $\blambda_\ast$ is defined by
\begin{align}
\nonumber
\blambda_\ast =& \argmin_{\blambda\in\Ccal} f^{\ast}(-\vec\Tr(\blambda)(\I_d\otimes \Q); \A) \\ \nonumber 
&+ \beta \lrnorm{(\I_d\otimes\Q)\vec(\blambda)}_s.
\end{align}

According to Algorithm \ref{algo_evolution_detected_regularization}, we have
\begin{align}
\nonumber
&f^{\ast}(-\vec\Tr(\tilde{\blambda}_\ast)(\I_d\otimes \Q); \A+\bDelta) \\ \nonumber 
\le &f^{\ast}(-\vec\Tr(\blambda_\ast)(\I_d\otimes \Q); \A+\bDelta) \\ \label{equa_difference_lambda_hat_control_c}
\le & f^{\ast}(-\vec\Tr(\blambda_\ast)(\I_d\otimes \Q); \A) + c.
\end{align} This inequality provides a method for bounding the difference between $\blambda_\ast$ and $\tilde{\blambda}_\ast$. Additionally, according to \eqref{equa_minimizer_primal}, we have
\begin{align}
\nonumber
&\nabla f(\tilde{\X}_\ast; \A+\bDelta) - \nabla f(\X_\ast; \A) \\ \label{equa_lambda_hat_difference} 
=& (\I_d\otimes \Q)\Tr\lrincir{\tilde{\blambda}_\ast - \blambda_\ast}.
\end{align} This equality transforms the bound on the difference between $\tilde{\X}_\ast$ and $\X_\ast$ into the bound on the difference between $\blambda_\ast$ and $\tilde{\blambda}_\ast$. Therefore, by combining \eqref{equa_difference_lambda_hat_control_c} and \eqref{equa_lambda_hat_difference}, we are able to bound the difference between $\tilde{\X}_\ast$ and $\X_\ast$. 
\begin{remark}
The difference between $\tilde{\X}_\ast$ and $\X_\ast$ may be bounded by \eqref{equa_difference_lambda_hat_control_c} and \eqref{equa_lambda_hat_difference}.
\end{remark}

\subsection{Guarantee of model accuracy for convex clustering} 

In convex clustering, $f(\X;\A) = \lrnorm{\X-\A}_F^2$, $\nabla f(\X;\A) = 2(\vec(\X) - \vec(\A))$, and $f^{\ast}(-\vec\Tr(\blambda)(\I_d\otimes \Q)\Tr;\A)= -\vec\Tr(\A)(\I_d\otimes \Q)\Tr \vec(\blambda) + \frac{1}{4} \vec\Tr(\blambda)(\I_d\otimes \Q)(\I_d\otimes \Q)\Tr\vec(\blambda)$. The difference between $\tilde{\X}_\ast$ and $\X_\ast$ is bounded according to the following theorem:
\begin{theorem}
\label{theorem_x_bound_convex_clustering}
When the data matrix $\A$ evolves to be $\A+\bDelta$, and Algorithm \ref{algo_evolution_detected_regularization} is run to perform convex clustering, the difference between $\X_\ast$ obtained by Algorithm \ref{algo_evolution_detected_regularization} based on $\A$ and the true solution $\tilde{\X}_\ast$ based on $\A+\bDelta$ is bounded as
\begin{align}
\nonumber
& \vec\Tr(\A)\lrincir{\vec(\tilde{\X}_\ast) - \vec(\X_\ast)} \\ \nonumber 
\le & \vec\Tr(\A)\vec(\bDelta) + \frac{c}{2} \\ \nonumber
& + \frac{1}{2\beta}\lrnorm{\vec(\bDelta)}\vec\Tr(\A+\bDelta)\vec(\A+\bDelta).
\end{align}
\end{theorem}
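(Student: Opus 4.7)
The plan is to specialize the meta-framework \eqref{equa_difference_lambda_hat_control_c}--\eqref{equa_lambda_hat_difference} to convex clustering, where $f(\X;\A)=\lrnorm{\X-\A}_F^2$, $\nabla f(\X;\A)=2(\vec(\X)-\vec(\A))$, and the conjugate has the explicit quadratic form $f^{\ast}(-\vec\Tr(\blambda)(\I_d\otimes\Q);\A)= -\vec\Tr(\A)(\I_d\otimes\Q)\Tr\vec(\blambda)+\tfrac{1}{4}\lrnorm{(\I_d\otimes\Q)\Tr\vec(\blambda)}_2^2$. For brevity I will write $\z_\blambda := (\I_d\otimes\Q)\Tr\vec(\blambda)$; the target is to upper bound $\vec\Tr(\A)(\vec(\tilde{\X}_\ast)-\vec(\X_\ast))$ by $\vec\Tr(\A)\vec(\bDelta)+\tfrac{c}{2}+\tfrac{1}{2\beta}\lrnorm{\vec(\bDelta)}\vec\Tr(\A+\bDelta)\vec(\A+\bDelta)$.

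First I would apply \eqref{equa_minimizer_primal} to both primal--dual pairs: $2(\vec(\X_\ast)-\vec(\A))+\z_{\blambda_\ast}=\0$ and $2(\vec(\tilde{\X}_\ast)-\vec(\A+\bDelta))+\z_{\tilde{\blambda}_\ast}=\0$. Subtracting yields the key linearization $\vec(\tilde{\X}_\ast)-\vec(\X_\ast)=\vec(\bDelta)-\tfrac{1}{2}(\z_{\tilde{\blambda}_\ast}-\z_{\blambda_\ast})$, and pre-multiplying by $\vec\Tr(\A)$ reduces the problem to controlling $\tfrac{1}{2}\vec\Tr(\A)(\z_{\blambda_\ast}-\z_{\tilde{\blambda}_\ast})$ from above.

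Next I would substitute the explicit conjugate formula into \eqref{equa_difference_lambda_hat_control_c} to obtain $\vec\Tr(\A)\z_{\blambda_\ast}-\vec\Tr(\A+\bDelta)\z_{\tilde{\blambda}_\ast}\le \tfrac{1}{4}(\lrnorm{\z_{\blambda_\ast}}_2^2-\lrnorm{\z_{\tilde{\blambda}_\ast}}_2^2)+c$. Splitting $\vec\Tr(\A)=\vec\Tr(\A+\bDelta)-\vec\Tr(\bDelta)$ inside $\vec\Tr(\A)\z_{\tilde{\blambda}_\ast}$ rewrites the left-hand side as $\vec\Tr(\A)(\z_{\blambda_\ast}-\z_{\tilde{\blambda}_\ast})-\vec\Tr(\bDelta)\z_{\tilde{\blambda}_\ast}$; after dividing by two and combining with the identity from the previous step, the task reduces to bounding $\tfrac{1}{2}\vec\Tr(\bDelta)\z_{\tilde{\blambda}_\ast}$ together with a residual quadratic contribution.

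The quantitative core is the a priori bound on $\lrnorm{\z_{\tilde{\blambda}_\ast}}$. Since $\0\in\Ccal$ and $\tilde{\blambda}_\ast$ minimizes the regularized dual at $\A+\bDelta$, evaluating the objective at $\0$ and rearranging yields $\tfrac{1}{4}\lrnorm{\z_{\tilde{\blambda}_\ast}}_2^2+\beta\lrnorm{\z_{\tilde{\blambda}_\ast}}_s \le \vec\Tr(\A+\bDelta)\z_{\tilde{\blambda}_\ast}$. Applying Young's inequality $\vec\Tr(\A+\bDelta)\z_{\tilde{\blambda}_\ast}\le \tfrac{1}{4}\lrnorm{\z_{\tilde{\blambda}_\ast}}_2^2 + \lrnorm{\vec(\A+\bDelta)}_2^2$ cancels the quadratic piece and delivers $\lrnorm{\z_{\tilde{\blambda}_\ast}}_s \le \tfrac{1}{\beta}\vec\Tr(\A+\bDelta)\vec(\A+\bDelta)$. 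Cauchy--Schwarz then gives $\tfrac{1}{2}\vec\Tr(\bDelta)\z_{\tilde{\blambda}_\ast}\le \tfrac{1}{2\beta}\lrnorm{\vec(\bDelta)}\vec\Tr(\A+\bDelta)\vec(\A+\bDelta)$, which is precisely the third term in the target. I expect the principal obstacle to be disposing of the residual $\tfrac{1}{8}(\lrnorm{\z_{\blambda_\ast}}_2^2-\lrnorm{\z_{\tilde{\blambda}_\ast}}_2^2)$ produced by the conjugate substitution; my plan is to argue that the analogue of the optimality bound applied at $\blambda_\ast$ versus $\0$ makes this residual dominated, so that it can safely be dropped from the upper bound without introducing additional $1/\beta^2$ terms that would spoil the announced rate.
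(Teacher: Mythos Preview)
Your plan mirrors the paper's proof almost step for step: the paper also linearizes via \eqref{equa_minimizer_primal} to get $\vec(\tilde{\X}_\ast)-\vec(\X_\ast)=\vec(\bDelta)+\tfrac{1}{2}(\z_{\tilde{\blambda}_\ast}-\z_{\blambda_\ast})$, then applies \eqref{equa_difference_lambda_hat_control_c} together with a separate lemma (Lemma~\ref{lemma_convex_clustering}, proved precisely by your comparison-with-$\0$ argument and yielding $\lrnorm{\z_{\tilde{\blambda}_\ast}}_s\le\tfrac{1}{\beta}\vec\Tr(\A+\bDelta)\vec(\A+\bDelta)$), and closes with Cauchy--Schwarz. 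Regarding the obstacle you flag, the paper simply asserts $c\ge\vec\Tr(\A)\z_{\blambda_\ast}-\vec\Tr(\A+\bDelta)\z_{\tilde{\blambda}_\ast}$ directly ``according to \eqref{equa_difference_lambda_hat_control_c}'', silently discarding the quadratic residual $\tfrac{1}{4}(\lrnorm{\z_{\blambda_\ast}}_2^2-\lrnorm{\z_{\tilde{\blambda}_\ast}}_2^2)$; so the step you correctly identify as delicate is not actually resolved in the paper's proof---it is just omitted.
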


\subsection{Guarantee of model accuracy for ridge regression} 

In Section \ref{subsect_formulation_example}, we showed that $f(\X;\A) = \vec\Tr(\X)\bOmega \vec(\X)-2\y\Tr\bLambda \X$ and $f^{\ast}(-\vec\Tr(\blambda)(\I_{d}\otimes \Q); \A) = \frac{1}{4} \vec\Tr(\blambda)(\I_{d}\otimes \Q) \bOmega^{-1}(\I_{d}\otimes \Q)\Tr \vec(\blambda)+\frac{1}{2} \vec\Tr(\blambda)(\I_d\otimes \Q)\bOmega^{-1}\bLambda\y$ for the ridge regression task. Define two new notations, $\tilde{\bOmega} \in \RR^{(nd) \times (nd)}$ and $\tilde{\bLambda} \in \RR^{n \times (nd)}$, as 
\begin{align}
\nonumber
\tilde{\bOmega} := \text{diag}(\vec(\A+\bDelta))\text{diag}(\vec(\A+\bDelta)) + \gamma \I_{n d },
\end{align} and  
\begin{align}
\nonumber
\tilde{\bLambda} := (\mathbf{1}_{1\times d}\otimes \I_n)\text{diag}(\vec(\A+\bLambda)).
\end{align}

According to \eqref{equa_difference_lambda_hat_control_c}, we have
\begin{align}
\nonumber
c \ge & \frac{1}{4}\vec\Tr(\tilde{\blambda}_\ast)(\I_{d}\otimes \Q)\tilde{\bOmega}^{-1}(\I_{d}\otimes \Q)\Tr \vec(\tilde{\blambda}_\ast) \\ \label{equa_x_bound_ridge_regression}
& - \frac{1}{4}\vec\Tr(\blambda)(\I_{d}\otimes \Q)\bOmega^{-1}(\I_{d}\otimes \Q)\Tr \vec(\blambda).
\end{align} 

Define a new notation $\bPhi\in\RR^{(nd)\times (nd)}$ as 
\begin{align}
\nonumber
\bPhi:= 2\text{diag}(\vec(\bDelta))\lrincir{\mathbf{1}_{ d \times d }\otimes\I_n}\text{diag}(\vec(\A)).
\end{align}  

\begin{theorem}
\label{theorem_x_bound_ridge_regression}
When the data matrix $\A$ evolves to be $\A+\bDelta$, and Algorithm \ref{algo_evolution_detected_regularization} is run to perform ridge regression, the difference between $\X_\ast$ obtained by Algorithm \ref{algo_evolution_detected_regularization} based on $\A$ and the true solution $\tilde{\X}_\ast$ based on $\A+\bDelta$ is bounded as
\begin{align}
\nonumber
&\vec\Tr(\tilde{\X}_\ast)\bOmega \vec(\tilde{\X}_\ast) - \vec\Tr(\X_\ast)\bOmega\vec(\X_\ast) \\ \nonumber
\le &  \frac{1}{16\beta^2}\lrnorm{\y\Tr\tilde{\bOmega}^{-1}\tilde{\bLambda}\y}^2 \lrnorm{\tilde{\bOmega}^{-1}\bPhi \tilde{\bOmega}^{-1}} \\ \nonumber 
& + \frac{1}{16\beta^2}\lrnorm{\y\Tr\bOmega^{-1}\bLambda\y}^2\lrnorm{\bOmega^{-1} \bPhi\bOmega^{-1}}+4c.
\end{align}
\end{theorem}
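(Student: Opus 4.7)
The plan is to specialize the meta-framework from \eqref{equa_difference_lambda_hat_control_c}--\eqref{equa_lambda_hat_difference} to ridge regression, converting the dual-side inequality already displayed in \eqref{equa_x_bound_ridge_regression} into a primal-side bound on $\vec\Tr(\X)\bOmega\vec(\X)$ through the primal-dual identity $2\bOmega\vec(\X_\ast) - 2\bLambda\Tr\y + (\I_d\otimes\Q)\Tr\vec(\blambda_\ast) = 0$ derived in Section \ref{subsect_formulation_example}. Substituting the explicit ridge-regression form of $f^{\ast}$ into \eqref{equa_difference_lambda_hat_control_c}, both for the original data $\A$ and for the perturbed data $\A+\bDelta$, and then applying the inverse-perturbation identity $\tilde{\bOmega}^{-1} - \bOmega^{-1} = -\tilde{\bOmega}^{-1}(\tilde{\bOmega}-\bOmega)\bOmega^{-1}$ immediately identifies the matrices $\tilde{\bOmega}^{-1}\bPhi\tilde{\bOmega}^{-1}$ and $\bOmega^{-1}\bPhi\bOmega^{-1}$ that appear in the statement, since $\tilde{\bOmega} - \bOmega$ equals $\bPhi$ up to a symmetric higher-order term in $\bDelta$.

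Next I would apply the primal-dual relation to rewrite $\tfrac{1}{4}\vec\Tr(\blambda_\ast)(\I_d\otimes\Q)\bOmega^{-1}(\I_d\otimes\Q)\Tr\vec(\blambda_\ast)$ as $\vec\Tr(\X_\ast)\bOmega\vec(\X_\ast)$ plus lower-order data-linear terms involving $\y\Tr\bLambda\vec(\X_\ast)$, and similarly for the tilded quantities. The two quadratic forms that remain have the shape $\vec\Tr(\blambda_\ast)(\I_d\otimes\Q)M(\I_d\otimes\Q)\Tr\vec(\blambda_\ast)$ with $M \in \{\bOmega^{-1}\bPhi\bOmega^{-1},\,\tilde{\bOmega}^{-1}\bPhi\tilde{\bOmega}^{-1}\}$, and the elementary inequality $\lvert\x\Tr M\x\rvert \le \lrnorm{M}\lrnorm{\x}^2$ reduces the task to bounding $\lrnorm{(\I_d\otimes\Q)\Tr\vec(\blambda_\ast)}$. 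This is precisely what the regularizer $\beta\lrnorm{(\I_d\otimes\Q)\Tr\vec(\blambda)}_s$ in \eqref{equa_unware_regularized_dual_objective} was designed to control: the first-order optimality condition for $\blambda_\ast$ forces $\lrnorm{(\I_d\otimes\Q)\Tr\vec(\blambda_\ast)}$ to be at most $\tfrac{1}{2\beta}\lrnorm{\y\Tr\bOmega^{-1}\bLambda\y}$, with the analogous tilded bound. Squaring yields the $\tfrac{1}{16\beta^2}$ prefactor in the theorem.

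The main obstacle will be twofold. First, extracting the $1/\beta$ estimate on $\lrnorm{(\I_d\otimes\Q)\Tr\vec(\blambda_\ast)}$ requires handling the subdifferential of the non-smooth regularizer $\lrnorm{\cdot}_s$ uniformly in $s\in\{1,2,\infty\}$, which calls for the dual-norm relations summarized in Table \ref{table_constraint_norms_types}. Second, the bookkeeping for the cross-terms produced when swapping $\bOmega^{-1}$ with $\tilde{\bOmega}^{-1}$ inside the data-linear pieces is delicate: these residuals must be verified to be of order $c$ so that they can be absorbed into the additive $4c$ slack, rather than of order $1$. Once both points are handled, summing the two symmetric contributions (from the $\A$ regime and the $\A+\bDelta$ regime) and collecting all $c$-slack terms produces the stated bound.
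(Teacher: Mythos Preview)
Your overall plan matches the paper's: start from \eqref{equa_x_bound_ridge_regression}, pass between dual and primal via the KKT relation \eqref{equa_minimizer_primal}, isolate a $\bPhi$-perturbation term, and bound the residual quadratic forms using the $\beta$-regularizer. Two points differ from the paper's execution, and one of them is worth rethinking.

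\emph{Order of operations.} You propose to stay on the dual side and use the resolvent identity $\tilde{\bOmega}^{-1}-\bOmega^{-1}=-\tilde{\bOmega}^{-1}(\tilde{\bOmega}-\bOmega)\bOmega^{-1}$. That identity produces the \emph{asymmetric} matrix $\tilde{\bOmega}^{-1}\bPhi\bOmega^{-1}$, not the two symmetric matrices $\tilde{\bOmega}^{-1}\bPhi\tilde{\bOmega}^{-1}$ and $\bOmega^{-1}\bPhi\bOmega^{-1}$ appearing in the statement, so the ``immediately identifies'' claim is optimistic. The paper instead first substitutes the KKT relation into \eqref{equa_x_bound_ridge_regression} to obtain the primal forms $\vec\Tr(\tilde{\X}_\ast)\tilde{\bOmega}\vec(\tilde{\X}_\ast)$ and $\vec\Tr(\X_\ast)\bOmega\vec(\X_\ast)$, writes $\tilde{\bOmega}=\bOmega+\bPhi$, and then pushes \emph{only} the $\bPhi$-pieces back to dual coordinates via $\vec(\tilde{\X}_\ast)\propto\tilde{\bOmega}^{-1}(\I_d\otimes\Q)\Tr\vec(\tilde{\blambda}_\ast)$ (respectively $\bOmega^{-1}$ for $\X_\ast$). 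This round trip is what naturally yields the symmetric sandwich matrices. As a side effect, the paper's route also avoids the cross-term bookkeeping you flag as an obstacle: it simply uses $\nabla f(\X;\A)=2\bOmega\vec(\X)$ without carrying the $-2\bLambda\Tr\y$ linear piece, so the terms $\y\Tr\bLambda\vec(\X_\ast)$ you anticipate never appear.

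\emph{The $\beta$-bound.} Your plan to extract $\lrnorm{(\I_d\otimes\Q)\Tr\vec(\blambda_\ast)}\le\tfrac{1}{2\beta}\lrnorm{\y\Tr\bOmega^{-1}\bLambda\y}$ from the first-order optimality condition and the subdifferential of $\lrnorm{\cdot}_s$ is unnecessarily delicate: the cone constraints $\lrnorm{\blambda_i}_q\le1$ contribute additional KKT multipliers, so the stationarity equation is not as clean as you suggest. The paper bypasses this entirely (Lemma~\ref{lemma_ridge_regression}): since $\blambda=\0$ is feasible with objective value $0$, the optimal value of \eqref{equa_unware_regularized_dual_objective} is nonpositive, whence $\beta\lrnorm{(\I_d\otimes\Q)\Tr\vec(\tilde{\blambda}_\ast)}_s\le -f^{\ast}(-\vec\Tr(\tilde{\blambda}_\ast)(\I_d\otimes\Q);\A+\bDelta)$, and the right side is maximized over all $\b$ to give $\tfrac{1}{4}\y\Tr\tilde{\bOmega}^{-1}\tilde{\bLambda}\y$. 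This ``evaluate at zero'' trick works uniformly in $s$ and $q$ with no subdifferential calculus, and you should use it instead.
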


%\frac{1}{4\beta}\y\Tr\tilde{\bOmega}^{-1}\bLambda\y
\begin{remark}
Given $\A$ and $\bDelta$, it is possible to obtain a relatively accurate solution for convex clustering and ridge regression by setting a small $c$ and a large $\beta$ in Algorithm  \ref{algo_evolution_detected_regularization}.
\end{remark}

\begin{figure*}[!t]
\setlength{\abovecaptionskip}{0pt}
\setlength{\belowcaptionskip}{0pt}
\centering 
\subfigure[\textit{iris}]{\includegraphics[width=0.49\columnwidth]{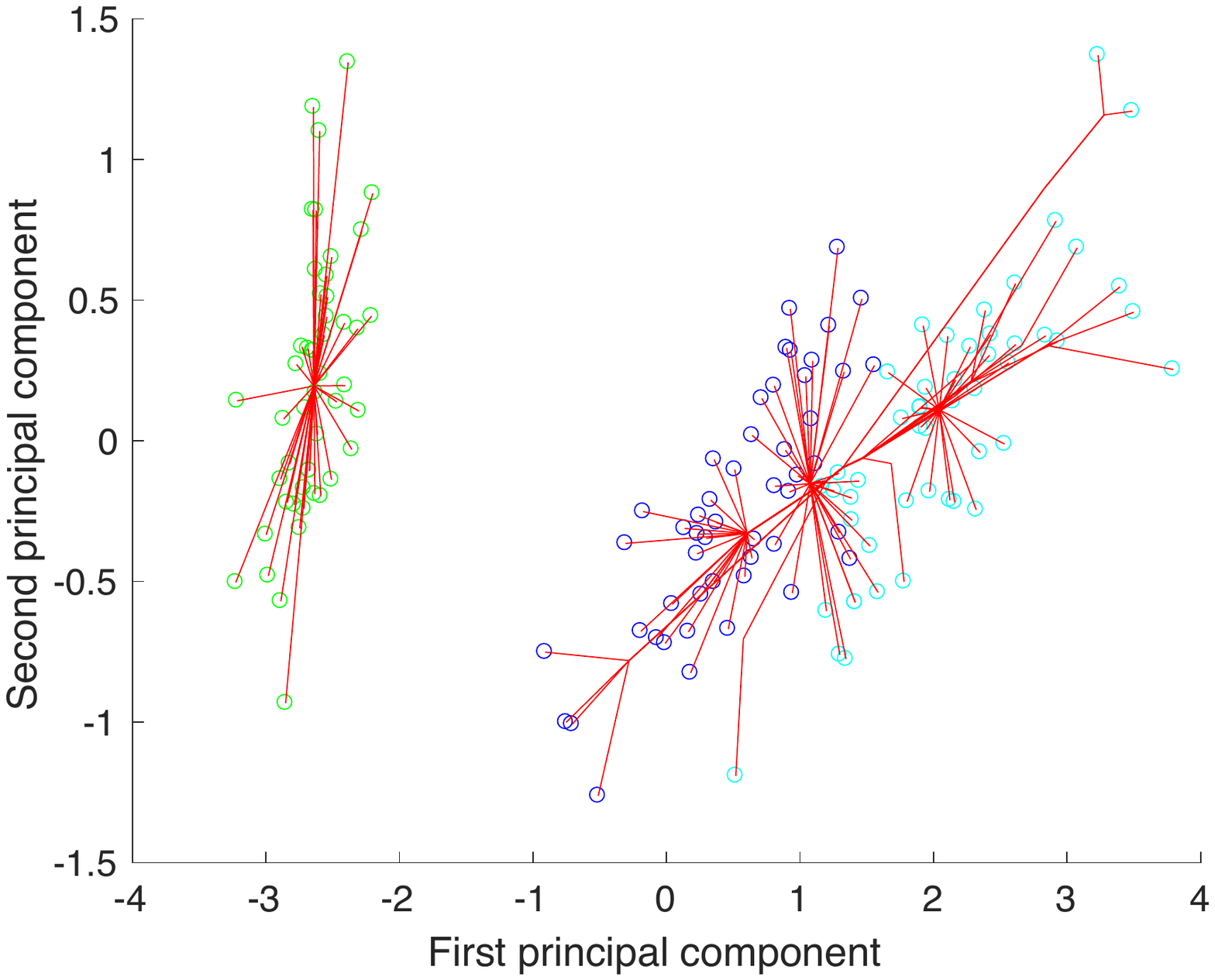}\label{figure_iris_clusterpath}}
\subfigure[\textit{moon}]{\includegraphics[width=0.49\columnwidth]{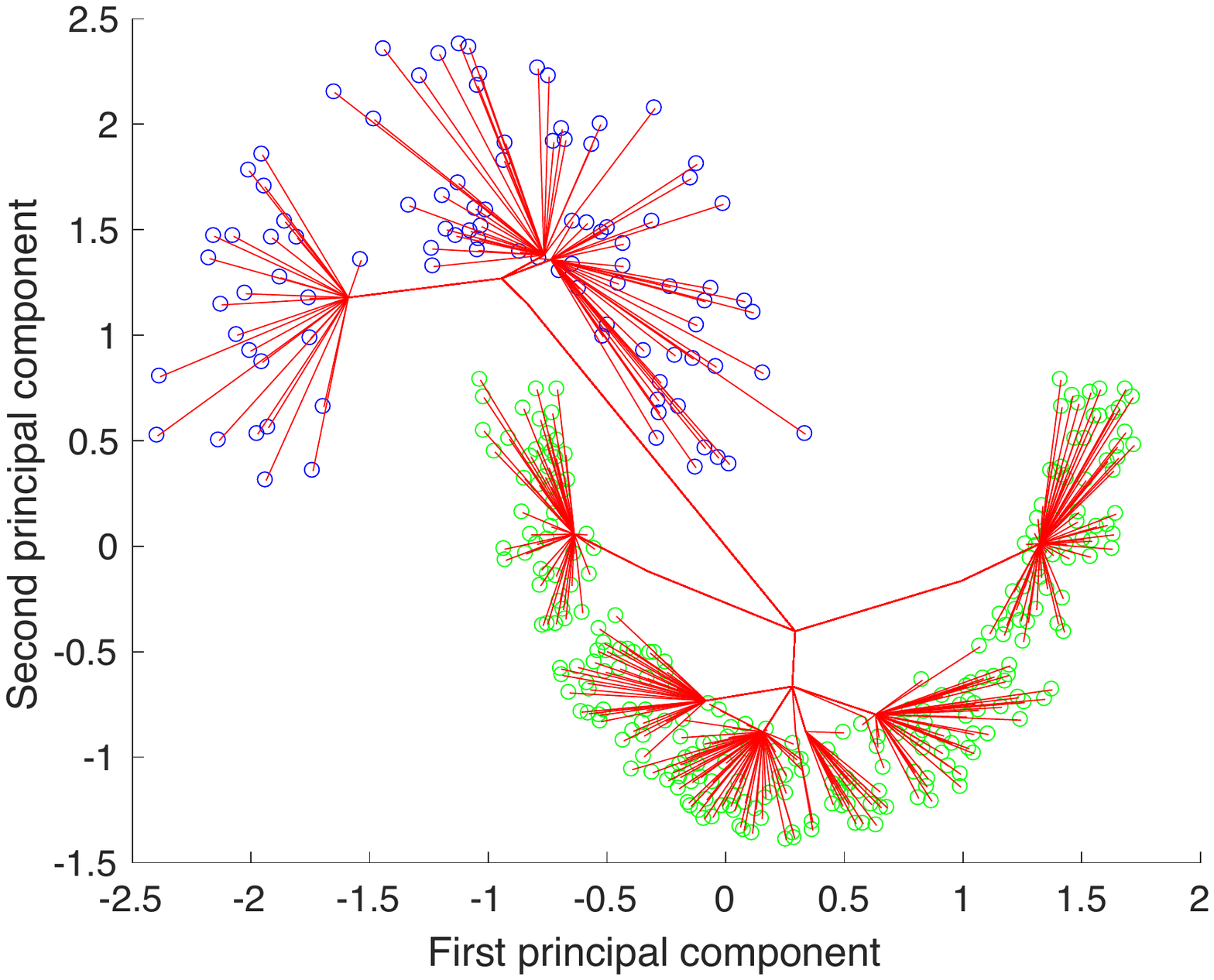}\label{figure_moons_clusterpath}}
\subfigure[\textit{segment}]{\includegraphics[width=0.48\columnwidth]{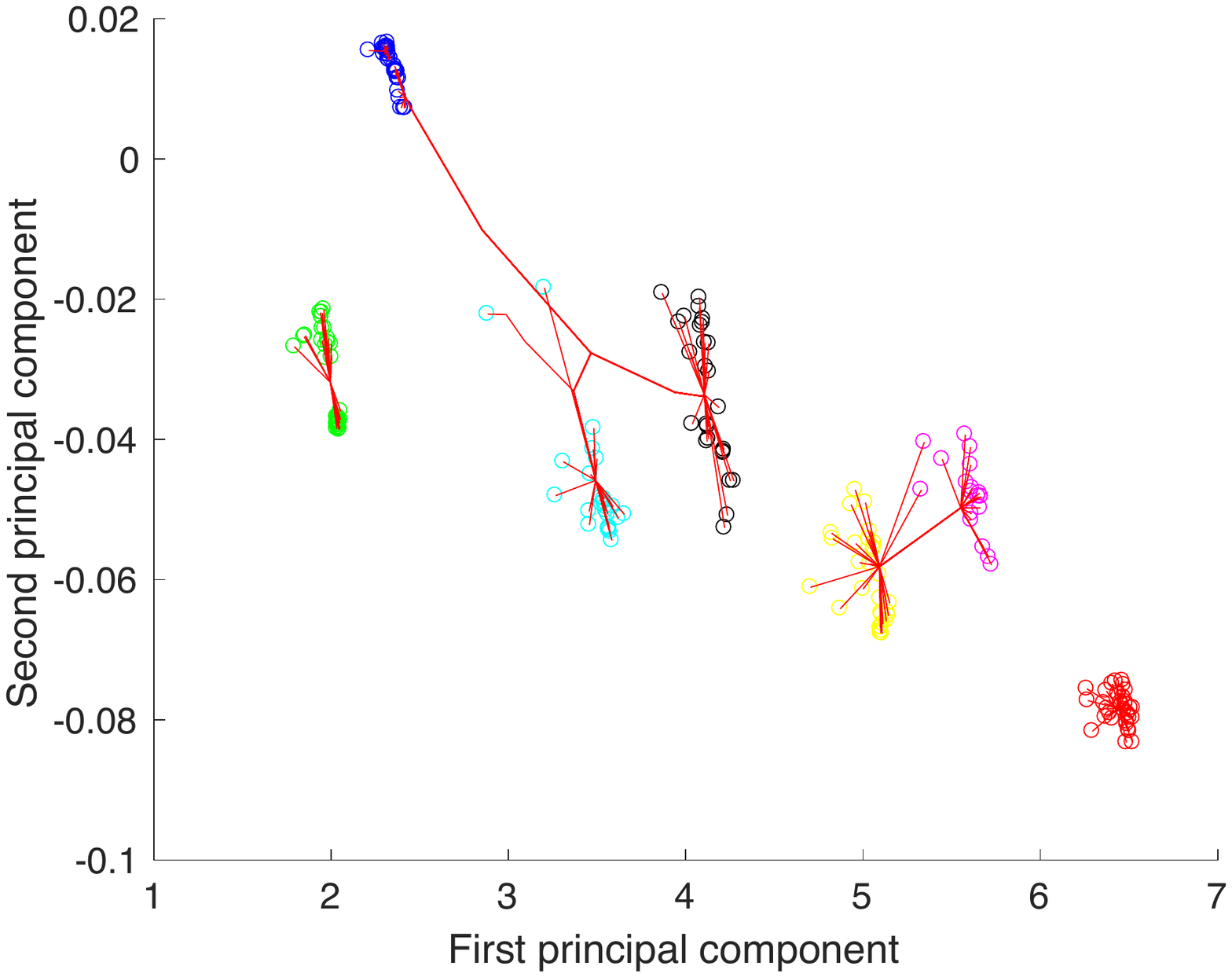}\label{figure_segment_clusterpath}}
\subfigure[\textit{svm-guide}]{\includegraphics[width=0.49\columnwidth]{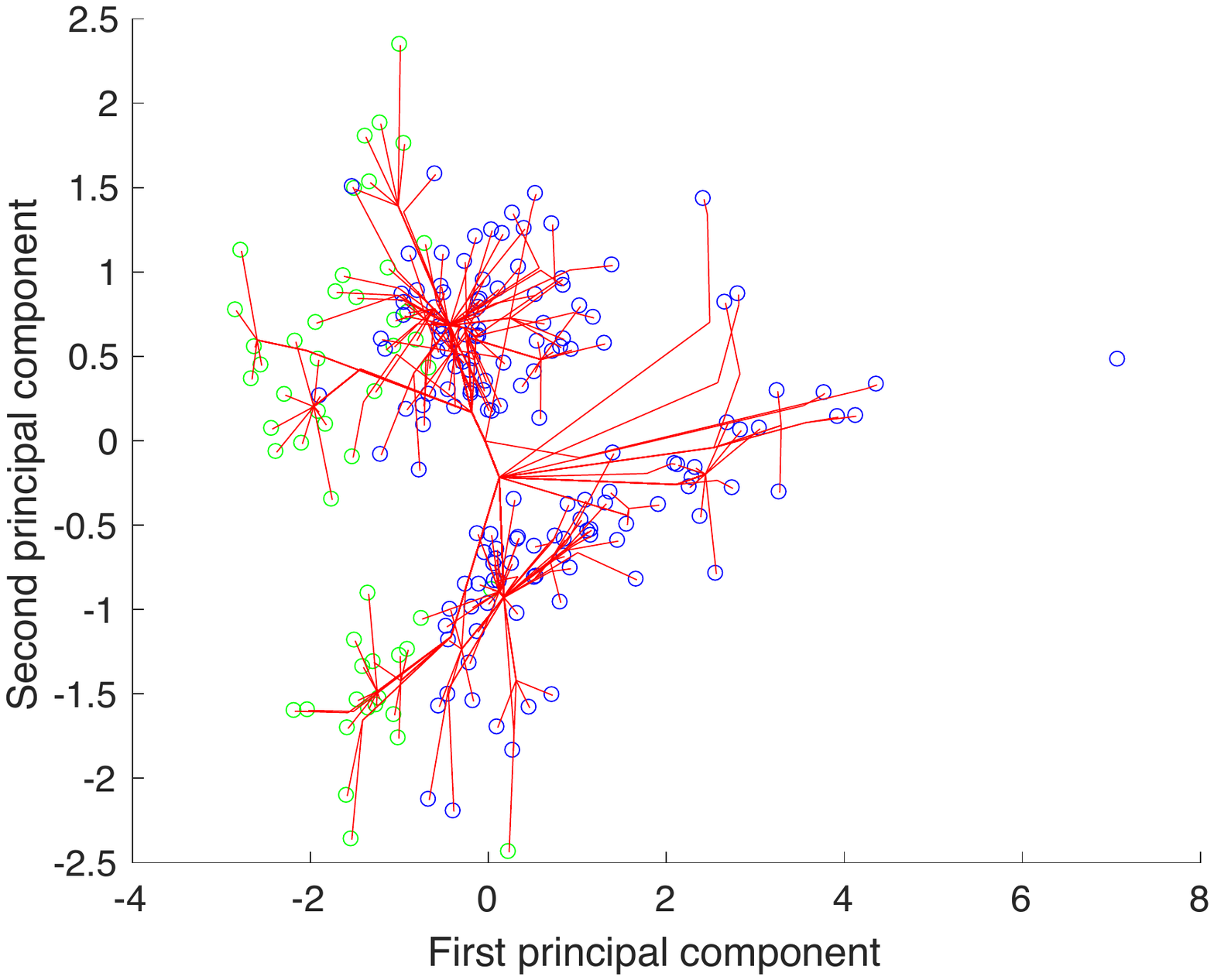}\label{figure_svmguide_clusterpath}}
\caption{Cluster paths obtained for the datasets \textit{iris}, \textit{moon}, \textit{segment} and \textit{svm-guide}. }
\label{figure_cluster_path}
\end{figure*}

\begin{figure*}[!t]
\setlength{\abovecaptionskip}{0pt}
\setlength{\belowcaptionskip}{0pt}
\centering 
\subfigure[\textit{iris}]{\includegraphics[width=0.49\columnwidth]{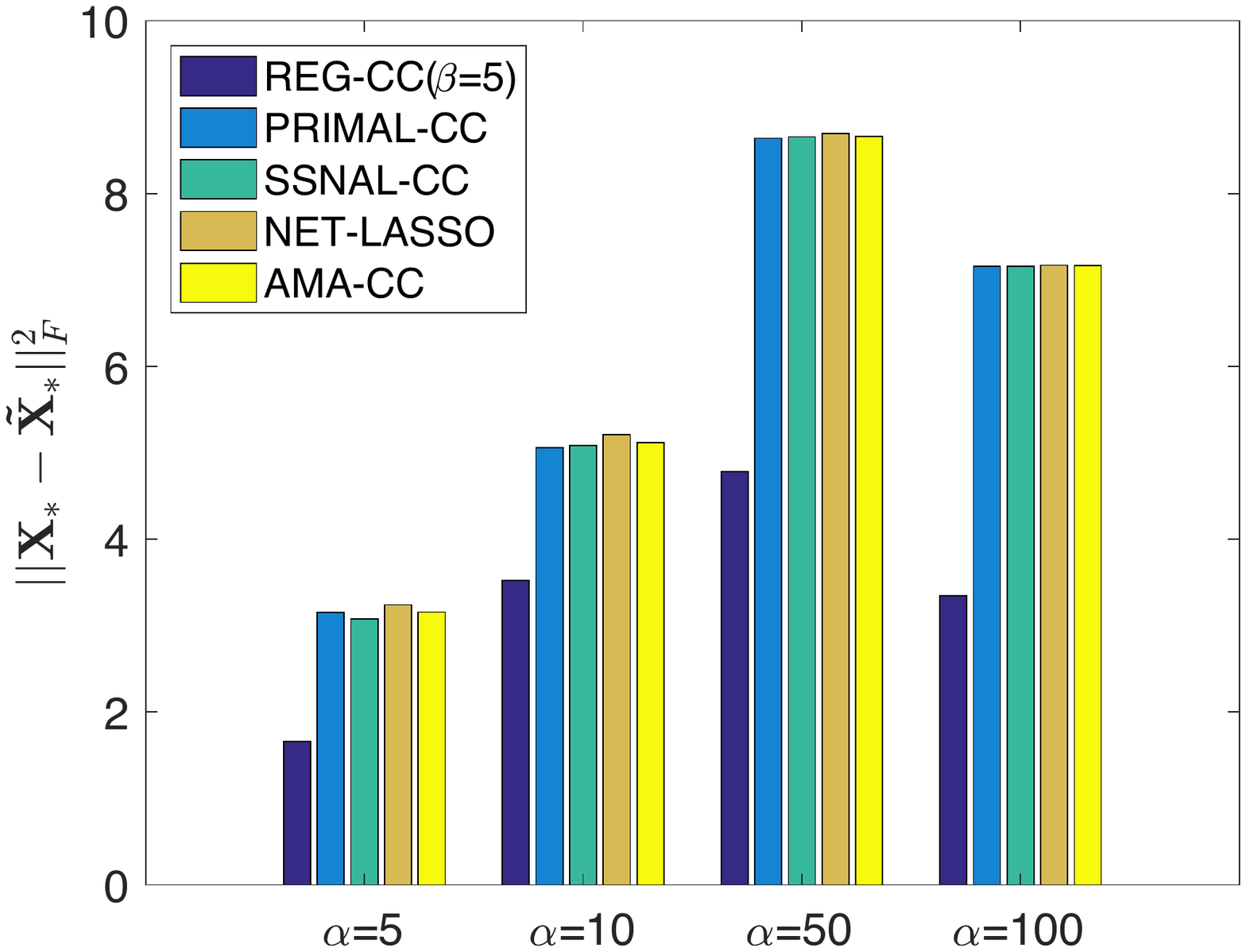}\label{figure_robustness_iris}}
\subfigure[\textit{moon}]{\includegraphics[width=0.49\columnwidth]{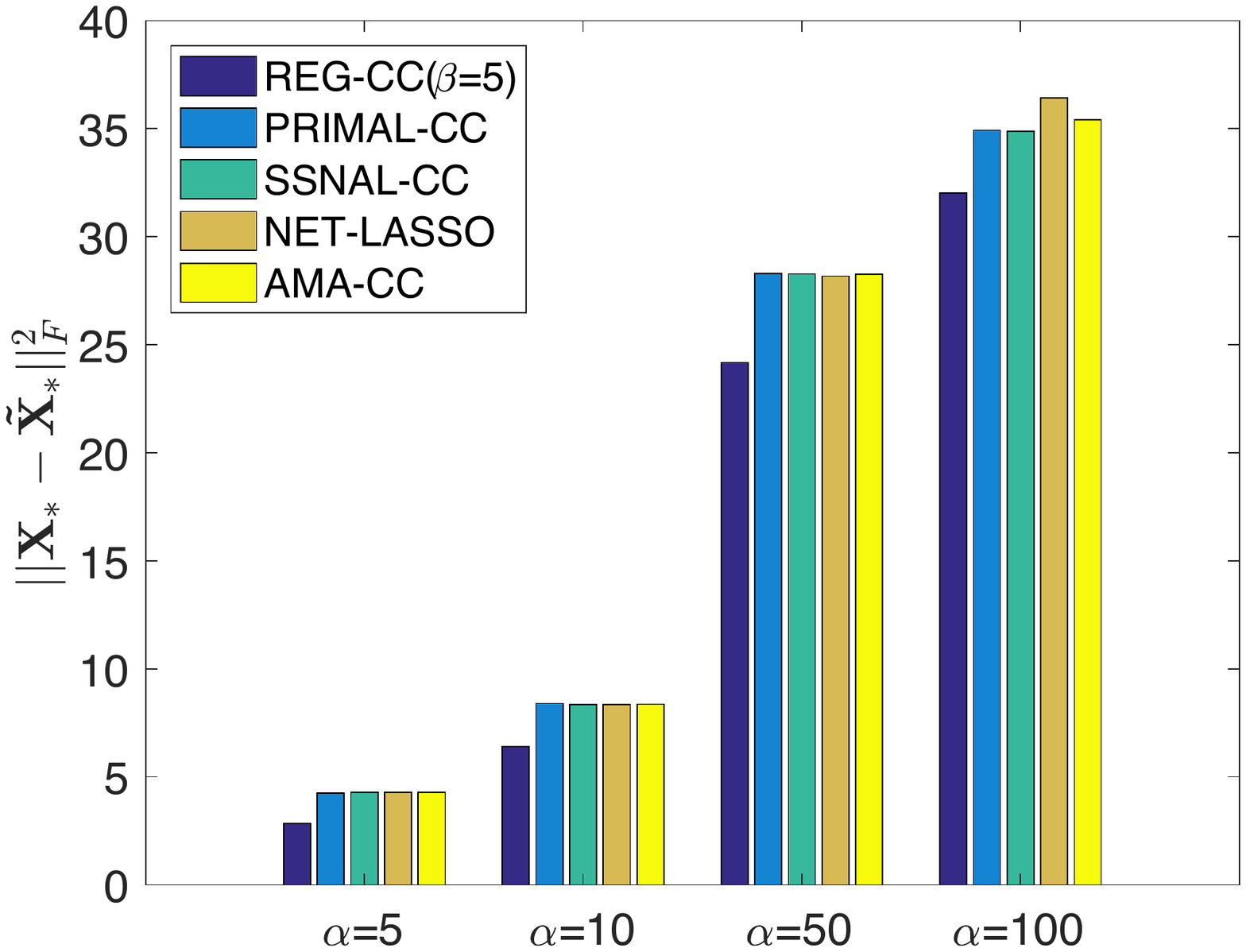}\label{figure_robustness_moons}}
\subfigure[\textit{segment}]{\includegraphics[width=0.49\columnwidth]{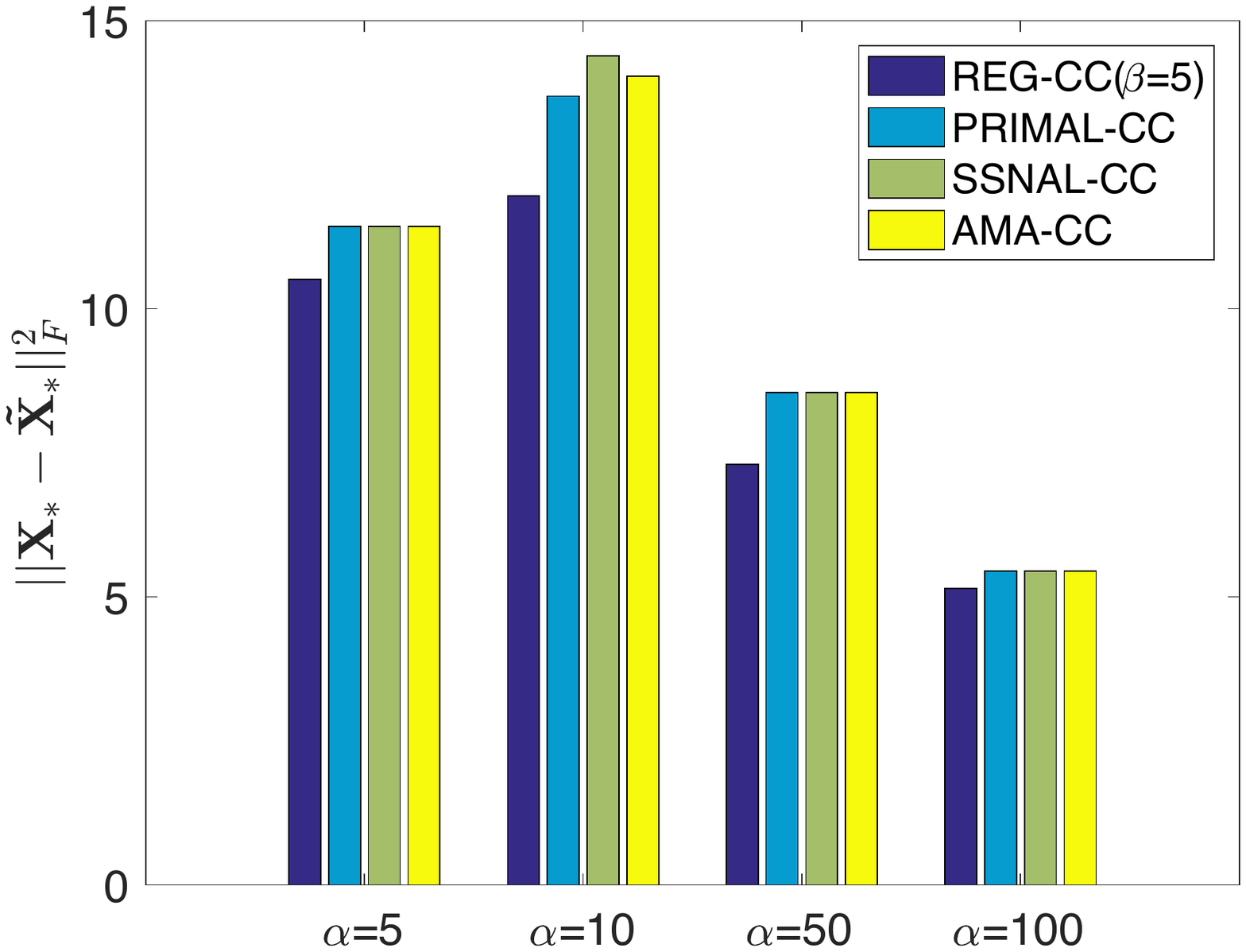}\label{figure_robustness_segment}}
\subfigure[\textit{svm-guide}]{\includegraphics[width=0.49\columnwidth]{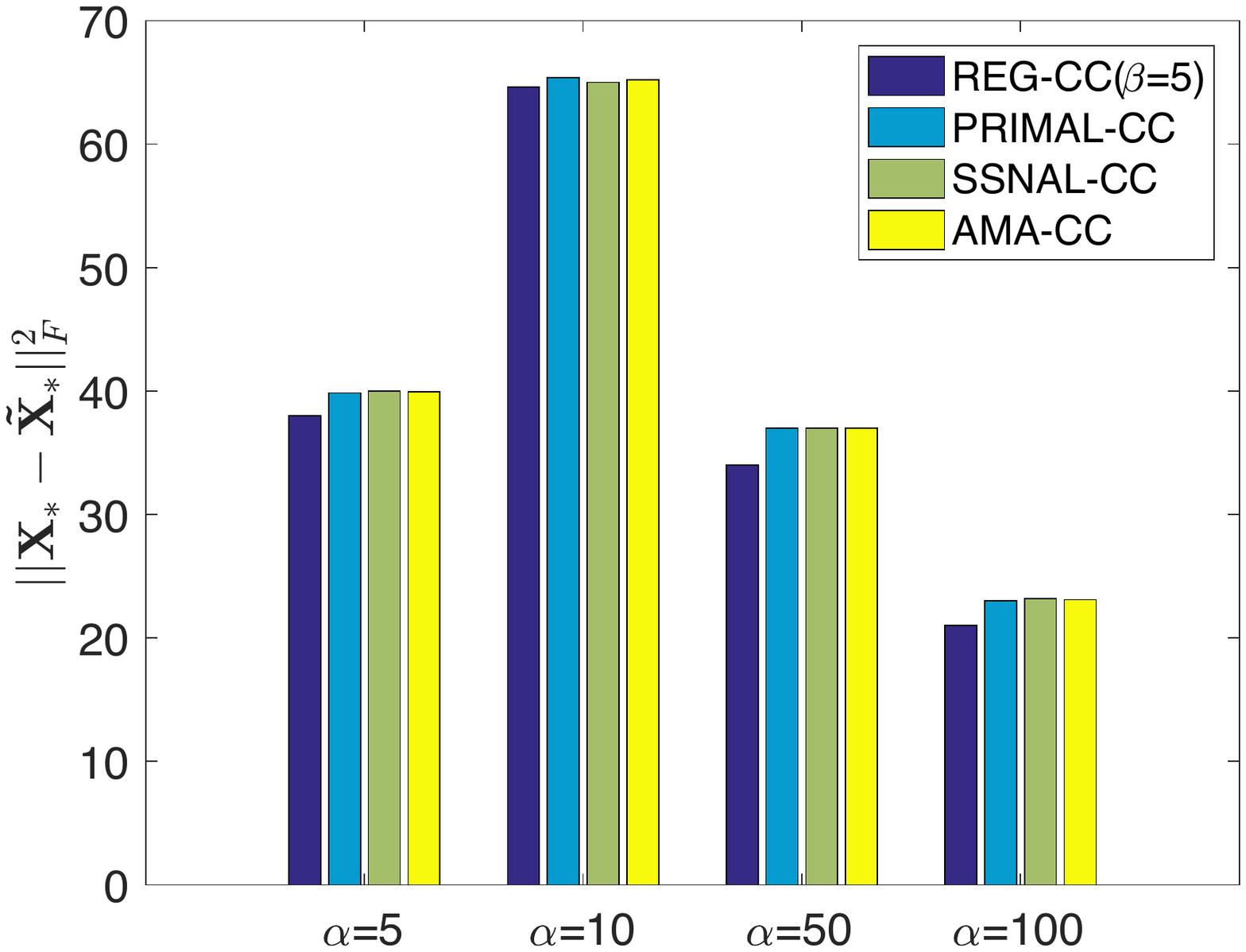}\label{figure_robustness_svmguide}}
\caption{When datasets evolve, our REG-CC method yields more accurate clustering results than those of its counterparts by varying $\alpha$.}
\label{figure_robustness_convex_clustering}
\end{figure*}

\section{Empirical studies}
\label{sect_experiment}
In the section, we first present the experimental settings. Then, we perform convex clustering, obtain the cluster path for video datasets, and perform the ridge regression tasks to evaluate our method. 

\subsection{Experimental settings}
The experiments are performed on a server equipped with an Intel Xeon(R) 32-core E5-2610 CPU and 48 GB of RAM. We implement all the algorithms by using Matlab 2016b and the CVX solver \cite{cvx_project}. 

We use six datasets: \textit{moon}\footnote{https://cs.joensuu.fi/sipu/datasets/jain.txt}, \textit{iris}\footnote{https://www.csie.ntu.edu.tw/$\sim$ cjlin/libsvmtools/datasets/multi\\ class.html\#iris}, \textit{segment}\footnote{https://www.csie.ntu.edu.tw/$\sim$ cjlin/libsvmtools/datasets/multi\\ class.html\#\textit{segment}}, \textit{svm-guide}\footnote{https://www.csie.ntu.edu.tw/$\sim$ cjlin/libsvmtools/datasets/binar\\ y.html\#svmguide1}, \textit{space\_ga}\footnote{https://www.csie.ntu.edu.tw/$\sim$ cjlin/libsvmtools/datasets/regre\\ ssion.html\#space\_ga}, and \textit{airfoil}\footnote{http://archive.ics.uci.edu/ml/datasets/Airfoil+Self-Noise}.  Given graph $\Gcal$, the weight for every edge is set to be inversely proportional to the distance between the vertices; this approach is widely used in \cite{Chi:2013ey,Tan:2015vr,Chen:2015kn}. Additionally, our method is compared with many existing methods. These existing methods are presented briefly as follows:
\begin{itemize}
\item PRIMAL-CC \cite{Lindsten2011Just} has been proposed to perform convex clustering. It solves the primal problem, i.e., \eqref{equa_basic_simultaneous_clustering_optimization_primal}, directly.
\item SSNAL-CC \cite{Yuan2018An} is a semi-smooth Newton-based augmented Lagrangian method. It has been proposed to perform convex clustering efficiently.
\item NET-LASSO \cite{Hallac:2015fy} has been proposed as a general framework for performing SCO for various tasks, e.g., convex clustering and ridge regression. 
\item AMA-CC \cite{Chi:2013ey} is an alternating minimization algorithm. It has been proposed to perform convex clustering efficiently.
\end{itemize}
We follow the above notation, whereby the minimizer of \eqref{equa_unware_regularized_dual_objective} for $\A$ is denoted by $\X_\ast$, and the true minimizer of \eqref{equa_unware_regularized_dual_objective} for $\A+\bDelta$ is denoted by $\tilde{\X}_\ast$. Their difference, i.e., $\lrnorm{\X_\ast - \tilde{\X}_\ast}_F^2$, is used to measure the approximation of $\X_\ast$ against $\tilde{\X}_\ast$. Here, $\lrnorm{\cdot}_F$ represents the Frobenius norm. The threshold $c$ in Algorithm \ref{algo_evolution_detected_regularization} is set to be $c=10$ by default.

\subsection{Convex clustering}

\begin{figure*}[!t]
\setlength{\abovecaptionskip}{0pt}
\setlength{\belowcaptionskip}{0pt}
\centering 
\subfigure[\textit{iris}]{\includegraphics[width=0.49\columnwidth]{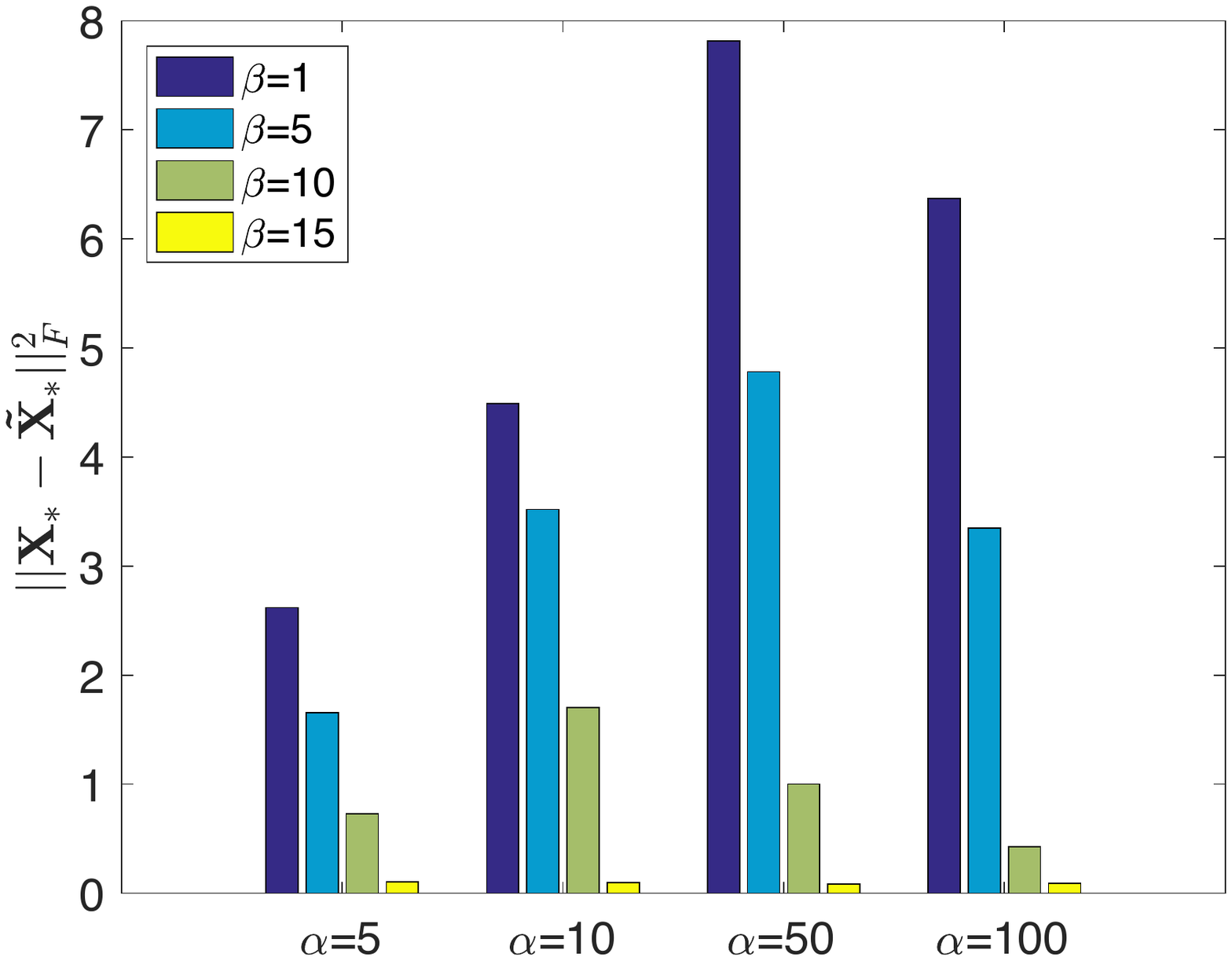}\label{figure_robustness_iris_beta}}
\subfigure[\textit{moon}]{\includegraphics[width=0.49\columnwidth]{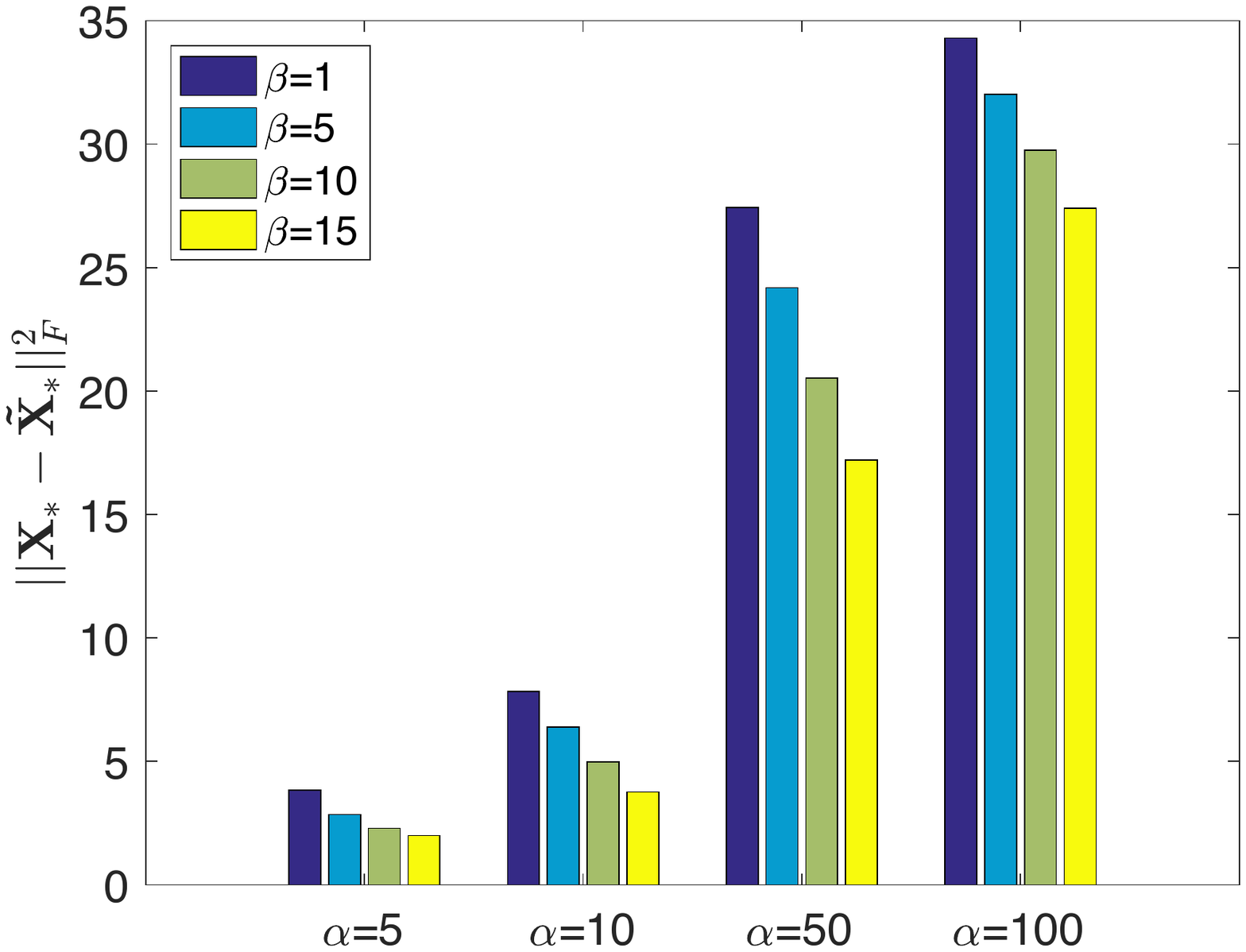}\label{figure_robustness_moons_beta}}
\subfigure[\textit{segment}]{\includegraphics[width=0.49\columnwidth]{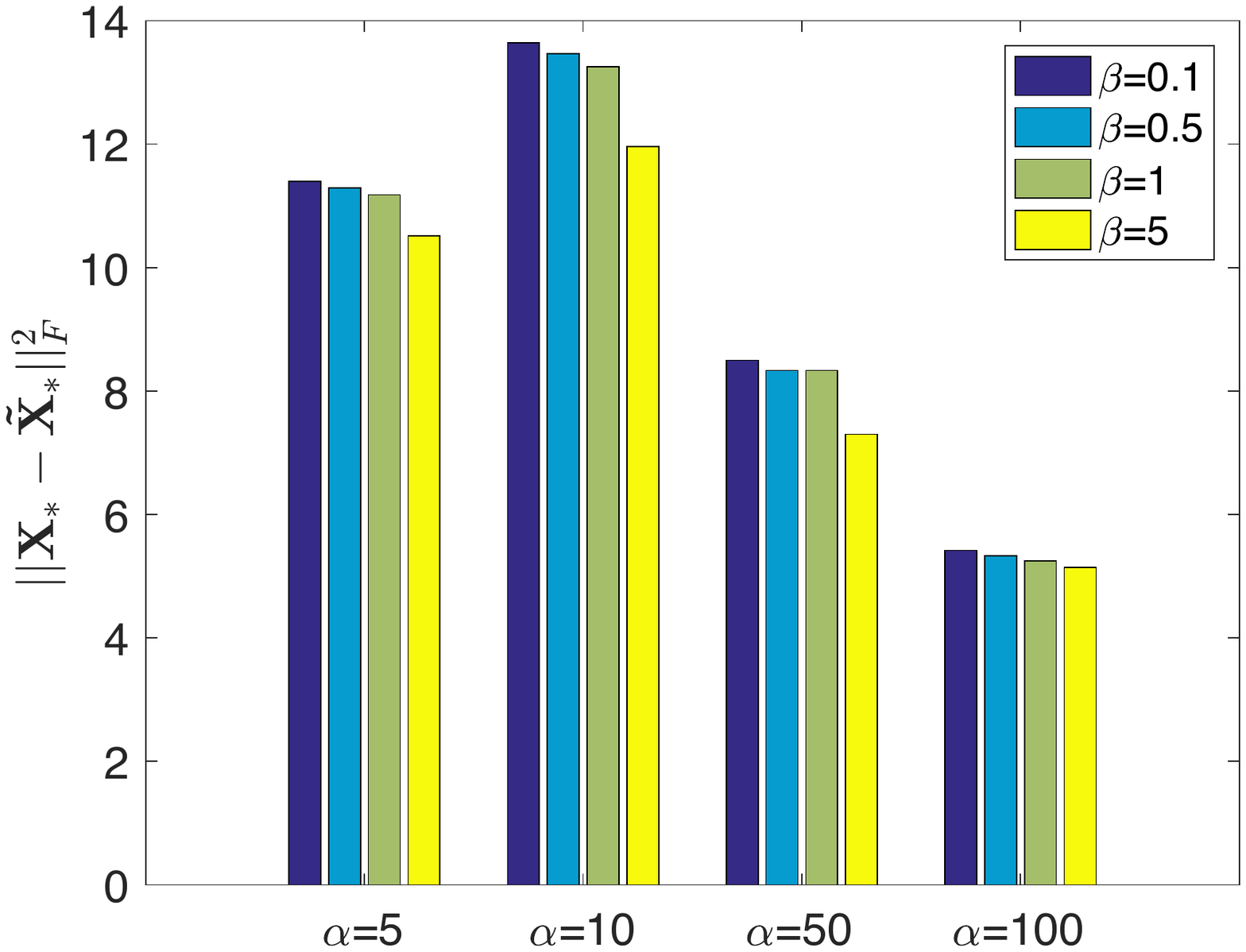}\label{figure_robustness_segment_beta}}
\subfigure[\textit{svm-guide}]{\includegraphics[width=0.49\columnwidth]{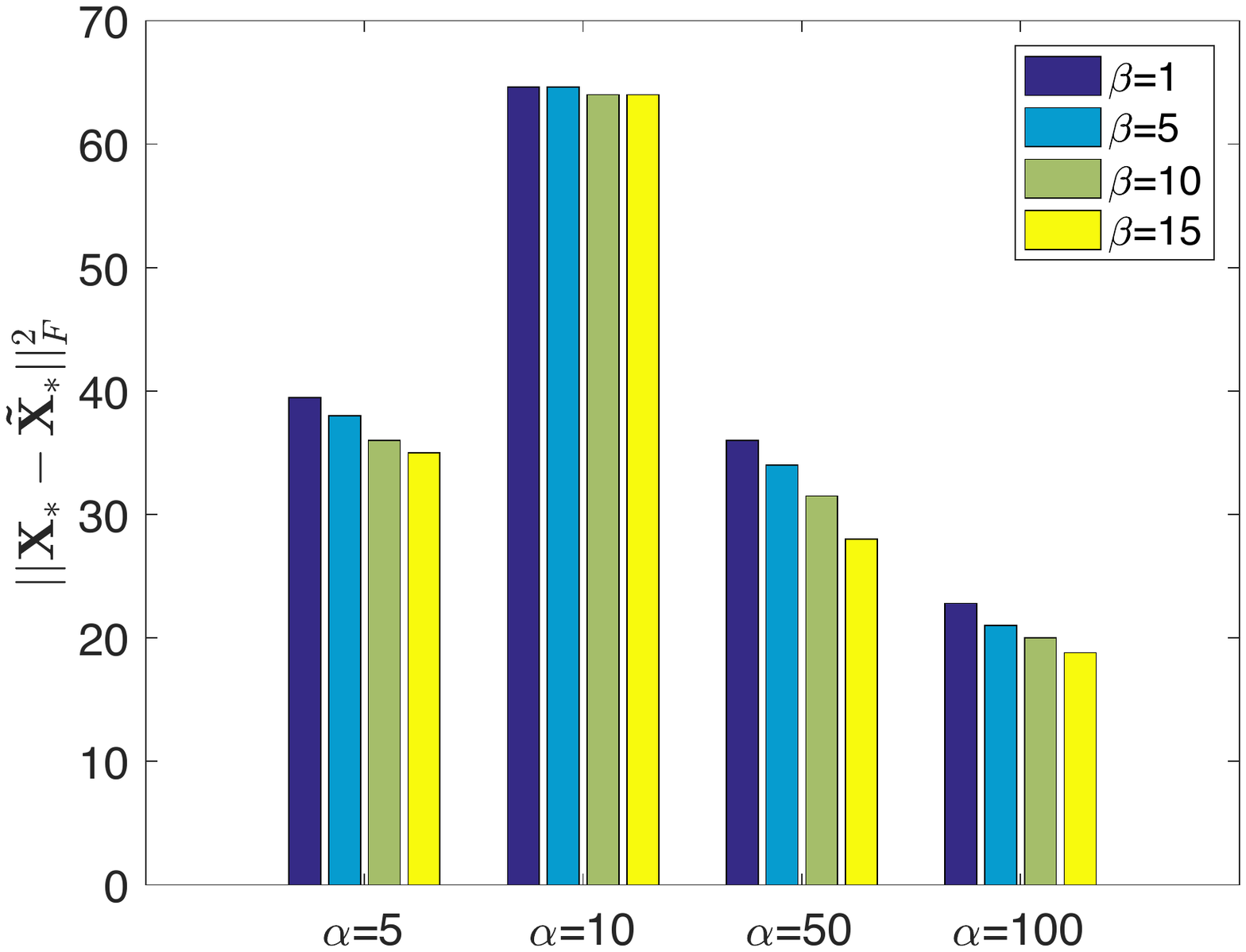}\label{figure_robustness_svmguide_beta}}
\caption{When datasets evolve, our REG-CC method recovers the clustering membership more accurately with a large $\beta$.}
\label{figure_robustness_beta}
\end{figure*}

\begin{figure*}[!t]
\setlength{\abovecaptionskip}{0pt}
\setlength{\belowcaptionskip}{0pt}
\centering 
\subfigure[\textit{iris}]{\includegraphics[width=0.49\columnwidth]{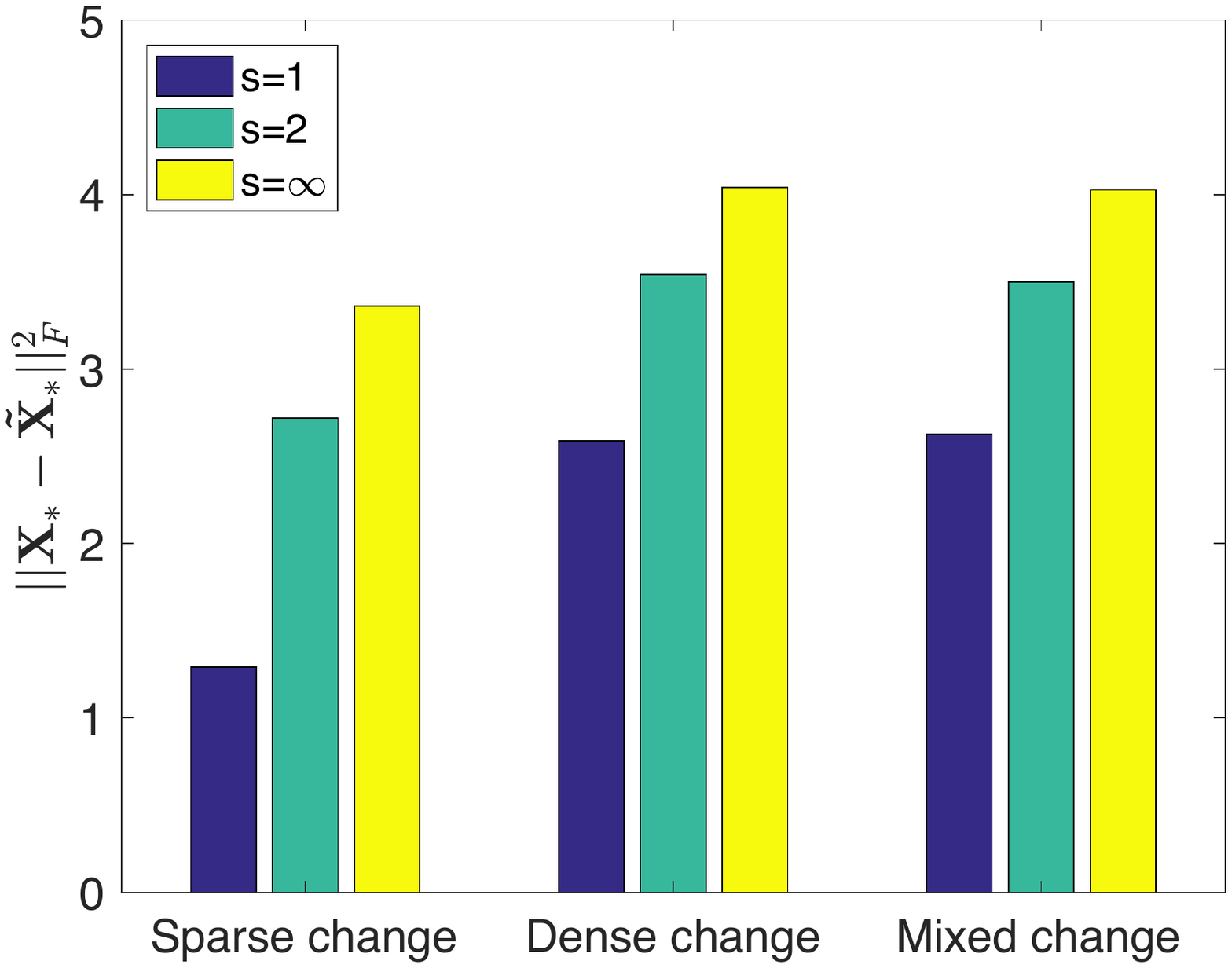}\label{figure_robustness_noise_type_iris}}
\subfigure[\textit{moon}]{\includegraphics[width=0.49\columnwidth]{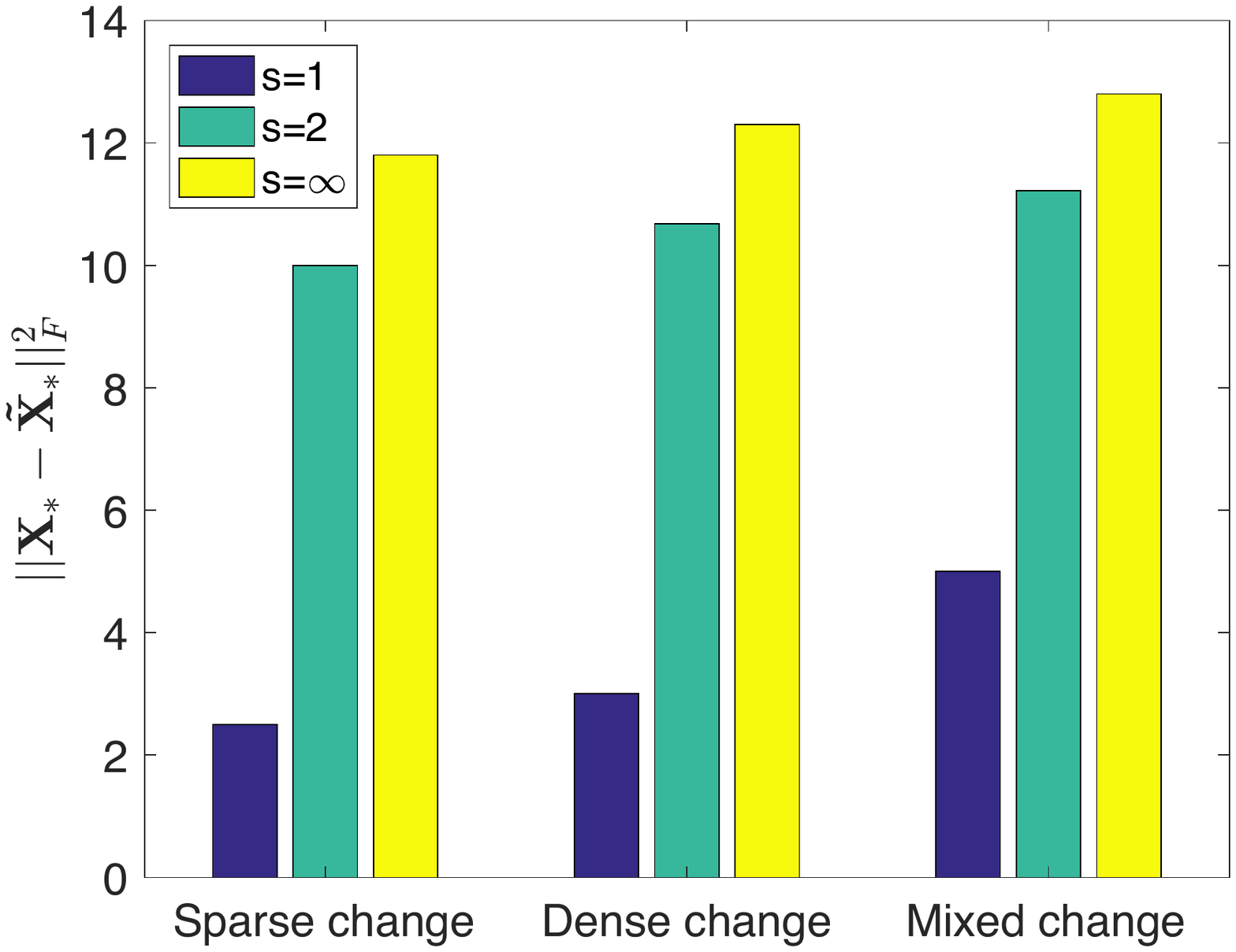}\label{figure_robustness_noise_type_moons}}
\subfigure[\textit{segment}]{\includegraphics[width=0.49\columnwidth]{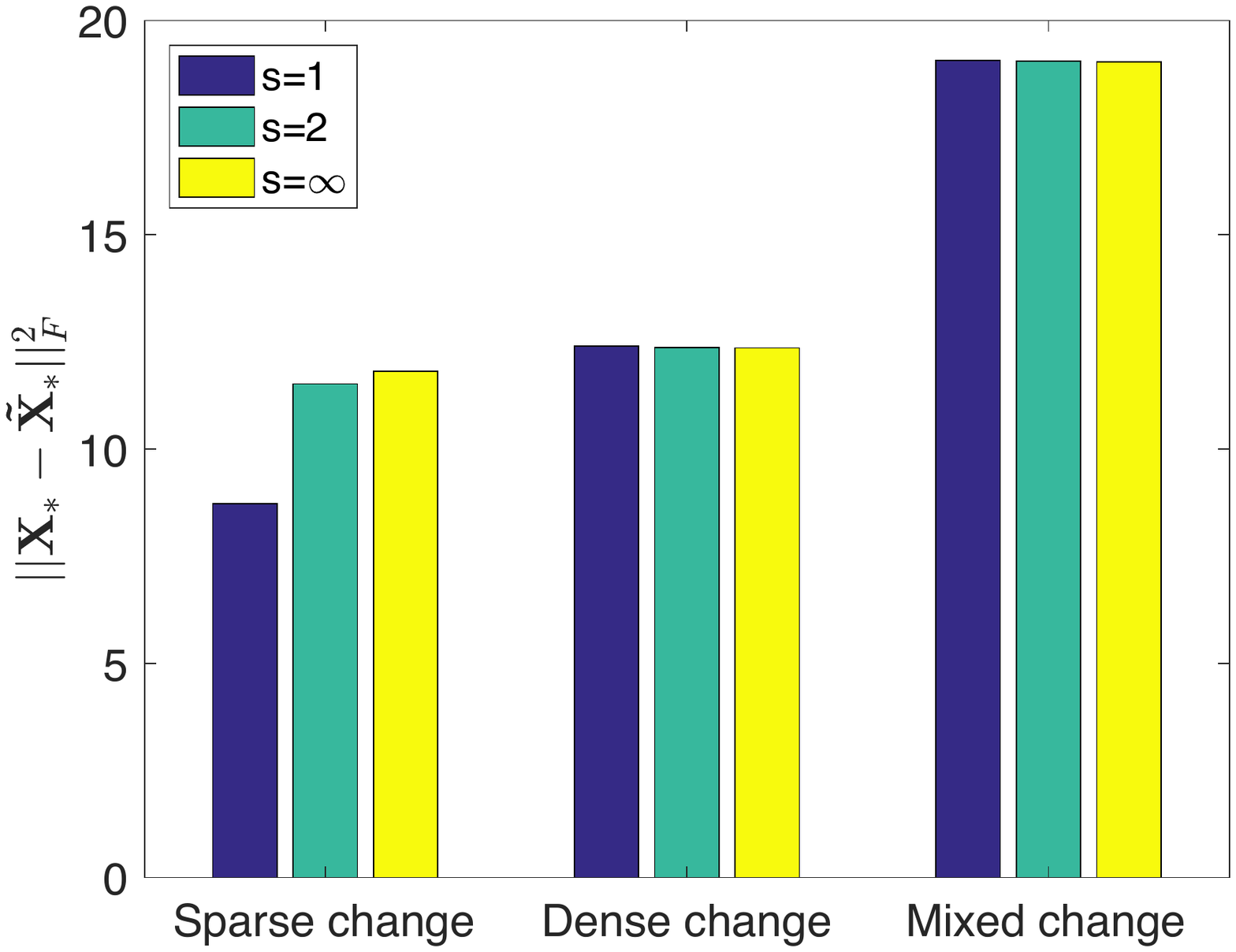}\label{figure_robustness_noise_type_segment}}
\subfigure[\textit{svm-guide}]{\includegraphics[width=0.49\columnwidth]{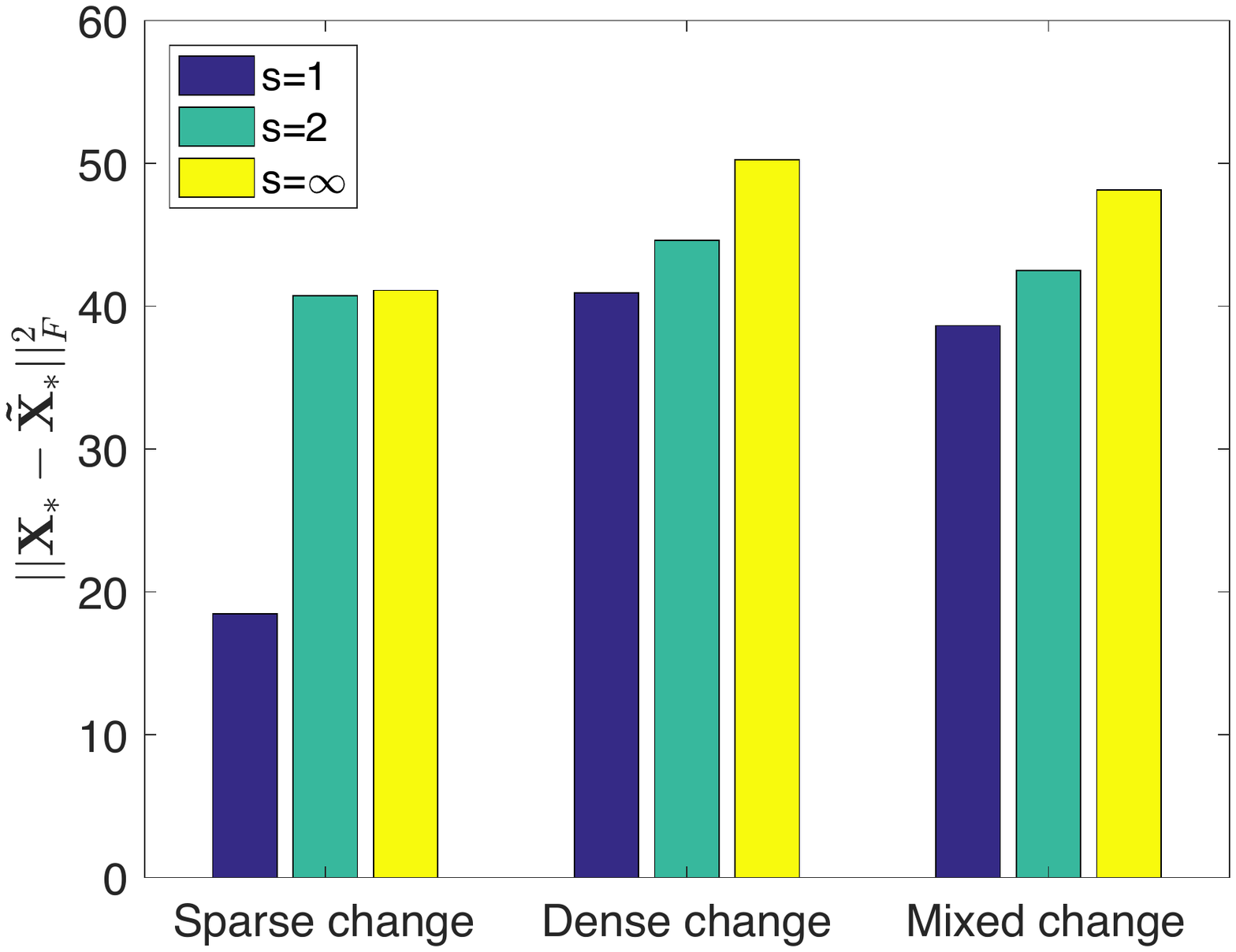}\label{figure_robustness_noise_type_svmguide}}
\caption{When the environment evolves, our REG-CC method performs best at the setting of $s=1$.}
\label{figure_robustness_noise_type}
\end{figure*}

\begin{table*}
\centering
\caption{Average CPU time, in seconds, of running the proposed REG-CC method by varying constraints. The numbers in parentheses represent the respective variances.}
\begin{tabular}{c|c|c|c|c}
\hline 
dataset/algos & REG-CC$(l_1)$ & REG-CC$(l_2)$ & REG-CC$(l_{\infty})$ & parallel REG-CC$(l_{\infty})$\tabularnewline
\hline 
\textit{iris} & $5.53(0.05)$ & $24.83(0.81)$ & $29.86(4.04)$ & $\mathbf{1.20(0.0004)}$\tabularnewline
\hline 
\textit{moon} & $4.26(0.02)$ & $33.25(0.85)$ & $29.50(0.71)$ & $\mathbf{2.36(0.0137)}$\tabularnewline
\hline 
\textit{segment} & $54.29(11.94)$ & $106.05(4.60)$ & $147.16(42.03)$ & $\mathbf{6.44(0.06)}$\tabularnewline
\hline 
\textit{svm-guide} & $318.58(24.70)$ & $731.83(38.28)$ & $1209.35(5.44)$ & $\mathbf{20.32(0.04)}$\tabularnewline
\hline 
\end{tabular}
\label{table_efficiency_convex_clustering}
\end{table*}

As we have shown in the previous section, $f(\X;\A) = \lrnorm{\X-\A}_F^2$ holds for convex clustering. Given a value $\alpha$, convex clustering recovers the clustering membership.  If an instance contains more than $2$ features, we present the cluster path by using its first and second principal components. The edges in graph $\Gcal$ are generated by running the $K$-NN method with $K=10$ for \textit{moon} and \textit{iris} and $K=5$ for \textit{segment} and \textit{svm-guide}. For datasets \textit{segment} and \textit{svm-guide}, we pick $200$ instances randomly and draw their cluster paths as illustrative examples. We increase $\alpha$ gradually and obtain a cluster path for every dataset. Additionally, we use the Gaussian distribution $N(0, 0.1^2)$ to generate the perturbed matrix $\bDelta$ that is used to simulate the evolving dataset. Parameter $\beta$ is set to $5$ by default in the evaluation.  Our method is denoted by \textit{REG-CC} for convex clustering. Note that the existing NET-LASSO method is unable to handle the datasets \textit{segment} and \textit{svm-guide} due to running out of memory. 

As illustrated in Figure \ref{figure_cluster_path}, every color represents a true cluster from the ground truth. If $\alpha$ is small, local communities of instances are detected.  As $\alpha$ increases, different local communities are fused into a large community. The number of clusters declines for a large $\alpha$. Furthermore, as illustrated in Figure \ref{figure_robustness_convex_clustering}, our \textit{REG-CC} method yields more accurate clustering results than do the existing methods for various values of $\alpha$. Due to the regularized item in the dual problem of \eqref{equa_unware_regularized_dual_objective}, our \textit{REG-CC} method penalizes a large fluctuation in $\X_\ast$. Thus, with the control of accuracy, the regularized item makes $\X_\ast$ robust to a perturbation of the data matrix. In other words, $\X_\ast$ is a satisfactory approximation of $\hat{\X}_\ast$ for the evolving dataset $\A+\bDelta$. Additionally, when we vary $\beta$, Figure \ref{figure_robustness_beta} shows that a large $\beta$ yields more accurate clustering results than does a small $\beta$.  The reason is that a large $\beta$ imposes a greater penalty on the fluctuation in $\X_\ast$, which makes $\X_\ast$ insensitive to the change in $\A$.

Additionally, we evaluate the performance of our method by varying $s$ in the regularizer. We test it in three kinds of evolving environments. In the first evolving environment, the perturbed matrix $\bDelta$ is sparse, and its nonzero values are generated from a Gaussian distribution, i.e., $N(0,0.1^2)$. This leads to a sparse change in the data matrix. In this case, the number of nonzero elements in the perturbed matrix is set to be $0.2n$, where $n$ is the number of instances in the dataset. In the second evolving environment, the perturbed matrix $\bDelta$ is dense, and its values are generated from the Gaussian distribution $N(0,0.01^2)$. This leads to a dense and relatively insignificant change in the data matrix. The third evolving environment generates the mixed perturbed matrix $\bDelta$ with $0.2n$ elements generated from the first case and the other elements generated from the second case. As illustrated in Figure \ref{figure_robustness_noise_type}, our method performs best in the setting of $s=1$ in all of these cases. In this setting, the regularizer favors obtaining a sparse $(\I_{d}\otimes\Q)\Tr\vec(\blambda_\ast)$ in the evolving environment. Although the data matrix changes in the evolving environment, the sparsity makes the minimizer $\blambda_\ast$ more robust to the change in the data matrix. 

Finally, we test the efficiency of our method by varying $q$. Various values of $q$ imply different types of constraints in the dual problem \eqref{equa_unware_regularized_dual_objective}. Variable $q$ is set to $1$, $2$ and $\infty$. In particular, if $q=\infty$, our method supports parallel computing naturally. It is implemented by using multiple cores.  According to Figure \ref{figure_robustness_noise_type}, performance at $s=1$ is usually better than at $s=2$ and at $\infty$. Thus, we set $s=1$ in this experiment. For all settings, our algorithm is executed three times. We show the mean and the variance (shown as the number in parentheses) of the used CPU time in seconds in Table \ref{table_efficiency_convex_clustering}. As shown in Table \ref{table_efficiency_convex_clustering}, our method is most efficient at $q=\infty$, when it is executed in parallel.

\begin{figure*}[t]
\setlength{\abovecaptionskip}{0pt}
\setlength{\belowcaptionskip}{0pt}
\centering 
\includegraphics[width=0.49\columnwidth]{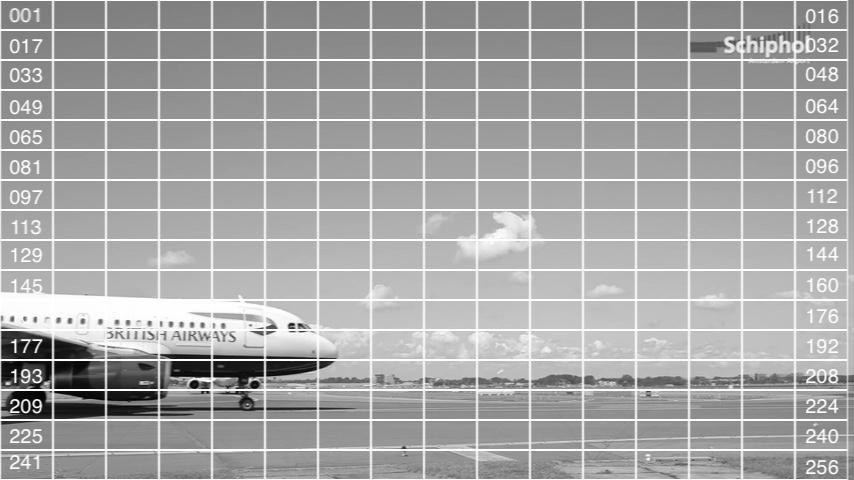}
\includegraphics[width=0.49\columnwidth]{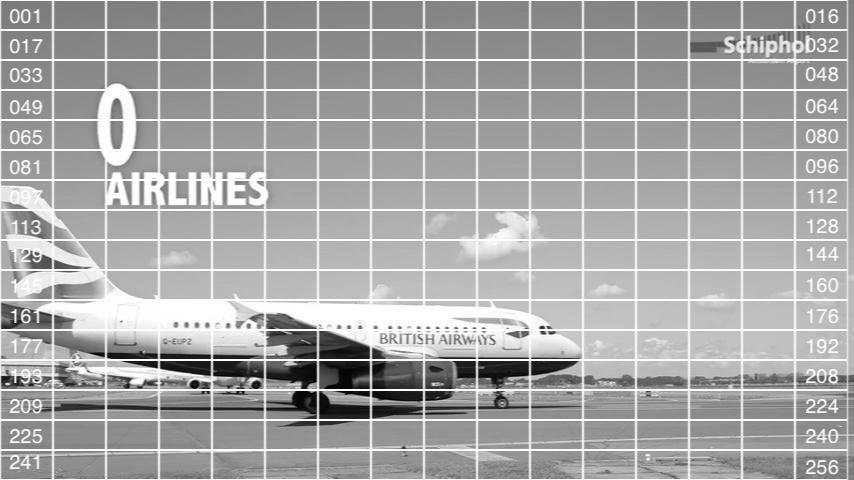}
\includegraphics[width=0.49\columnwidth]{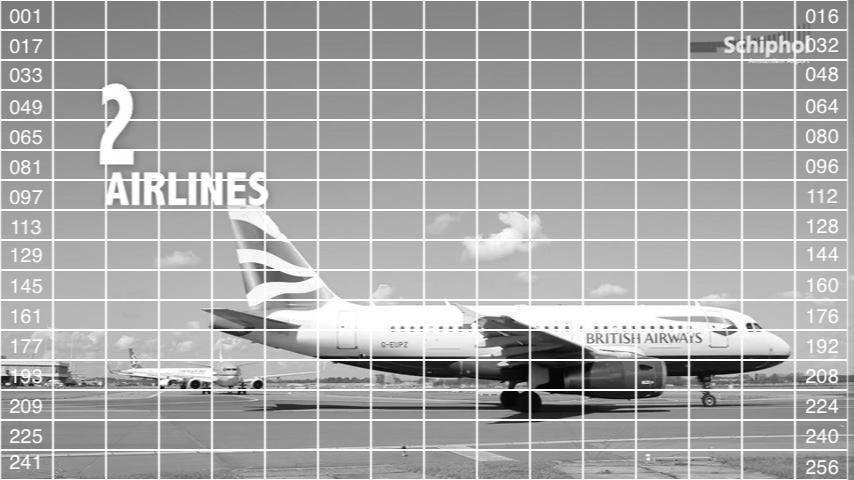}
\includegraphics[width=0.49\columnwidth]{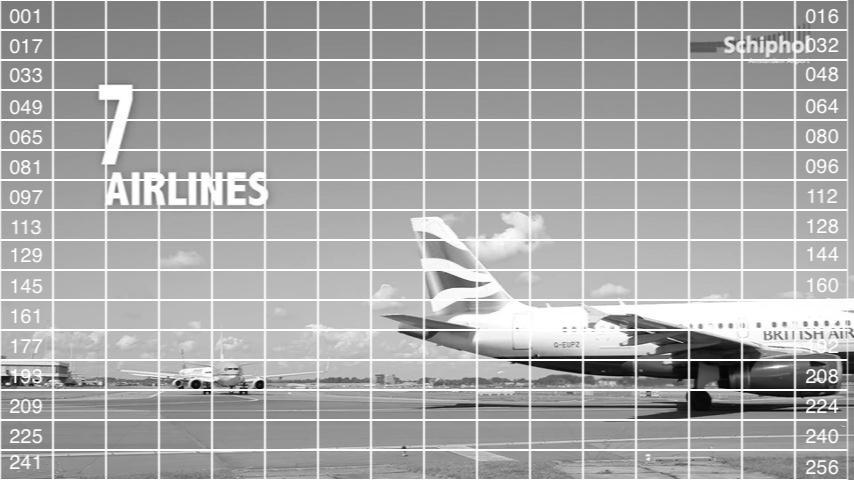}
\includegraphics[width=0.49\columnwidth]{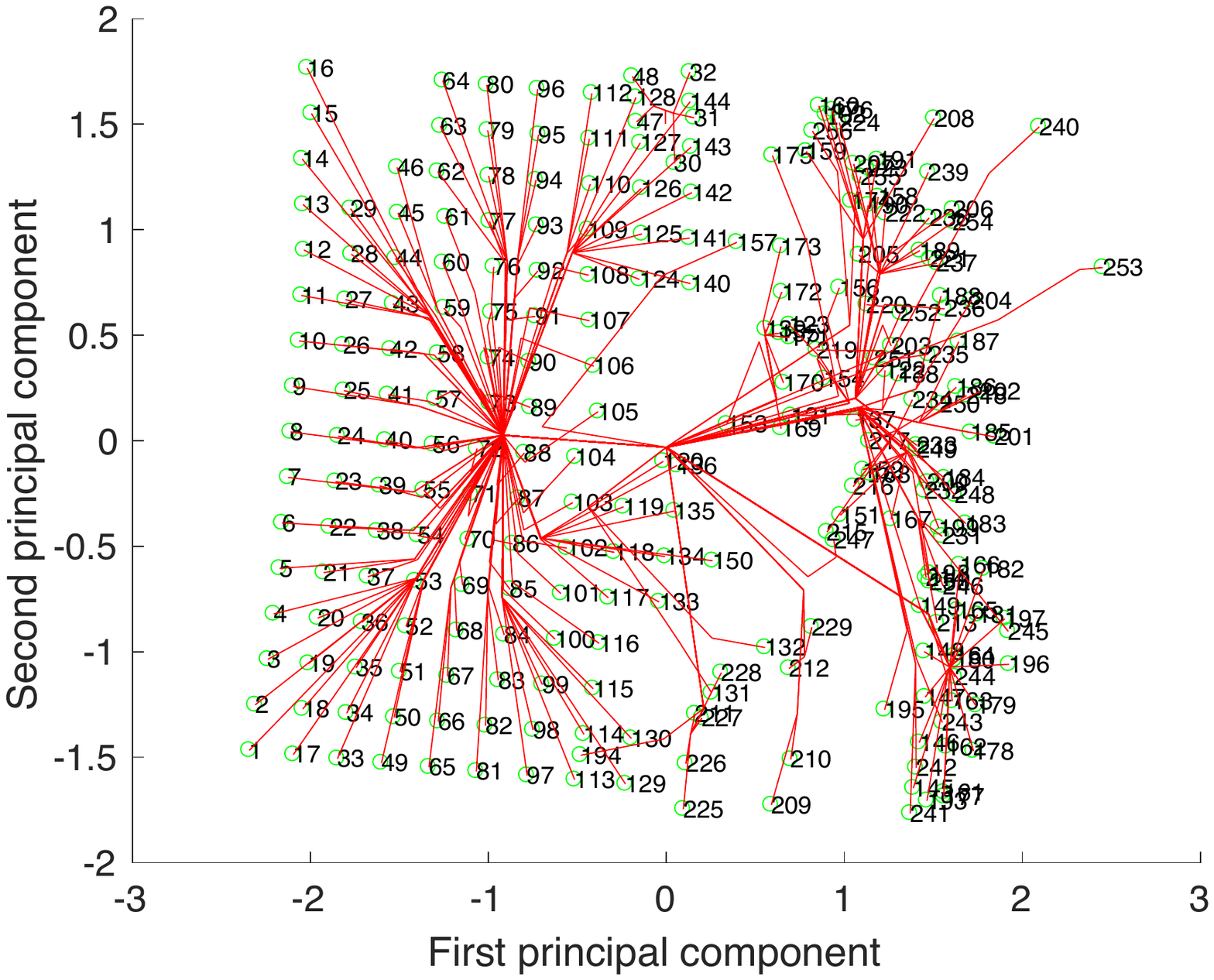}
\includegraphics[width=0.49\columnwidth]{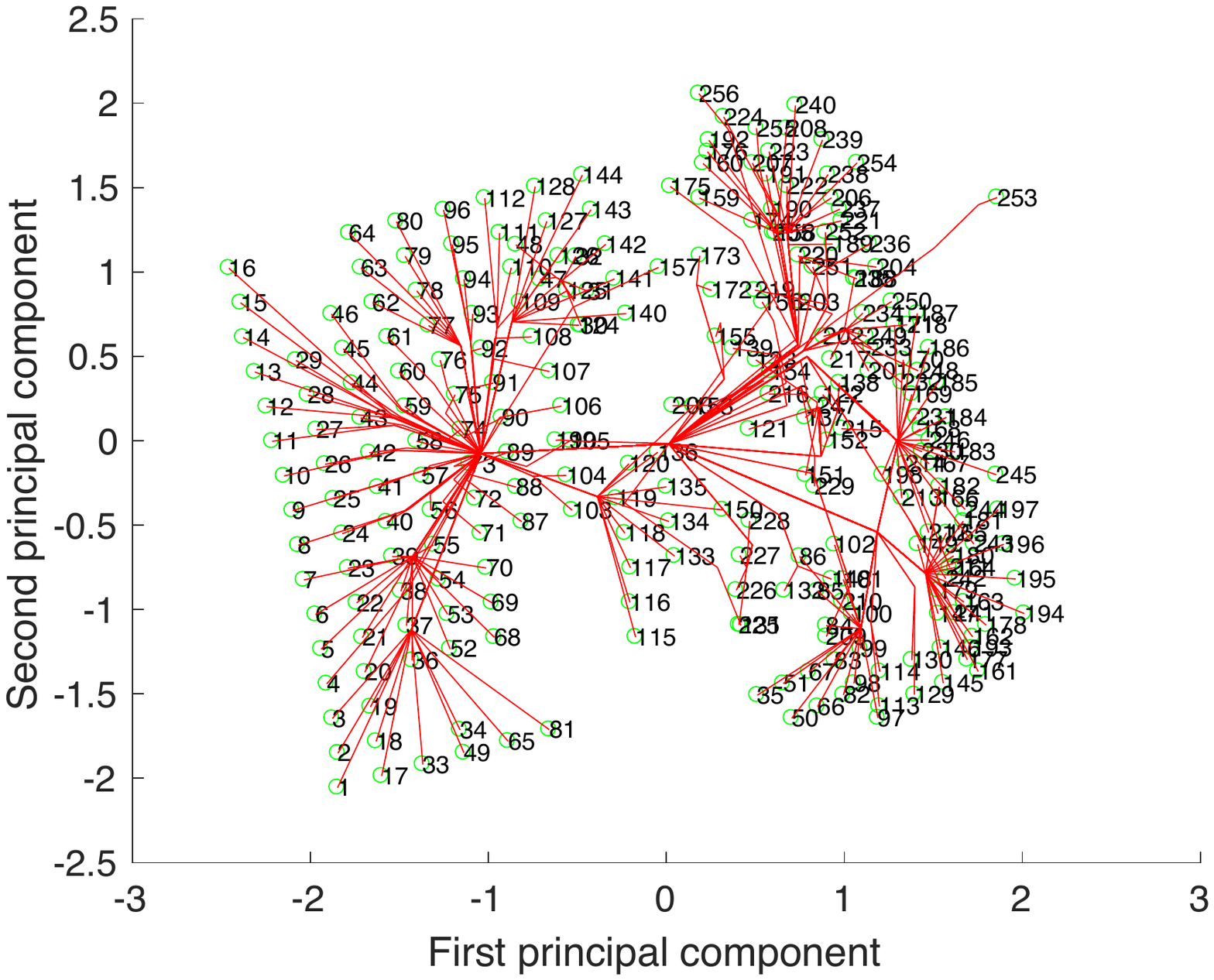}
\includegraphics[width=0.49\columnwidth]{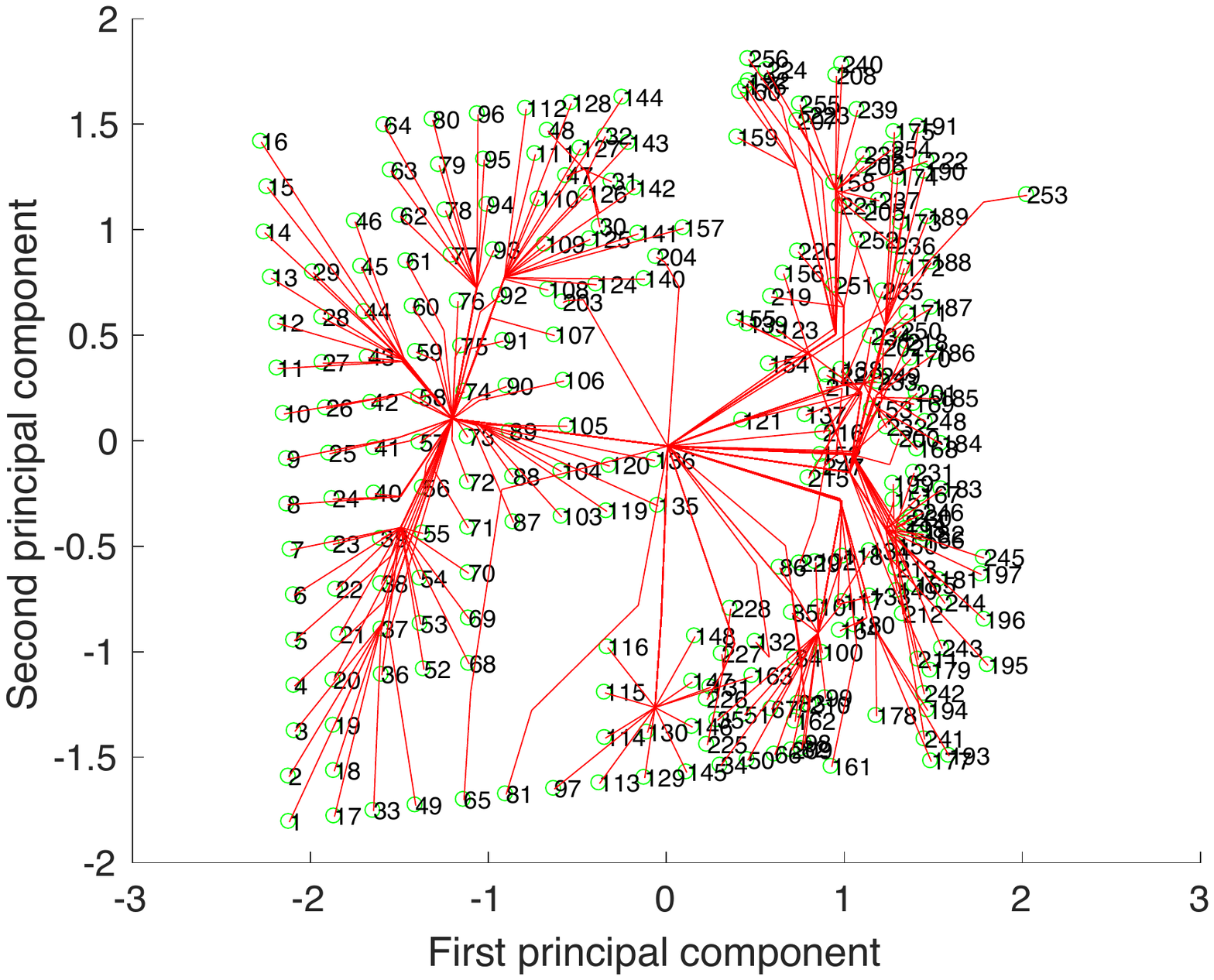}
\includegraphics[width=0.49\columnwidth]{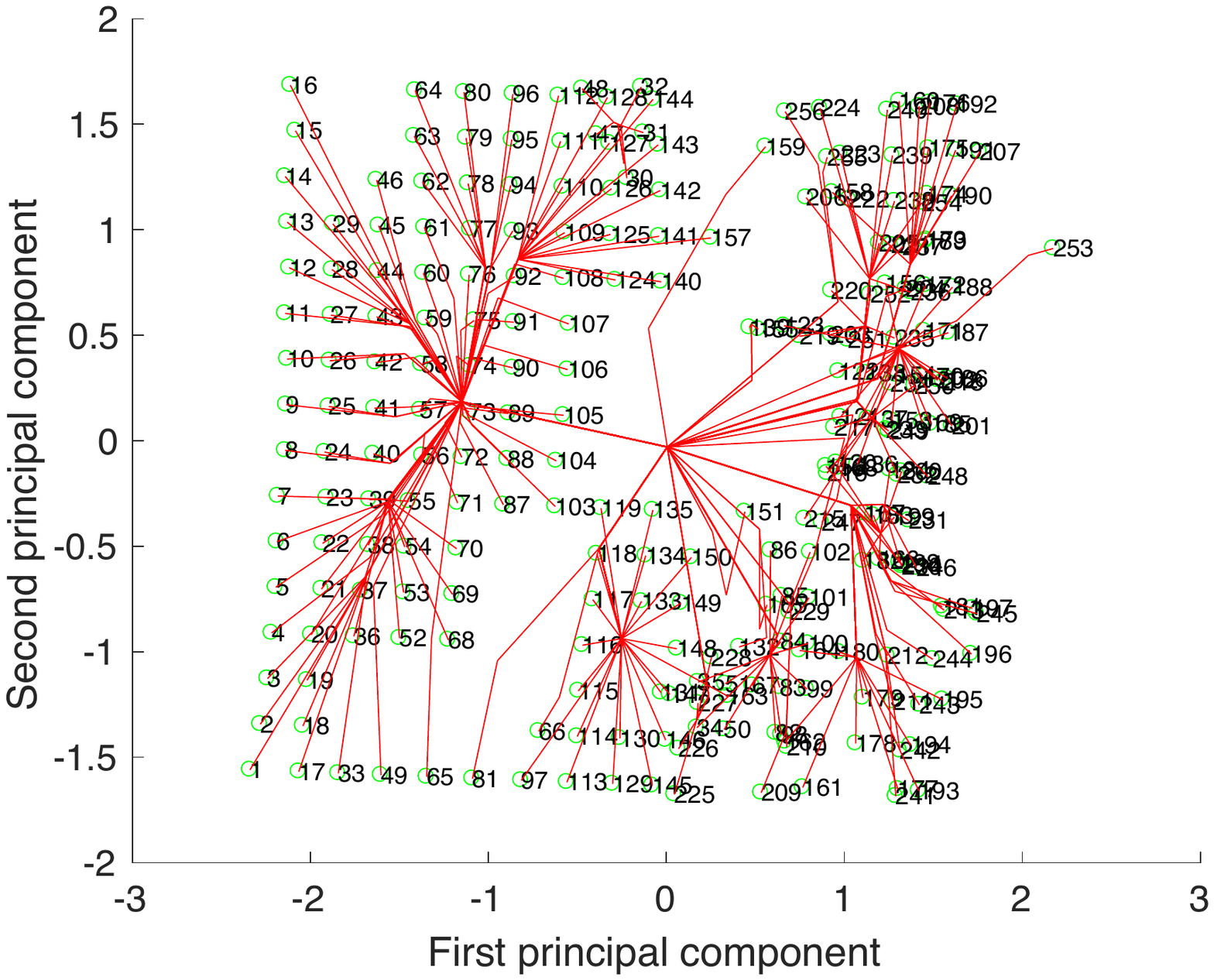}
\caption{Illustrative examples of cluster paths.}
\label{figure_video_clusterpath}
\end{figure*}

\begin{figure*}[t]
\setlength{\abovecaptionskip}{0pt}
\setlength{\belowcaptionskip}{0pt}
\centering 
\includegraphics[width=2\columnwidth]{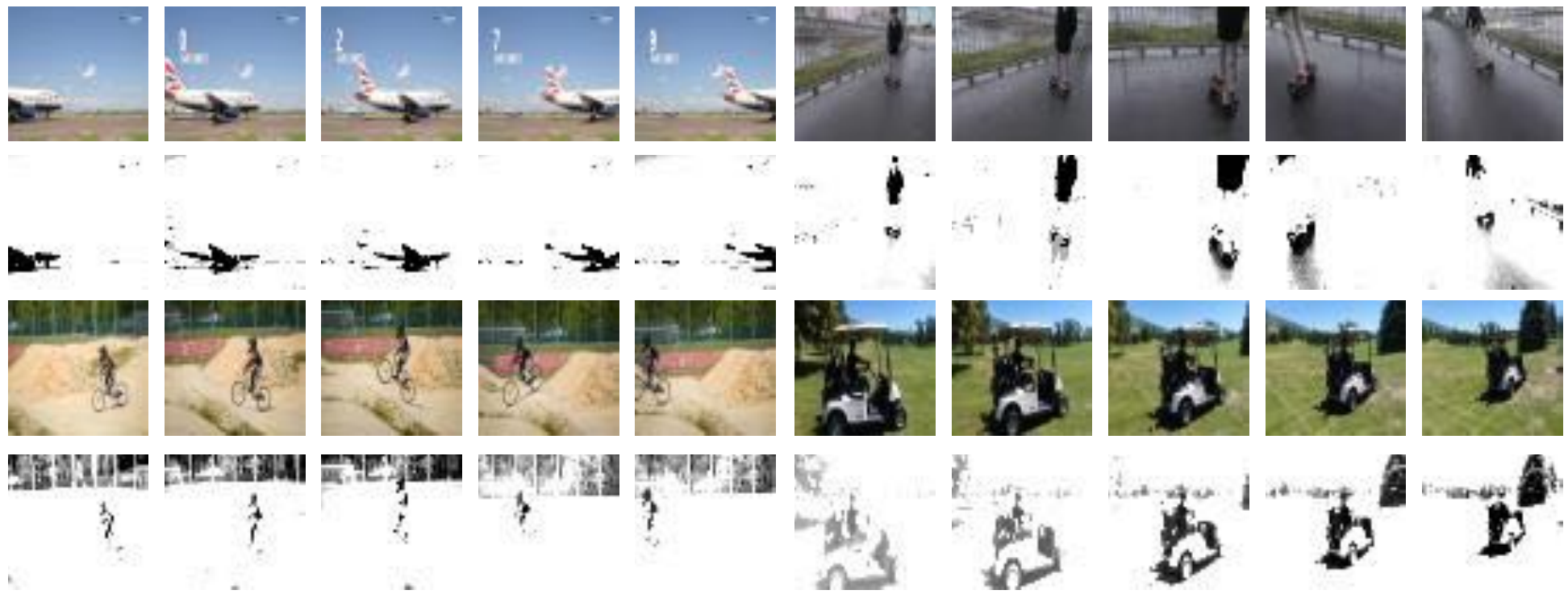}
\caption{Illustrative examples of segmentation of moving objects by using convex clustering.}
\label{figure_video_convex_clustering}
\end{figure*}

\subsection{Cluster paths for video datasets}
 We obtain cluster paths for the DAVIS 2017 video dataset\footnote{https://data.vision.ee.ethz.ch/csergi/share/davis/DAVIS-2017-test-challenge-480p.zip}. The videos in the dataset are at 480 p resolution.  In the experiment, each raw image is transformed into the corresponding grayscale image, and we determine the cluster path by using the grayscale image. Specifically, we partition every image into $256$ blocks according to the size of the image and assign every block a unique ID number. The number near a point in the cluster path represents the corresponding block in the grayscale image.  Together with the position representing location in the image, every block is represented by a $3\times 1$ vector. Additionally, we set the number of neighbors to $K=4$ when $K$-NN is used to construct the sparse graph $\Gcal$ for every image.  Other parameters such as $q$ and $s$ are set to $q=1$ and $s=\infty$ by default.

Figure \ref{figure_video_clusterpath} shows the cluster paths.  We obtain several interesting observations from cluster paths. At a low level, the cluster path shows the clustering membership among blocks of the grayscale image. For example, the blocks with id of $1$, $17$, and $33$ (blocks in the top-left region) belong to a cluster due to their similar gray values.  Blocks $112$, $109$, and $127$ (those in the center-right region) belong to a cluster. This captures the similarity of the blocks at a low level. At a high level, we compare the four images and their corresponding cluster paths. The event of the plane crossing is detected by analyzing the change of the structures of those cluster paths. In particular, the lower-right part of cluster paths changes from sparse paths to dense paths. The change in the cluster path reflects the event of the plane crossing. The above low-level and high-level observations provide significant insights for understanding the video. 

Additionally, as we have shown in the previous section, if $f(\X;\A)=\lrnorm{\X-\A}_F^2$ holds, our method can be used to perform object segmentation for a video. Similarly, we partition every image into $256$ blocks.  Every block is represented by its average gray value.  The illustrative examples are presented in Figure \ref{figure_video_convex_clustering}. As we observe, it is significant to detect the moving object in the video. Although clustering is a traditional method of performing image segmentation, convex clustering is more robust than the classic clustering method such as k-means clustering. The clustering result of convex clustering is determined and is not impacted by the seeds or heuristic rules used in k-means clustering.

\begin{figure}[!t]
\setlength{\abovecaptionskip}{0pt}
\setlength{\belowcaptionskip}{0pt}
\centering 
\subfigure[\textit{space-ga}]{\includegraphics[width=0.49\columnwidth]{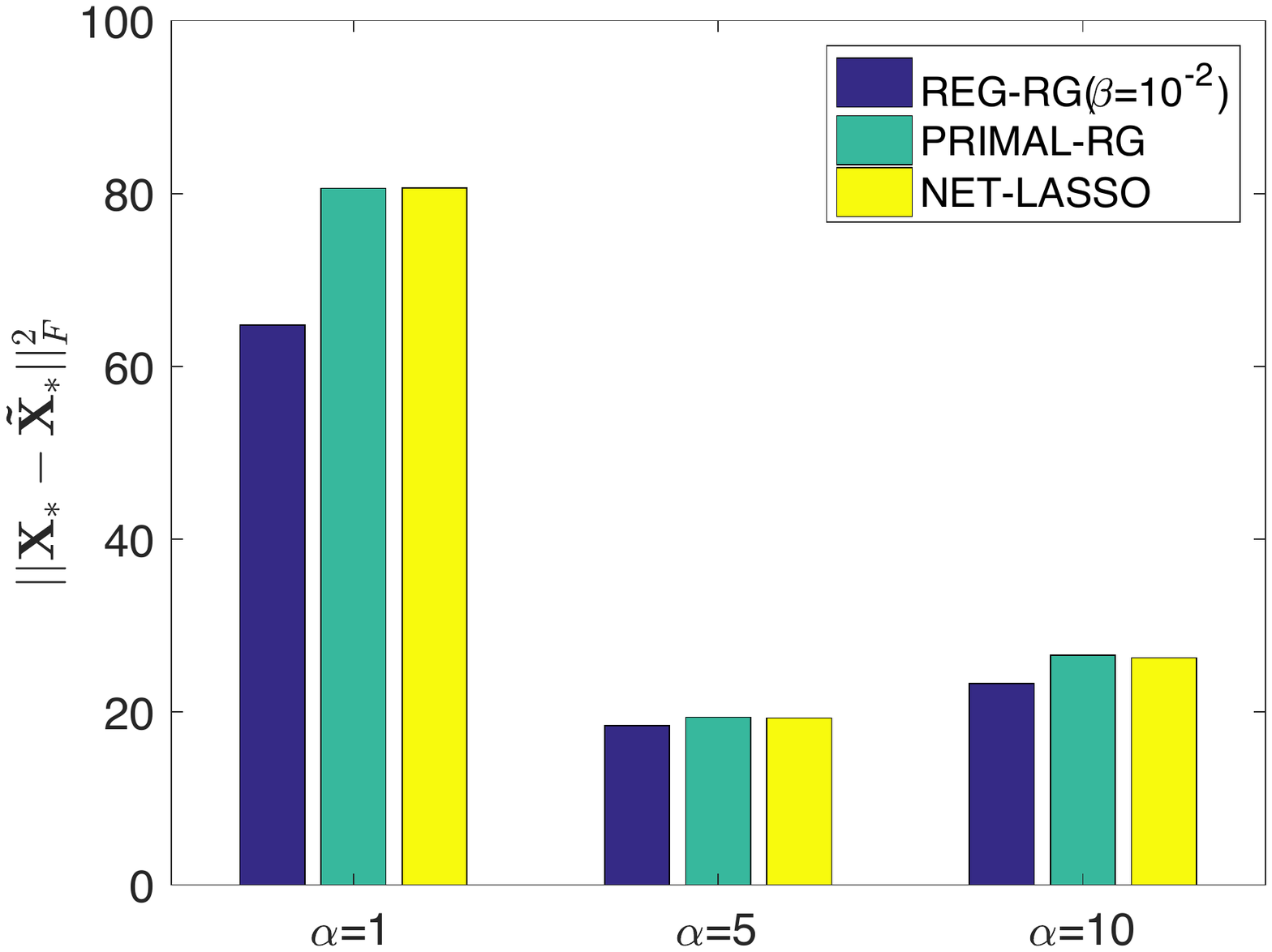}\label{figure_robustness_spacega}}
\subfigure[\textit{airfoil}]{\includegraphics[width=0.49\columnwidth]{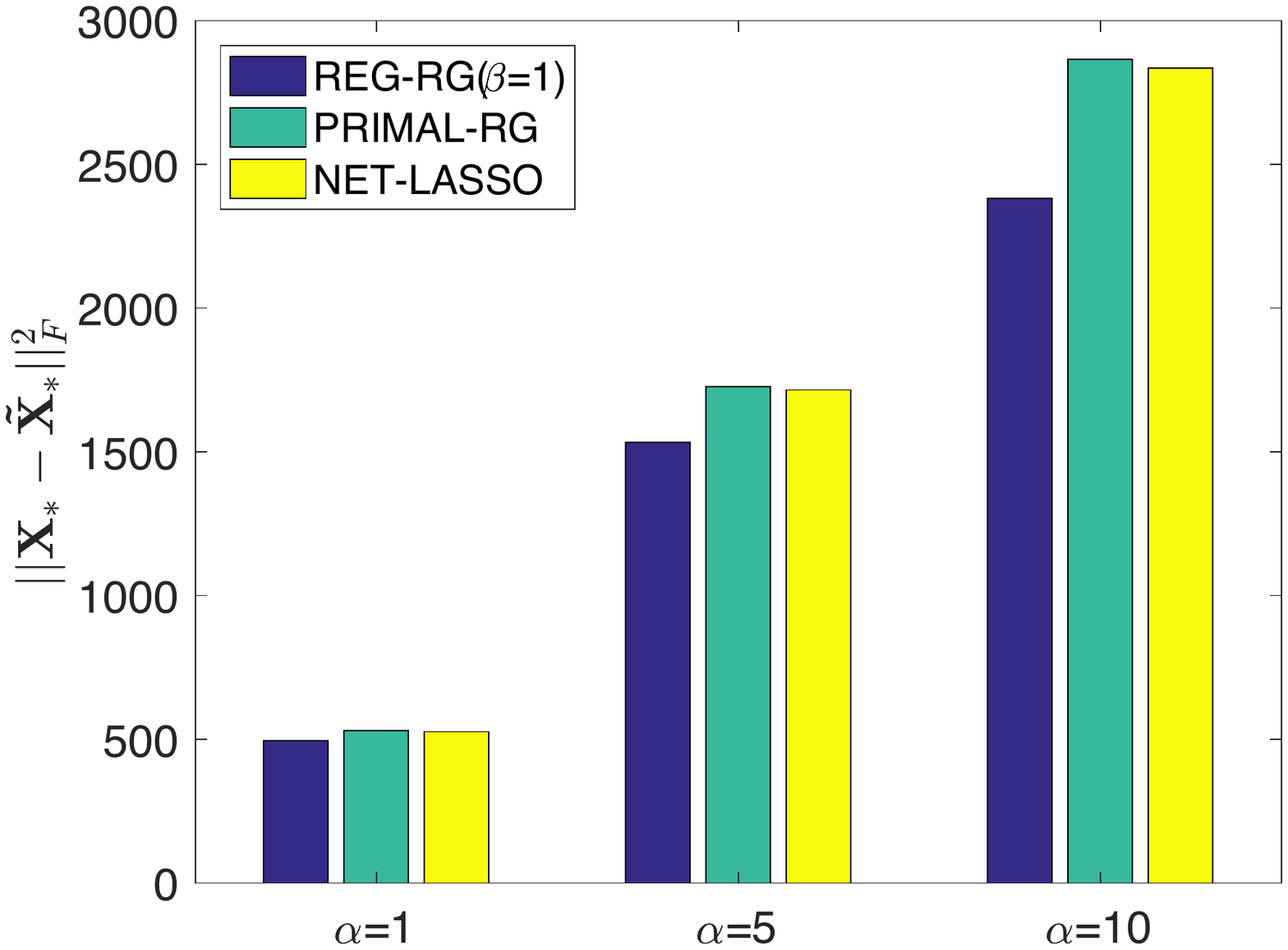}\label{figure_robustness_airfoil}}
\caption{When the datasets evolve, our REG-RG method yields more accurate prediction models than do its counterparts by varying $\alpha$.}
\label{figure_robustness_ridge_regression}
\end{figure}

\begin{figure}[!t]
\setlength{\abovecaptionskip}{0pt}
\setlength{\belowcaptionskip}{0pt}
\centering 
\subfigure[\textit{space-ga}]{\includegraphics[width=0.49\columnwidth]{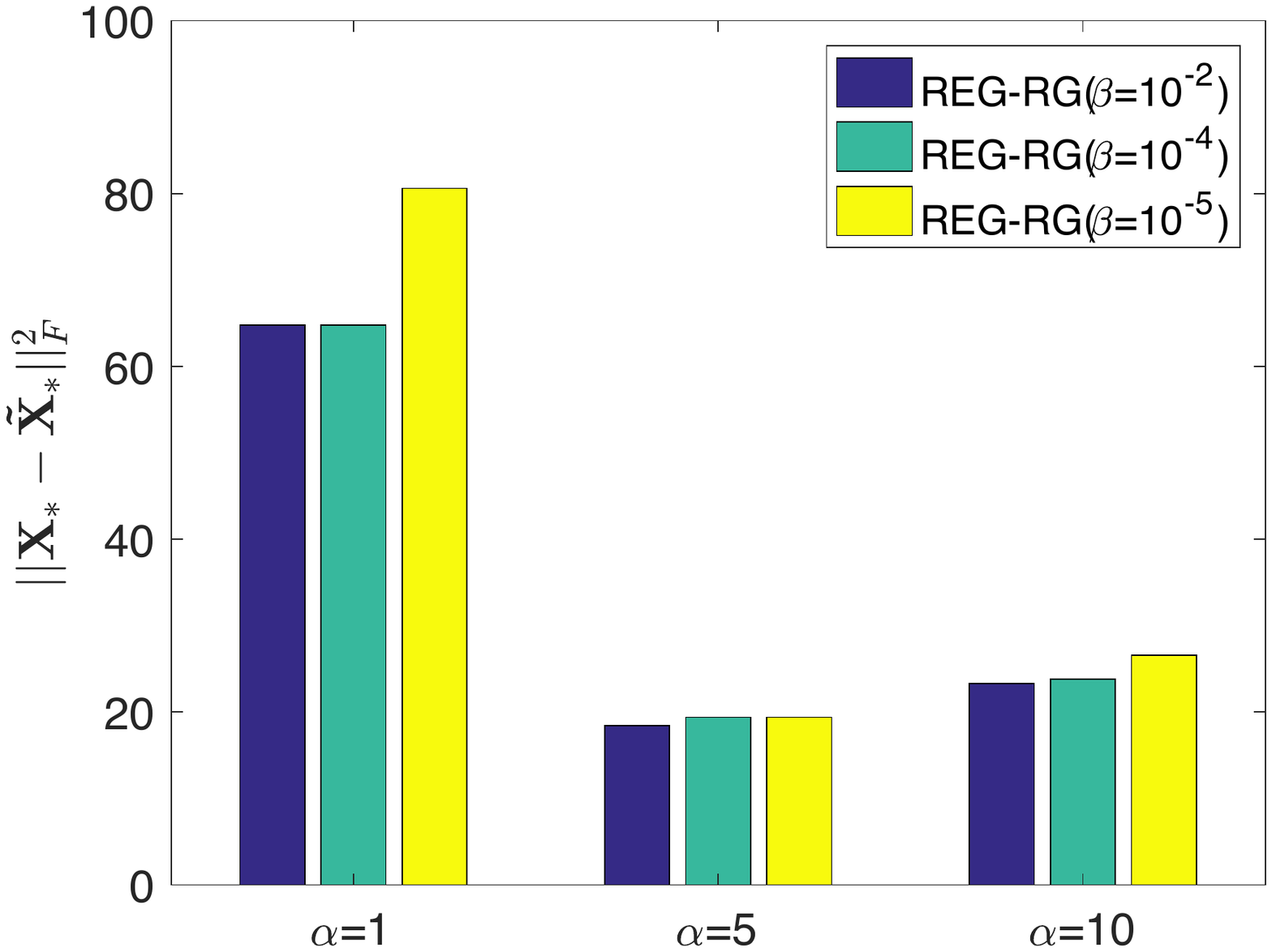}\label{figure_robustness_spacega_beta}}
\subfigure[\textit{airfoil}]{\includegraphics[width=0.49\columnwidth]{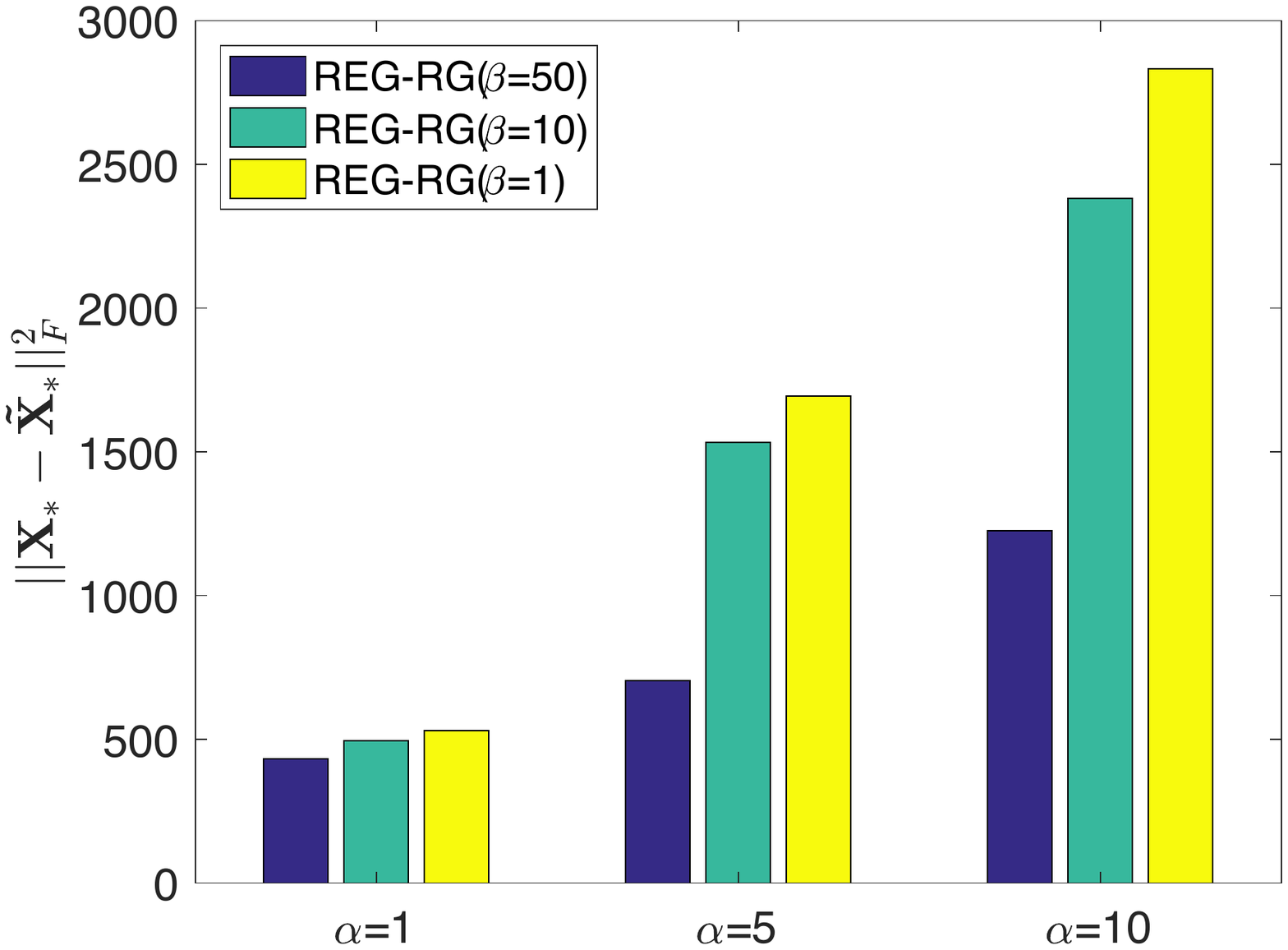}\label{figure_robustness_airfoil_beta}}
\caption{When the datasets evolve, our REG-RG method yields more accurate prediction models with a large $\beta$.}
\label{figure_robustness_ridge_regression_beta}
\end{figure}

\begin{figure}[!t]
\setlength{\abovecaptionskip}{0pt}
\setlength{\belowcaptionskip}{0pt}
\centering 
\subfigure[\textit{space-ga}]{\includegraphics[width=0.49\columnwidth]{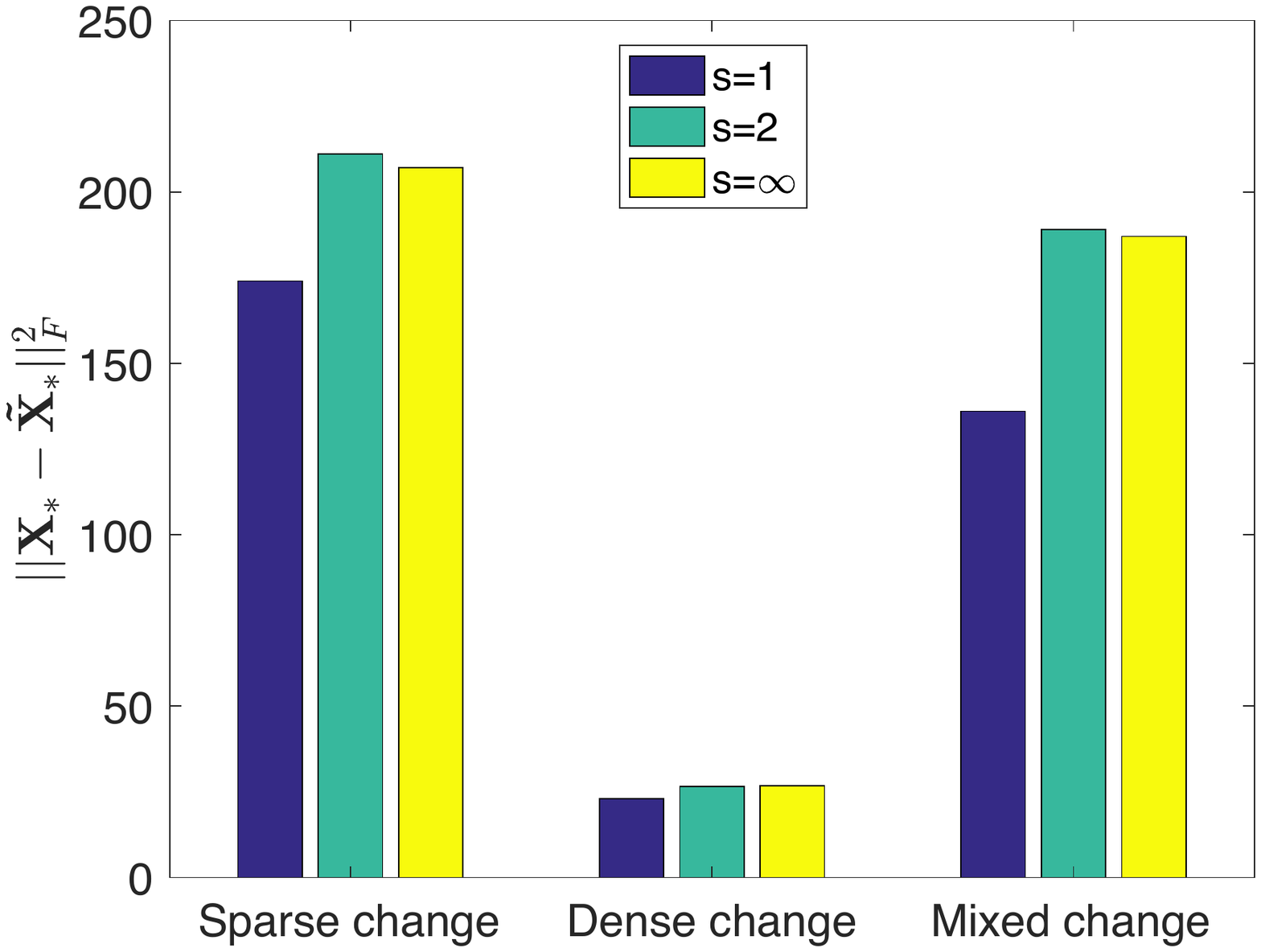}\label{figure_robustness_spacega_noise_type}}
\subfigure[\textit{airfoil}]{\includegraphics[width=0.49\columnwidth]{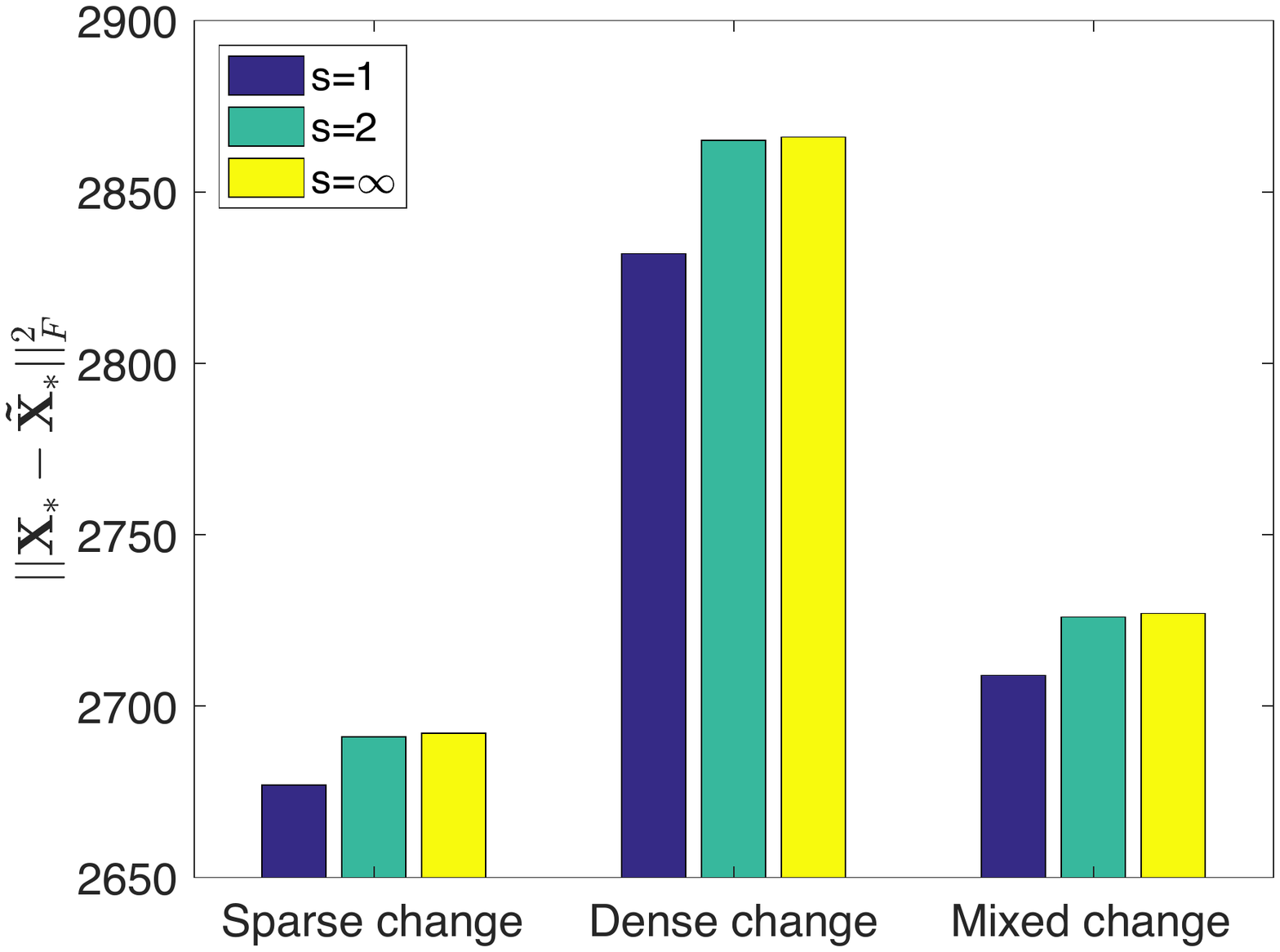}\label{figure_robustness_airfoil_noise_type}}
\caption{Our method yields the best solution at $s=1$.}
\label{figure_robustness_ridge_regression_noise_type}
\end{figure}

\begin{table*}
\centering
\caption{Average CPU time, in seconds, needed to execute the proposed REG-RG method by varying the constraints. The numbers in the parentheses represent variance.}
\begin{tabular}{c|c|c|c|c}
\hline 
dataset/algos & REG-RG$(l_1)$ & REG-RG$(l_2)$ & REG-RG$(l_{\infty})$ & parallel REG-RG$(l_{\infty})$\tabularnewline
\hline 
\textit{space-ga} & $26.41(0.145)$ & $25.25(1.586)$ & $26.63(0.225)$ & $\mathbf{9.43(0.34)}$\tabularnewline
\hline 
\textit{airfoil} & $29.43(0.52)$ & $29.61(0.004)$ & $29.40(0.13)$ & $\mathbf{5.68(0.051)}$\tabularnewline
\hline 
\end{tabular}
\label{table_efficiency_ridge_regression}
\end{table*}

\subsection{Ridge regression}

We further evaluate the performance of the proposed method by conducting the ridge regression task on the \textit{space-ga} and \textit{airfoil} datasets. As we have shown, $f(\X;\A)$ in \eqref{equa_basic_simultaneous_clustering_optimization_primal} is instantiated to be 
\begin{align}
\nonumber
f(\X;\A) = \sum_{i=1}^n \lrincir{\lrnorm{\A_i\X\Tr_i - \y_i}^2_2 + \gamma \lrnorm{\X_i}_2^2}
\end{align} for the ridge regression task. Here, $\gamma = 5$ in the experiment, and we construct the network $\Gcal$ by running the $K$-NN method on the chosen datasets with $K=5$. Additionally, in this analysis, the proposed method is denoted by \textit{REG-RG} for the ridge regression task.  The state-of-the-art method is network lasso \cite{Hallac:2015fy}, denoted by \textit{NET-LASSO}. The proposed method is also compared with the method of solving the primal problem \eqref{equa_basic_simultaneous_clustering_optimization_primal} directly, which is denoted by \textit{PRIMAL-RG}.

As illustrated in Figure \ref{figure_robustness_ridge_regression}, the proposed REG-RG method yields more accurate prediction models than do the existing methods when datasets evolve.   The superiority becomes significant with the increase in $\beta$, as shown in Figure \ref{figure_robustness_ridge_regression_beta}. The reason is that the proposed REG-RG method tends to yield the solution that is relatively robust to the evolving data. The large $\beta$ means that REG-RG applies a larger penalty to the solution, which encourages it to be insensitive to the evolving data. 

Next, we evaluate our method on three kinds of evolving data, including the sparse evolving data, the dense evolving data, and a mixture of both. As we have shown, the sparse evolving data consist of $0.2n$ nonzero values generated from the Gaussian distribution $N(0,0.1^2)$. The dense evolving data are generated from the Gaussian distribution $N(0,0.01^2)$. A mixture of them is obtained by generating $0.2n$ values from the sparse case, while the other values are generated from the dense case. As shown in Figure \ref{figure_robustness_ridge_regression_noise_type}, it is more effective to obtain an approximate solution for the proposed method at $s=1$ than at other values of $s$. Finally, we evaluate the efficiency of the proposed method by varying the constraints. 
We execute every algorithm three times and record the average and the variance (the number in the parentheses) of CPU time, in seconds. Table \ref{table_efficiency_ridge_regression} shows that the efficiency varies slightly when different types of constraints are used. However, due to the use of parallel computing, the proposed method is most efficient at $q=\infty$.

\section{Conclusions}
\label{sect_conclusion}

We investigate SCO in an evolving environment. We first reformulate the problem into a convex problem with cone constraints in a dual space. Then, a new regularizer is proposed to obtain an approximate solution for the evolving dataset. A novel ADMM method is proposed to solve the problem efficiently. Afterward, we analyze the quality of the solution theoretically for the cases of the convex clustering and ridge regression tasks.  Finally, extensive empirical studies show the advantages of the proposed method.

% use section* for acknowledgment
\section*{Acknowledgments}
This work was supported by the National Key R \& D Program of China (Grant No. 2018YFB1003203), the National Natural Science Foundation of China (Grant Nos. 61672528, 61773392, 61701451, 61671463, and 61772544), and the National Basic Research Program (the 973 program) under Grant No. 2014CB347800. 

% Can use something like this to put references on a page
% by themselves when using endfloat and the captionsoff option.
\ifCLASSOPTIONcaptionsoff
  \newpage
\fi

% trigger a \newpage just before the given reference
% number - used to balance the columns on the last page
% adjust value as needed - may need to be readjusted if
% the document is modified later
%\IEEEtriggeratref{8}
% The "triggered" command can be changed if desired:
%\IEEEtriggercmd{\enlargethispage{-5in}}

% references section

% can use a bibliography generated by BibTeX as a .bbl file
% BibTeX documentation can be easily obtained at:
% http://mirror.ctan.org/biblio/bibtex/contrib/doc/
% The IEEEtran BibTeX style support page is at:
% http://www.michaelshell.org/tex/ieeetran/bibtex/
%\bibliographystyle{IEEEtran}
% argument is your BibTeX string definitions and bibliography database(s)
%\bibliography{IEEEabrv,../bib/paper}
%
% <OR> manually copy in the resultant .bbl file
% set second argument of \begin to the number of references
% (used to reserve space for the reference number labels box)
%\begin{thebibliography}{1}

%\begin{thebibliography}{1}

%\bibitem{IEEEhowto:kopka}
%H.~Kopka and P.~W. Daly, \emph{A Guide to \LaTeX}, 3rd~ed.\hskip 1em plus
%  0.5em minus 0.4em\relax Harlow, England: Addison-Wesley, 1999.
\bibliographystyle{IEEEtran}  
\bibliography{convex_clustering_evolving_datasets corrections}
%\end{thebibliography}

% biography section
% 
% If you have an EPS/PDF photo (graphicx package needed) extra braces are
% needed around the contents of the optional argument to biography to prevent
% the LaTeX parser from getting confused when it sees the complicated
% \includegraphics command within an optional argument. (You could create
% your own custom macro containing the \includegraphics command to make things
% simpler here.)
% or if you just want to reserve a space for a photo:

% You can push biographies down or up by placing
% a \vfill before or after them. The appropriate
% use of \vfill depends on what kind of text is
% on the last page and whether or not the columns
% are being equalized.

%\vfill

\begin{IEEEbiography}[{\includegraphics[width=1in,height=1.25in,clip,keepaspectratio]{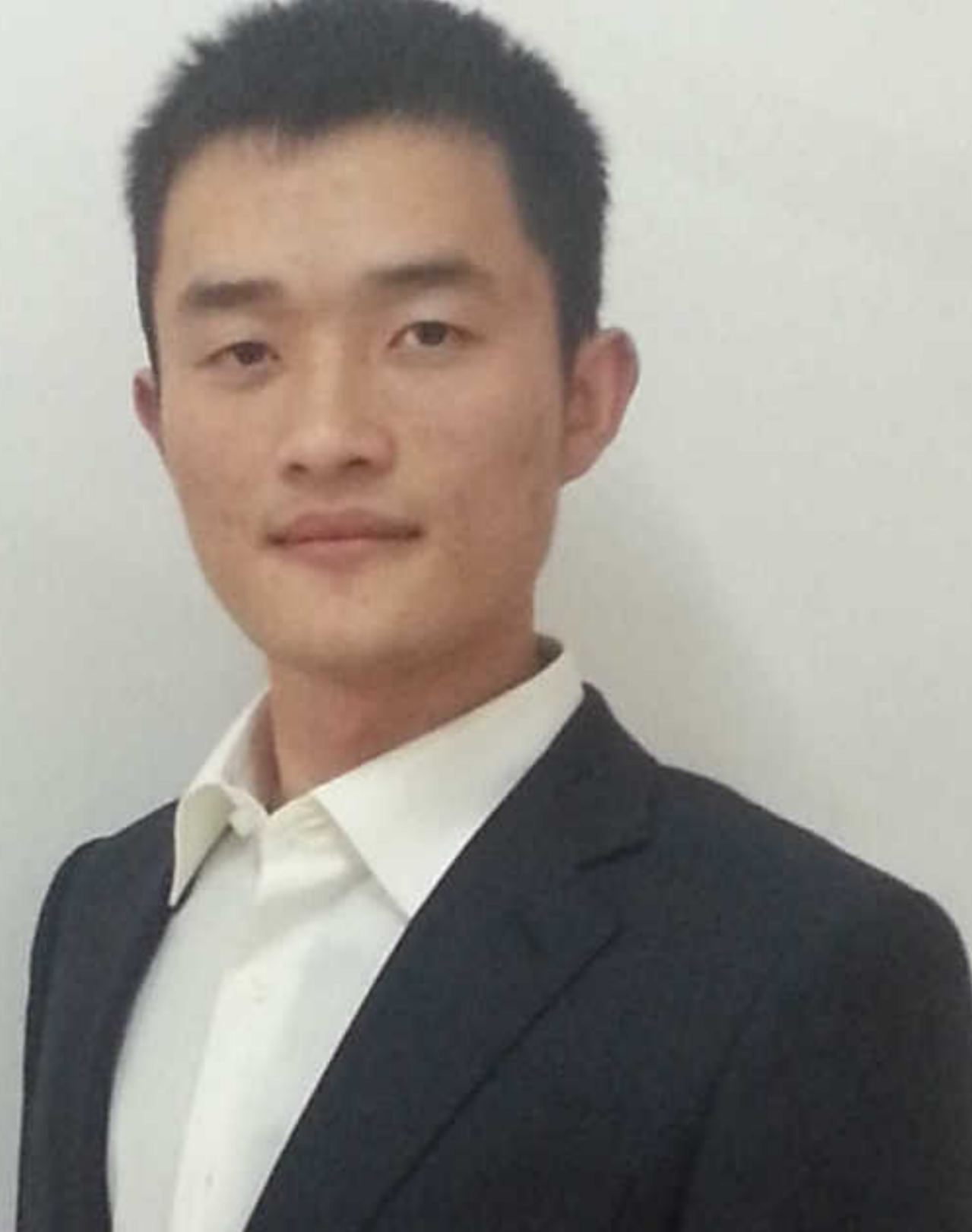}}]{Yawei Zhao} is currently a PhD candidate in Computer Science at the National University of Defense Technology, China. He received his B.E. degree and M.S. degree in Computer Science from the National University of Defense Technology, China, in 2013 and 2015, respectively. His research interests include asynchronous and parallel optimization algorithms, pattern recognition and machine learning.
\end{IEEEbiography}

\begin{IEEEbiography}[{\includegraphics[width=1in,height=1.25in,clip,keepaspectratio]{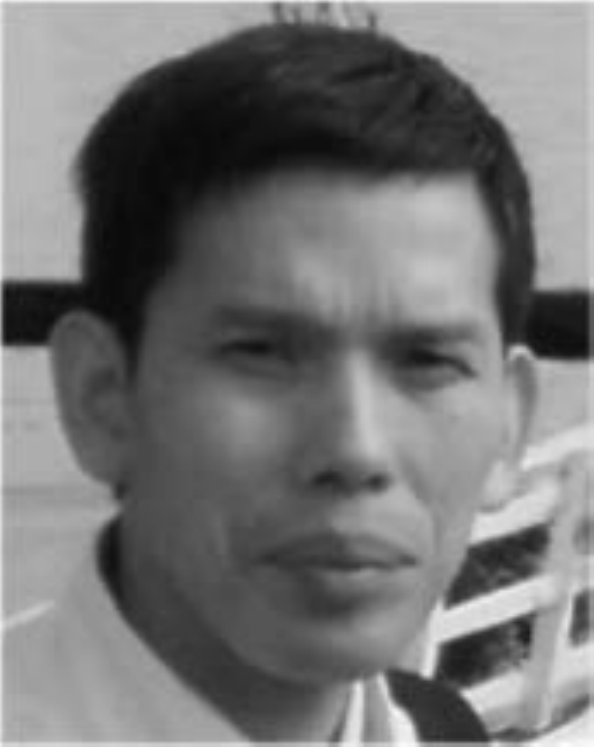}}]{En Zhu} received his M.S. and PhD degrees in Computer Science from the National University of Defense Technology, China, in 2001 and 2005, respectively. He currently works as a full professor in the School of Computer Science, National University of Defense Technology, China. His main research interests include pattern recognition, image processing, and information security.
\end{IEEEbiography}

\begin{IEEEbiography}[{\includegraphics[width=1in,height=1.25in,clip,keepaspectratio]{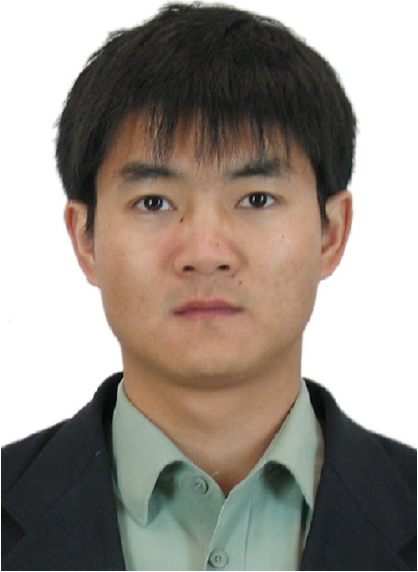}}]{Xinwang Liu} received his PhD degree from National University of Defense Technology (NUDT), China. He is currently an assistant researcher at the School of Computer Science, NUDT. His current research interests include kernel learning and unsupervised feature learning. Dr. Liu has published 40+ peer-reviewed papers, including those in highly regarded journals and conferences, such as IEEE T-IP, IEEE T-NNLS, ICCV, AAAI, IJCAI, etc. He served on the Technical Program Committees of IJCAI 2016-2017 and AAAI 2018-2018.
\end{IEEEbiography}

\begin{IEEEbiography}[{\includegraphics[width=1in,height=1.25in,clip,keepaspectratio]{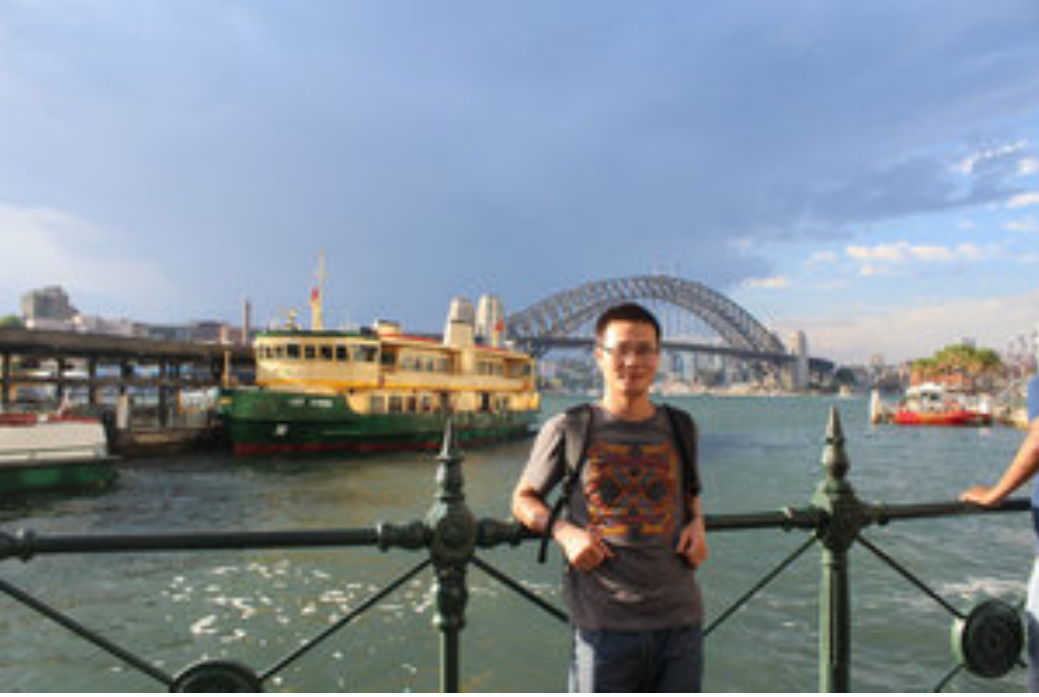}}]{Chang Tang}
received his PhD degree from Tianjin University, Tianjin, China, in 2016. He joined the AMRL Lab of the University of Wollongong between September 2014 and September 2015. He is currently an associate professor at the School of Computer Science, China University of Geosciences, Wuhan, China. Dr. Tang has published 20+ peer-reviewed papers, including those in highly regarded journals and conferences, such as IEEE-TPAMI, IEEE-TMM, IEEE T-HMS, IEEE SPL, ICCV, CVPR, AAAI, ACMMM, etc. He served on the Technical Program Committees of IJCAI 2018/2019, ICME 2018/2019, AAAI 2019, CVPR 2019 and ICCV 2019. His current research interests focus on building machine learning models for solving computer vision and data mining problems.
\end{IEEEbiography}

\begin{IEEEbiography}[{\includegraphics[width=1in,height=1.25in,clip,keepaspectratio]{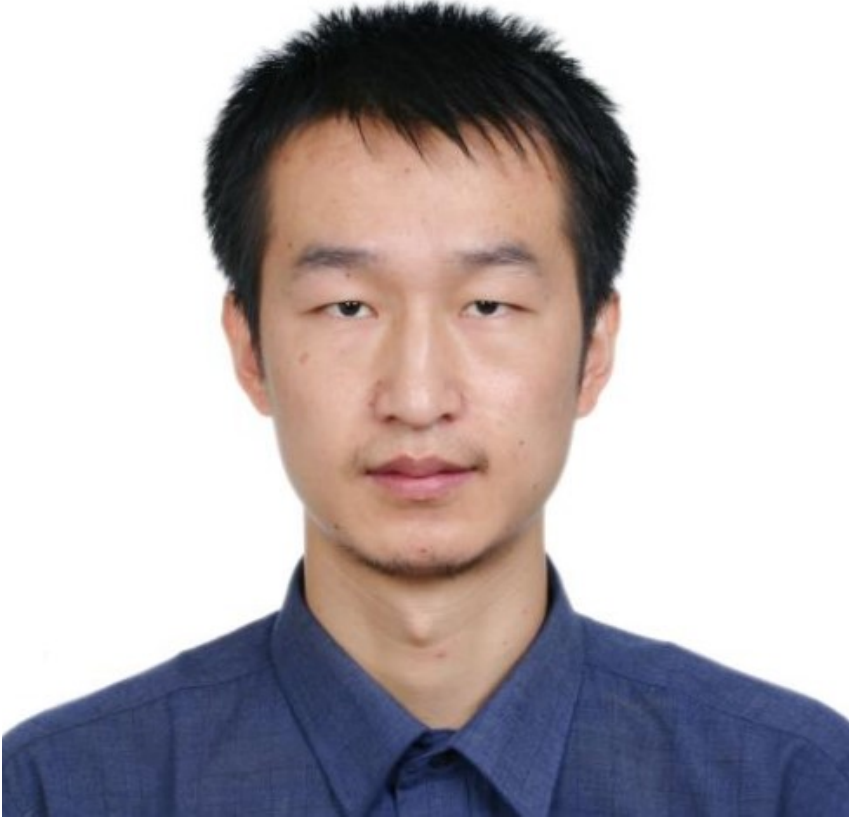}}]
{Deke Guo} received a B.S. degree
in industry engineering from Beijing University
of Aeronautics and Astronautics, Beijing,
China, in 2001, and a PhD degree in management
science and engineering from the National
University of Defense Technology, Changsha,
China, in 2008. He is currently a professor
with the College of Systems Engineering,
National University of Defense Technology. His
research interests include distributed systems,
software-defined networking, datacenter networking,
wireless and mobile systems, and interconnection networks.
He is a senior member of the IEEE and a member of the ACM.
\end{IEEEbiography}

\begin{IEEEbiography}[{\includegraphics[width=1in,height=1.25in,clip,keepaspectratio]{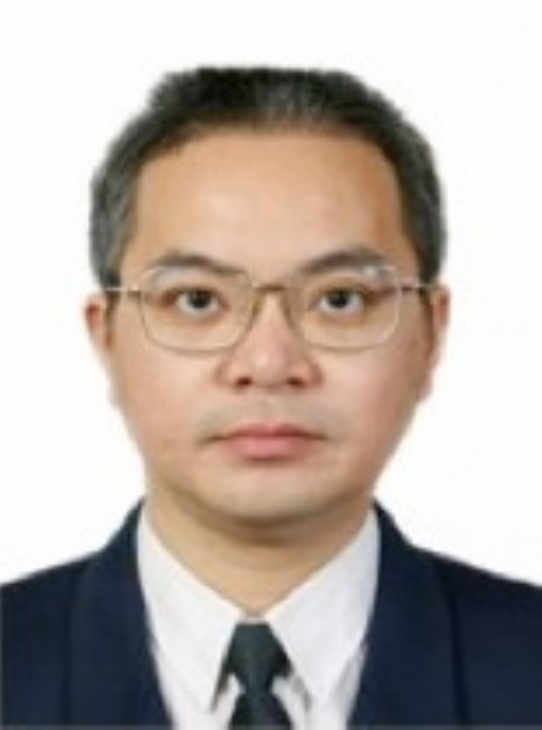}}]{Jianping Yin} received his M.S. and PhD degrees in Computer Science from the National University of Defense Technology, China, in 1986 and 1990, respectively. He is a professor of computer science at Dongguan University of Technology. His research interests involve artificial intelligence, pattern recognition, algorithm design, and information security.
\end{IEEEbiography}

% that's all folks
\newpage

\appendix
\textbf{Proof of Theorem \ref{theorem_formulation_dual}.}
\begin{proof}
Recall that \eqref{equa_basic_simultaneous_clustering_optimization_primal} is equivalent to 
\begin{align}
\nonumber
\min_{\X\in\RR^{n\times d}} f(\X; \A) + \lrnorm{\Z}_{1,p} \\ \nonumber
\end{align}
subject to
\begin{align}
\nonumber
\Z = \Q\X.
\end{align} Its Lagrangian multiplier is
\begin{align}
\nonumber
L(\X,\Z,\blambda) = f(\X; \A) + \lrnorm{\Z}_{1,p} + \vec\Tr(\blambda) \vec(\Q\X-\Z).
\end{align} Thus, the dual-optimization objective function is $D(\blambda)$, which is  
\begin{align}
\nonumber
&D(\blambda) \\ \nonumber 
=& \inf\limits_{\X, \Z} f(\X; \A) + \lrnorm{\Z}_{1,p} + \vec\Tr(\blambda) \vec(\Q\X-\Z) \\ \nonumber
=& \inf\limits_{\X} f(\X; \A) + \vec\Tr(\blambda) \vec(\Q\X) \\ \nonumber 
&+ \inf\limits_{\Z} \lrnorm{\Z}_{1,p} - \vec\Tr(\blambda) \vec(\Z) \\ \nonumber
=& \inf\limits_{\X} f(\X; \A) + \vec\Tr(\blambda) (\I_d\otimes\Q)\vec(\X) \\ \nonumber 
&+ \inf\limits_{\Z} \lrnorm{\Z}_{1,p} - \vec\Tr(\blambda) \vec(\Z) \\ \nonumber
=& -\lrincir{\sup\limits_{\X} -f(\X; \A) - \vec\Tr(\blambda) (\I_d\otimes\Q)\vec(\X)} \\ \nonumber 
&- \lrincir{\sup\limits_{\Z} -\lrnorm{\Z}_{1,p} + \vec\Tr(\blambda) \vec(\Z)} \\ \nonumber
=& -f^{\ast}(-\vec\Tr(\blambda)(\I_d\otimes\Q); \A) - g^{\ast}(\blambda)
\end{align} where $f^{\ast}(\cdot)$ is the conjugate function of $f(\cdot)$ \cite{Boyd:2004}, and $g^{\ast}(\cdot)$ is the conjugate function of $g(\cdot)$, defined by  
\begin{align}
\nonumber
g(\cdot) := \lrnorm{\cdot}_{1,p}.
\end{align} Since $g(\blambda) = \sum_{1\le k\le m} \lrnorm{\blambda_k}$, its conjugate function is equivalent to 
\begin{align}
\nonumber
g^{\ast}(\blambda) = \left\{\begin{matrix}
0 if \lrnorm{\blambda_i}_{\ast} \le 1\\ 
\infty otherwise
\end{matrix}\right.
\end{align} where $\lrnorm{\cdot}_{\ast}$ is the dual norm of $\lrnorm{\cdot}$ \cite{Parikh2014Proximal} . If $\frac{1}{p}+\frac{1}{q} = 1$, then the $l_q$ norm, i.e., $\lrnorm{\cdot}_q$, is the dual norm of the $l_p$ norm, i.e., $\lrnorm{\cdot}_p$ \cite{Bertsekas20046}. As illustrated in Table \ref{table_constraint_norms_types}, $p$ and $q$ satisfy $\frac{1}{p}+\frac{1}{q}=1$. Thus, the dual problem of \eqref{equa_basic_simultaneous_clustering_optimization_primal} is 
\begin{align}
\nonumber
\max_{\blambda\in\RR^{m\times d}} & -f^{\ast}(-\vec\Tr(\blambda)(\I_d\otimes\Q); \A) - g^{\ast}(\blambda).
\end{align} Replacing $g^{\ast}(\blambda)$ with $0$ and adding the required  constraint on $\blambda_i$ with $1\le i \le m$, we obtain the dual formulation of \eqref{equa_basic_simultaneous_clustering_optimization_primal}. According to Slater's condition \cite{Boyd:2004}, this is a case of strong duality, and thus, we complete the proof.
\end{proof}

\textbf{Proof of Theorem \ref{theorem_admm_update_u}. }
\begin{proof}
We introduce an auxiliary notation $\u^+ \in \RR^{nd}$ with the $i$-th element ($1\le i \le nd$) defined by
\begin{align}
\nonumber
\u^+_i = \max \lrincir{0, 1-\frac{1}{\lrnorm{\u_i}_q}} = \max \lrincir{0, 1-\frac{1}{\abs{\u_i}}}
\end{align} where $\frac{1}{p}+\frac{1}{q}=1$. Additionally, the proximal operator of function $\phi(\cdot)$ is defined by 
\begin{align}
\nonumber
\mathbf{Prox}_{\phi,\nu}(\x) := \argmin_{\v\in\RR^{d}}\phi(\v) + \frac{\nu}{2}\lrnorm{\v-\x}_2^2
\end{align} where $\x\in\RR^d$, and $\nu$ is a constant positive scalar. The definition is presented in \cite{Parikh2014Proximal}, which we recommend for more detail.

Recall that $\u\in\RR^{nd}$ is a vector; thus, we have
\begin{align}
\nonumber
g(\u) = \lrnorm{\u}_{1,p} = \sum_{i=1}^{nd} \lrnorm{\u_i}_p.
\end{align} The following remark presents the closed form of the proximal operator of $\g(\u)$ provided in \cite{Parikh2014Proximal}. 
\begin{remark}
\label{remark_sum_norms_proximal_operator}
The proximal operator of $g(\u)$ has a closed form
\begin{align}
\label{equa_closed_form_sum_norms_proximal_operator}
\mathbf{Prox}_{g,\rho} = \u^+ \odot \u
\end{align} where $\odot$ represents the Hadamard product, i.e., elementwise multiplication.
\end{remark} Therefore, we obtain
\begin{align}
\nonumber
\u^{(t+1)} = & \argmin_{\u\in\RR^{nd}} L(\blambda^{(t+1)}, \u, \bmu^{(t)}) \\ \nonumber
= & \argmin_{\u\in\RR^{nd}} g(\u) - \u\Tr \bmu^{(t)} \\ \nonumber 
& + \frac{\rho}{2} \lrnorm{\u - (\I_d\otimes \Q)\Tr \vec(\blambda^{(t+1)})}_2^2 \\ \nonumber
= & \argmin_{\u\in\RR^{nd}} g(\u) - \u\Tr \bmu^{(t)} \\ \nonumber 
& + \frac{\rho}{2} \lrincir{\u\Tr\u - 2\u\Tr (\I_d\otimes \Q)\Tr \vec(\blambda^{(t+1)})} \\ \nonumber
= & \argmin_{\u\in\RR^{nd}} g(\u) \\ \nonumber 
& + \frac{\rho}{2} \lrincir{\u\Tr\u - \u\Tr \lrincir{\frac{2}{\rho}\bmu^{(t)} + 2(\I_d\otimes \Q)\Tr \vec(\blambda^{(t+1)})}} \\ \nonumber
= & \argmin_{\u\in\RR^{nd}} g(\u) \\ \nonumber  
& + \frac{\rho}{2} \lrnorm{\u - \lrincir{\frac{1}{\rho}\bmu^{(t)} + (\I_d\otimes \Q)\Tr \vec(\blambda^{(t+1)})}}_2^2 \\ \nonumber
=& \mathbf{Prox}_{g, \rho} \left \{ \frac{1}{\rho}\bmu^{(t)} + (\I_d\otimes \Q)\Tr \vec(\blambda^{(t+1)}) \right \}.
\end{align} This completes the proof.

\end{proof}

\begin{lemma}
\label{lemma_convex_clustering}
For convex clustering, we have 
\begin{align}
\label{equa_lemma_convex_clustering}
\lrnorm{(\I_d\otimes \Q)\Tr \vec(\tilde{\blambda}_\ast)}_s \le \frac{1}{\beta}\vec\Tr(\A+\bDelta)\vec(\A+\bDelta).
\end{align}
\end{lemma}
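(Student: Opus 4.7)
The plan is to exploit the optimality of $\tilde{\blambda}_\ast$ for the regularized dual objective based on $\A+\bDelta$, compared against the trivial feasible point $\blambda=\mathbf{0}$, and then to absorb the quadratic term that appears in $f^{\ast}$ by completing the square.

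First I would observe that $\mathbf{0}\in\Ccal$ since $\lrnorm{\mathbf{0}}_q\le 1$ trivially, and that plugging $\blambda=\mathbf{0}$ into the regularized dual objective gives the value $0$ (both $f^{\ast}$ and the regularizer vanish). By the defining optimality of $\tilde{\blambda}_\ast$, the regularized objective at $\tilde{\blambda}_\ast$ is at most $0$. Specializing to convex clustering, where $f^{\ast}(-\vec\Tr(\blambda)(\I_d\otimes \Q); \A+\bDelta) = -\vec\Tr(\A+\bDelta)(\I_d\otimes\Q)\Tr\vec(\blambda) + \tfrac{1}{4}\vec\Tr(\blambda)(\I_d\otimes\Q)(\I_d\otimes\Q)\Tr\vec(\blambda)$, this inequality becomes
\begin{align}
\nonumber
& \beta \lrnorm{(\I_d\otimes\Q)\Tr\vec(\tilde{\blambda}_\ast)}_s \\ \nonumber
\le & \vec\Tr(\A+\bDelta)(\I_d\otimes\Q)\Tr\vec(\tilde{\blambda}_\ast) \\ \nonumber
& - \tfrac{1}{4}\vec\Tr(\tilde{\blambda}_\ast)(\I_d\otimes\Q)(\I_d\otimes\Q)\Tr\vec(\tilde{\blambda}_\ast).
\end{align}

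Next I would introduce the shorthand $\u := (\I_d\otimes\Q)\Tr\vec(\tilde{\blambda}_\ast)$ and $\a := \vec(\A+\bDelta)$, so that the right-hand side reads $\a\Tr\u - \tfrac{1}{4}\u\Tr\u$. The key algebraic step is the identity $(\a - \tfrac{1}{2}\u)\Tr(\a - \tfrac{1}{2}\u) = \a\Tr\a - \a\Tr\u + \tfrac{1}{4}\u\Tr\u \ge 0$, which, after rearrangement, yields $\a\Tr\u - \tfrac{1}{4}\u\Tr\u \le \a\Tr\a$. Combining this with the previous inequality and dividing through by $\beta$ delivers the claim.

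I do not expect a serious obstacle here; the only subtlety is verifying that $\blambda=\mathbf{0}$ is feasible (which follows directly from the constraint $\lrnorm{\blambda_i}_q\le 1$) and being careful with the transpose convention for $(\I_d\otimes\Q)$ so that the quadratic form produced by $f^{\ast}$ matches the one used in the completed square. Once these bookkeeping points are in place, the rest is one line of algebra.
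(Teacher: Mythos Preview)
Your proposal is correct and follows essentially the same route as the paper: compare the regularized dual objective at $\tilde{\blambda}_\ast$ against the feasible point $\blambda=\mathbf{0}$, then bound the resulting expression $\a\Tr\u-\tfrac14\u\Tr\u$ by $\a\Tr\a$. The only cosmetic difference is that the paper phrases the last step as taking a supremum over all $\a\in\RR^{md}$ of $\vec\Tr(\A+\bDelta)(\I_d\otimes\Q)\Tr\a-\tfrac14\a\Tr(\I_d\otimes\Q)(\I_d\otimes\Q)\Tr\a$, whereas you carry out the square completion directly; your version is slightly cleaner since it avoids any implicit claim about the range of $(\I_d\otimes\Q)\Tr$.
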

\begin{proof}

According to the definition of $\tilde{\blambda}_\ast$, we have 
\begin{align}
\nonumber
&0 \\ \nonumber
\ge &  f^{\ast}(-\vec\Tr(\tilde{\blambda}_\ast)(\I_d\otimes \Q);\A+\bDelta) + \beta \lrnorm{(\I_d\otimes \Q)\Tr\vec(\tilde{\blambda}_\ast)}_s,
\end{align} which is equivalent to 
\begin{align}
\nonumber
&\beta \lrnorm{(\I_d\otimes \Q)\Tr \vec(\tilde{\blambda}_\ast)}_s \\ \nonumber
 \le & \vec\Tr(\A+\bDelta)\lrincir{(\I_d\otimes \Q)\Tr \vec(\tilde{\blambda}_\ast)} \\ \nonumber 
 &- \frac{1}{4}\lrincir{\vec\Tr(\tilde{\blambda}_\ast)(\I_d\otimes \Q)}\lrincir{(\I_d\otimes \Q)\Tr \vec(\tilde{\blambda}_\ast)} \\ \nonumber
\le & \sup_{\a\in \RR^{md\times 1}}\vec\Tr(\A+\bDelta)(\I_d\otimes \Q)\Tr \a \\ \nonumber 
 &- \frac{1}{4}\a\Tr(\I_d\otimes \Q)(\I_d\otimes \Q)\Tr \a \\ \nonumber
= & \vec\Tr(\A+\bDelta)\vec(\A+\bDelta).
\end{align} This completes the proof.
\end{proof}

\textbf{Proof of Theorem \ref{theorem_x_bound_convex_clustering}.}
\begin{proof}
According to \eqref{equa_lambda_hat_difference}, we have 
\begin{align}
\nonumber
&\nabla f(\tilde{\X}_\ast; \A+\bDelta) - \nabla f(\X_\ast; \A) \\ \nonumber 
=& 2\lrincir{\vec(\tilde{\X}_\ast) - \vec(\X_\ast)} - 2\vec(\bDelta)  \\ \nonumber
=& (\I_d\otimes \Q)\Tr\lrincir{\vec(\tilde{\blambda}_\ast) - \vec(\blambda_\ast)}.
\end{align} Thus, we have
\begin{align}
\nonumber
&\vec(\tilde{\X}_\ast) - \vec(\X_\ast)  \\ \label{equa_proof_convex_clustering}
=& \frac{1}{2}(\I_d\otimes \Q)\Tr\lrincir{\vec(\tilde{\blambda}_\ast) - \vec(\blambda_\ast)} + \vec(\bDelta).
\end{align} 

 According to \eqref{equa_difference_lambda_hat_control_c}, we have 
\begin{align}
\nonumber
c\ge & \vec\Tr(\A)(\I_d\otimes\Q)\Tr\vec(\blambda_\ast) \\ \nonumber 
& - \vec\Tr(\A+\bDelta)(\I_d\otimes\Q)\Tr\vec(\tilde{\blambda}_\ast) \\ \nonumber
= & -\vec\Tr(\bDelta) (\I_d\otimes\Q)\Tr\vec(\tilde{\blambda}_\ast) \\ \nonumber
&+\vec\Tr(\A)(\I_d\otimes\Q)\Tr\lrincir{\vec(\blambda_\ast) - \vec(\tilde{\blambda}_\ast)}.
\end{align} Thus,
\begin{align}
\nonumber
&\vec\Tr(\A)(\I_d\otimes\Q)\Tr\lrincir{\vec(\blambda_\ast) - \vec(\tilde{\blambda}_\ast)} \\ \nonumber 
\le & c+  \vec\Tr(\bDelta)(\I_d\otimes\Q)\Tr\vec(\tilde{\blambda}_\ast)  \\ \nonumber
\le & c+  \lrnorm{\vec\Tr(\bDelta)}\lrnorm{(\I_d\otimes\Q)\Tr\vec(\tilde{\blambda}_\ast)}  \\ \label{equa_theorem_x_bound_temp}
\refabove{\le}{\ref{equa_lemma_convex_clustering}} & c+  \frac{1}{\beta}\lrnorm{\vec\Tr(\bDelta)}\vec\Tr(\A+\bDelta)\vec(\A+\bDelta).
\end{align} 

The last inequality holds due to Lemma \ref{lemma_convex_clustering}.

Combining \eqref{equa_proof_convex_clustering} and \eqref{equa_theorem_x_bound_temp}, we finally complete the proof.
\end{proof}

\begin{lemma}
\label{lemma_ridge_regression}
For ridge regression, we have 
\begin{align}
\label{equa_lemma_ridge_regression}
\lrnorm{(\I_d\otimes \Q)\Tr \vec(\tilde{\blambda}_\ast)}_s \le \frac{1}{4\beta}\y\Tr\tilde{\bOmega}^{-1}\bLambda\y.
\end{align}
\end{lemma}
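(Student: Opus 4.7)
The plan is to adapt the argument used for Lemma~\ref{lemma_convex_clustering} (convex clustering) to the ridge regression case, where the conjugate $f^{\ast}$ contains both a quadratic and a linear term in $\blambda$ rather than just a quadratic one. First I would exploit the fact that $\tilde{\blambda}_\ast$ is the minimizer of the regularized dual on the constraint set $\Ccal$, and that $\blambda = \mathbf{0} \in \Ccal$. Evaluating the objective at $\blambda=\mathbf{0}$ gives the value $0$, so by optimality,
\begin{align}
\nonumber
f^{\ast}\bigl(-\vec\Tr(\tilde{\blambda}_\ast)(\I_d\otimes \Q);\ \A+\bDelta\bigr) + \beta \lrnorm{(\I_d\otimes \Q)\Tr\vec(\tilde{\blambda}_\ast)}_s \le 0.
\end{align}

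Next, I would substitute the explicit form of $f^{\ast}$ for ridge regression at the evolving data $\A+\bDelta$ (using $\tilde{\bOmega}$ and $\tilde{\bLambda}$ in place of $\bOmega$ and $\bLambda$), as was derived in Section~\ref{subsect_formulation_example}. Introducing the shorthand $\a := (\I_d\otimes \Q)\Tr \vec(\tilde{\blambda}_\ast) \in \RR^{nd}$ and isolating the regularizer term yields an inequality of the form
\begin{align}
\nonumber
\beta \lrnorm{\a}_s \le -\tfrac{1}{4}\,\a\Tr \tilde{\bOmega}^{-1}\a \;-\; \tfrac{1}{2}\,\a\Tr \tilde{\bOmega}^{-1}\tilde{\bLambda}\y.
\end{align}

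Finally, I would upper bound the right-hand side by taking the supremum over all vectors in $\RR^{nd}$, i.e.\ relaxing away the constraint that $\a$ lies in the image of $(\I_d\otimes \Q)\Tr$ restricted to $\Ccal$. Because $\tilde{\bOmega}\succ 0$ (it is diagonal plus $\gamma \I$), the objective is strictly concave in $\a$, and the unconstrained maximizer is found by setting the gradient to zero: $\a_\ast = -\tilde{\bLambda}\y$. Substituting back produces the closed-form bound
\begin{align}
\nonumber
\sup_{\a\in\RR^{nd}}\Bigl(-\tfrac{1}{4}\,\a\Tr\tilde{\bOmega}^{-1}\a - \tfrac{1}{2}\,\a\Tr\tilde{\bOmega}^{-1}\tilde{\bLambda}\y\Bigr) = \tfrac{1}{4}\,\y\Tr\tilde{\bLambda}\Tr\tilde{\bOmega}^{-1}\tilde{\bLambda}\y,
\end{align}
which, after dividing through by $\beta$, matches the stated bound (up to reconciling the position of the transposes and tildes in the factor $\y\Tr\tilde{\bOmega}^{-1}\bLambda\y$ written in the lemma).

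The main obstacle is the linear term $\tfrac{1}{2}\a\Tr\tilde{\bOmega}^{-1}\tilde{\bLambda}\y$, which does not appear in the convex clustering analogue. In Lemma~\ref{lemma_convex_clustering}, the supremum of the purely quadratic expression could be read off directly by completing a square against $\vec(\A+\bDelta)$. Here one must perform an honest concave quadratic maximization via the first-order condition, and then track the resulting cross term carefully so that the final bound comes out as a single quadratic form in $\y$ driven by the matrix $\tilde{\bLambda}\Tr\tilde{\bOmega}^{-1}\tilde{\bLambda}$. Verifying that this quantity is well-defined (i.e.\ that $\tilde{\bOmega}^{-1}$ exists, which follows from $\gamma>0$) is routine, so the entire argument reduces to a gradient-and-substitute computation parallel in structure to the proof of Lemma~\ref{lemma_convex_clustering}.
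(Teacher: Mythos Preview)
Your proposal is correct and follows essentially the same approach as the paper: compare the regularized dual objective at $\tilde{\blambda}_\ast$ against its value at $\blambda=\mathbf{0}$, substitute the explicit ridge-regression conjugate, and bound the resulting concave quadratic by its unconstrained supremum. The only cosmetic difference is that the paper takes the supremum over $\b\in\RR^{md}$ (the raw dual variable) rather than over $\a=(\I_d\otimes\Q)\Tr\b\in\RR^{nd}$ as you do, but since the objective depends on $\b$ only through $\a$, the two parametrizations yield the same bound; your explicit first-order computation of the maximizer is in fact slightly more transparent than the paper's one-line statement of the result.
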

\begin{proof}

According to the definition of $\tilde{\blambda}_\ast$, we have 
\begin{align}
\nonumber
 f^{\ast}(-\vec\Tr(\tilde{\blambda}_\ast)(\I_{d}\otimes \Q);\A+\bDelta) + \beta \lrnorm{(\I_{d}\otimes \Q)\Tr \vec(\tilde{\blambda}_\ast)}_s \le 0.
\end{align} It is equivalent to 
\begin{align}
\nonumber
&\beta \lrnorm{(\I_{d}\otimes \Q)\Tr \vec(\tilde{\blambda}_\ast)}_s \\ \nonumber 
\le & -\frac{1}{4} \vec\Tr(\tilde{\blambda}_\ast)(\I_{d}\otimes \Q) \tilde{\bOmega}^{-1}(\I_{d}\otimes \Q)\Tr \vec(\tilde{\blambda}_\ast) \\ \nonumber
& + \frac{1}{2} \vec\Tr(\tilde{\blambda}_\ast)(\I_d\otimes\Q)\tilde{\bOmega}^{-1}\tilde{\bLambda}\y \\ \nonumber
\le & \sup_{\b\in \RR^{md\times 1}}-\frac{1}{4} \b\Tr(\I_{d}\otimes \Q) \tilde{\bOmega}^{-1}(\I_{d}\otimes \Q)\Tr \b \\ \nonumber 
& + \frac{1}{2} \b\Tr(\I_d\otimes\Q)\tilde{\bOmega}^{-1}\tilde{\bLambda}\y \\ \nonumber
= & \frac{1}{4}\y\Tr\tilde{\bOmega}^{-1}\tilde{\bLambda}\y.
\end{align} This completes the proof.
\end{proof}

\textbf{Proof of Theorem \ref{theorem_x_bound_ridge_regression}.}
\begin{proof}
Recalling \eqref{equa_minimizer_primal}, we have $\nabla f(\X;\A) = 2\bOmega\vec(\X)$ and $\nabla f(\X;\A) = (\I_{d}\otimes \Q)\Tr \vec(\blambda_\ast)$ for ridge regression.

Substituting them into \eqref{equa_x_bound_ridge_regression}, we obtain
\begin{align}
\nonumber
4c\ge & \vec\Tr(\tilde{\X}_\ast)\tilde{\bOmega}\vec(\tilde{\X}_\ast)  - \vec\Tr(\X_\ast)\tilde{\bOmega}\vec\Tr(\X_\ast)
\end{align}
 
According to the definition of $\tilde{\bOmega}$, we have 
\begin{align}
\nonumber
&\tilde{\bOmega} \\ \nonumber 
= & \text{diag}(\vec(\A+\bDelta))\lrincir{\mathbf{1}_{ d \times d }\otimes \I_n}\text{diag}(\vec(\A+\bDelta)) \\ \nonumber 
&+ \gamma \I_{n d }  \\ \nonumber
=& \bOmega + 2\text{diag}(\vec(\bDelta))\lrincir{\mathbf{1}_{ d \times d }\otimes\I_n}\text{diag}(\vec(\A)) \\ \nonumber
=& \bOmega + \bPhi
\end{align} where $\bPhi = 2\text{diag}(\vec(\bDelta))\lrincir{\mathbf{1}_{ d \times d }\otimes\I_n}\text{diag}(\vec(\A))$. Thus, we have
\begin{align}
\nonumber
4c\ge & \vec\Tr(\tilde{\X}_\ast)\tilde{\bOmega}\vec(\tilde{\X}_\ast)  - \vec\Tr(\X_\ast)\tilde{\bOmega}\vec\Tr(\X_\ast) \\ \nonumber
= & \vec\Tr(\tilde{\X}_\ast)\bOmega\vec(\tilde{\X}_\ast)  - \vec\Tr(\X_\ast)\bOmega\vec\Tr(\X_\ast) \\ \nonumber
& + \vec\Tr(\tilde{\X}_\ast)\bPhi \vec(\tilde{\X}_\ast) - \vec\Tr(\X_\ast)\bPhi \vec(\X_\ast) \\ \nonumber
= & \vec\Tr(\tilde{\X}_\ast)\bOmega\vec(\tilde{\X}_\ast)  - \vec\Tr(\X_\ast)\bOmega\vec\Tr(\X_\ast) \\ \nonumber
& +  \vec\Tr(\tilde{\blambda}_\ast)(\I_{d}\otimes \Q) \tilde{\bOmega}^{-1} \bPhi \tilde{\bOmega}^{-1} (\I_{d}\otimes \Q)\Tr \vec(\tilde{\blambda}_\ast) \\ \nonumber 
& - \vec\Tr(\blambda_\ast)(\I_{d}\otimes \Q) \bOmega^{-1} \bPhi \bOmega^{-1} (\I_{d}\otimes \Q)\Tr \vec(\blambda_\ast) \\ \nonumber
\ge & \vec\Tr(\tilde{\X}_\ast)\bOmega\vec(\tilde{\X}_\ast)  - \vec\Tr(\X_\ast)\bOmega\vec\Tr(\X_\ast) \\ \nonumber
& -  \lrnorm{\vec\Tr(\tilde{\blambda}_\ast)(\I_{d}\otimes \Q)}^2 \lrnorm{\tilde{\bOmega}^{-1} \bPhi \tilde{\bOmega}^{-1}}  \\ \nonumber 
& - \lrnorm{\vec\Tr(\blambda_\ast)(\I_{d}\otimes \Q)}^2 \lrnorm{\bOmega^{-1} \bPhi \bOmega^{-1}}  \\ \nonumber
\refabove{\ge}{\ref{equa_lemma_ridge_regression}} & \vec\Tr(\tilde{\X}_\ast)\bOmega\vec(\tilde{\X}_\ast)  - \vec\Tr(\X_\ast)\bOmega\vec\Tr(\X_\ast) \\ \nonumber
& - \frac{1}{16\beta^2}\lrnorm{\y\Tr\tilde{\bOmega}^{-1}\tilde{\bLambda}\y}^2 \lrnorm{\tilde{\bOmega}^{-1}\bPhi \tilde{\bOmega}^{-1}} \\ \nonumber 
& - \frac{1}{16\beta^2}\lrnorm{\y\Tr\bOmega^{-1}\bLambda\y}^2\lrnorm{\bOmega^{-1} \bPhi\bOmega^{-1}}.
\end{align} The last inequality holds due to Lemma \ref{lemma_ridge_regression}. Rearranging the terms, we finally complete the proof.
\end{proof}

\textbf{Discussion of the parallel update of $\blambda$ in the case of $p=1$ and $q=\infty$.}

If $q= \infty$, the constraint set $\Ccal$ is 
\begin{align}
\nonumber
\lrnorm{\blambda_i}_{\infty} \le 1, 1\le i\le m.
\end{align} Equivalently, it is reformulated as 
\begin{align}
\nonumber
-\mathbf{1}_{m\times d} \preceq \blambda \preceq \mathbf{1}_{m\times d}.
\end{align} In other words, every element of $\blambda$ is in the interval $[-1, 1]$. Thus, $\Ccal$ can be partitioned into $d$ separable blocks, where the variables in a column of $\blambda$ consist of a block.  Consider that $(\I_d\otimes \Q)$ is a diagonal block matrix that can be partitioned into $d$ separable blocks denoted by $\Bcal_1$, $\Bcal_2$, ..., $\Bcal_d$.  For any $\Bcal_i$ with $1\le i \le d$, $\Bcal_i = \{ i, n+i, 2n+i, ..., (d-1)n+i \}$.   The update of $\blambda$ is thus reformulated as $d$ independent optimization subproblems. The $i$-th subproblem is formulated as
\begin{align}
\nonumber
\blambda^{(t+1)}_{\mathcal{B}_i} = & \argmin_{-\mathbf{1}_m \preceq\blambda_{\mathcal{B}_i}  \preceq \mathbf{1}_m} h(\blambda_{\mathcal{B}_i}) +  \lrincir{\vec\Tr(\blambda) (\I_d\otimes \Q) \bmu^{(t)}}_{\mathcal{B}_i} \\ \label{equa_update_lambda_parallel}
& + \frac{\rho}{2}\lrnorm{\lrincir{(\I_d\otimes \Q)\Tr \vec(\blambda)-\u^{(t)}}_{\mathcal{B}_i}}_2^2.
\end{align} 
 This means that the update of $\blambda$ can be performed in parallel. This advantage is vitally important in a high-dimensional setting.

\end{document}